%% file: main.tex
\pdfoutput=1
\documentclass[ejsv2,preprint,noshowframe]{imsart}

\RequirePackage{amsmath,mathtools}
\makeatletter\@ifpackageloaded{newtxmath}{}{\RequirePackage{amssymb}}\makeatother
\RequirePackage{booktabs,graphicx,enumitem}
\RequirePackage{microtype}
\RequirePackage[numbers,sort&compress]{natbib}
\RequirePackage[colorlinks,citecolor=blue,urlcolor=blue]{hyperref}

\startlocaldefs

\theoremstyle{plain}
\newtheorem{theorem}{Theorem}
\newtheorem{lemma}[theorem]{Lemma}
\newtheorem{corollary}[theorem]{Corollary}
\newtheorem{proposition}[theorem]{Proposition}
\theoremstyle{definition}
\newtheorem{remark}[theorem]{Remark}
\newtheorem{example}[theorem]{Example}

\newcommand{\PP}{\mathbb{P}}
\newcommand{\EE}{\mathbb{E}}
\newcommand{\R}{\mathbb{R}}
\newcommand{\ind}[1]{\mathbf{1}\{#1\}}
\newcommand{\Fhat}{\widehat{F}_{Q,m}}
\newcommand{\what}{\widehat{w}}
\newcommand{\tauhat}{\widehat{\tau}}
\newcommand{\abar}{\bar{\alpha}}
\newcommand{\TV}{\mathrm{TV}}
\newcommand{\KL}{\mathrm{KL}}
\newcommand{\cX}{\mathcal X}
\newcommand{\cY}{\mathcal Y}
\newcommand{\cI}{\mathcal I}
\DeclareMathOperator{\Var}{Var}
\DeclareMathOperator{\sd}{sd}
\newcounter{procedure}
\endlocaldefs

\begin{document}
\begin{frontmatter}
\title{Sharp training-conditional coverage for conformal prediction under covariate shift}
\runtitle{Training-conditional coverage under covariate shift}

\begin{aug}
\author[A]{\fnms{Mehrdad}~\snm{Pournaderi}\ead[label=e1]{m.pournaderi@emofid.com}}
\address[A]{Mofid Securities\printead[presep={,\ }]{e1}}
\runauthor{M. Pournaderi}
\end{aug}

\begin{abstract}
Weighted split conformal prediction reweights calibration scores by the likelihood ratio between the test and training covariate distributions and guarantees marginal coverage under covariate shift. We study its coverage conditional on the calibration data. An elementary argument, based on a single concentration inequality at a fixed population quantile, gives explicit training-conditional bounds without unspecified constants, and shows that the relevant scale is not the supremum of the likelihood ratio but a variance proxy built from the chi-squared divergence of the shift and from the average of the ratio over the part of the test population, of probability equal to the miscoverage level, where it is largest. A two-point lower bound shows that the root-$m$ rate and the chi-squared contribution are intrinsic to the shift. Run at an explicitly inflated level, the weighted quantile becomes a deterministic PAC prediction set. We compare it with randomized rejection sampling and with importance-weighted learn-then-test and, through a certified choice of a clipping level for the likelihood ratio, map the regime in which each gives the narrower valid set. The analysis extends to estimated likelihood ratios and to tail functionals estimated from an unlabeled source sample, which yields a fully finite-sample certificate.
\end{abstract}

\begin{keyword}[class=MSC]
\kwdgroup[type=primary]{\kwd{62G15}}
\kwdgroup[type=secondary]{\kwd{62G20}
\kwd{60E15}
\kwd{68T05}}
\end{keyword}

\begin{keyword}
\kwd{Conformal prediction}
\kwd{covariate shift}
\kwd{training-conditional coverage}
\kwd{PAC prediction sets}
\kwd{likelihood ratio}
\kwd{concentration inequalities}
\end{keyword}

\end{frontmatter}

\section{Introduction}

A practitioner who uses conformal prediction calibrates once and deploys the result many times. A predictor $\widehat\mu$ is fitted on a training split, nonconformity scores $S_i=s(X_i,Y_i)$ are computed on a calibration split of size $m$, and every future prediction set is $\{y:s(x,y)\le\tauhat\}$ with $\tauhat$ an empirical quantile of the calibration scores \citep{vovk2005algorithmic,angelopoulos2023conformal}. When calibration and test points are exchangeable, $\PP(Y_{\rm test}\in\widehat C(X_{\rm test}))\ge1-\alpha$. This is a \emph{marginal} guarantee: it averages over calibration samples the practitioner might have drawn. What the practitioner experiences is the miscoverage of the one set they obtained,
\[
P_e(D):=\PP\big(Y_{\rm test}\notin\widehat C(X_{\rm test})\mid D\big),
\]
a random variable whose value depends on the calibration data $D$; in the exchangeable case it is centred at $\alpha$, so that, for continuous scores and large $m$, the deployed set under-covers for about half of the calibration samples. A \emph{training-conditional}, or PAC, guarantee bounds this random variable for most calibration samples: $P_e(D)\le\alpha+\gamma$ with probability $1-e^{-2m\gamma^2}$ over $D$ \citep{vovk2012conditional,bian2023training,duchi2025sample}, and since the bound is explicit, running the quantile at the level $1-\alpha+\gamma$ certifies $P_e(D)\le\alpha$ for a prescribed fraction $1-\delta$ of calibration samples.

Under \emph{covariate shift} \citep{shimodaira2000improving,sugiyama2007covariate}, the test covariate distribution $Q_X$ differs from the training distribution $P_X$ while $P_{Y|X}$ is unchanged. \citet{tibshirani2019conformal} showed that marginal coverage is restored by weighting the calibration scores with the likelihood ratio $v=dQ_X/dP_X$, and this weighted conformal prediction has become the standard tool for the problem \citep{lei2021conformal,prinster2022jaws,fannjiang2022conformal}. Its training-conditional behaviour is in general worse than in the exchangeable case, because the weighting can increase the variance of the empirical quantile. In previous work \citep{pournaderi2026training} we quantified this through a weighted Dvoretzky--Kiefer--Wolfowitz (DKW) inequality, obtaining, for $v\le B$,
\begin{equation}\label{eq:old}
\PP\Big(P_e(D)>\alpha+B\sqrt{2\log(4/\delta)/m}+2C\sqrt{B/m}\Big)\le\delta,
\end{equation}
where $C$ is the universal constant in a bracketing-entropy bound for empirical processes. The constant is unknown, so \eqref{eq:old} describes the rate but cannot be used to certify a deployed set.

\paragraph{A running example.} Consider a subgroup $A$ of the population that makes up $0.05\%$ of the training data but $1\%$ of the test population, and suppose the two populations agree otherwise: $Q_X=0.99\,P_X+0.01\,P_X(\cdot\mid A)$. The likelihood ratio is $v\approx21$ on $A$ and $v\approx0.99$ elsewhere, so $B=21$, and any bound scaled by $B$ treats the problem as if the calibration set were $21$ times smaller. Yet only $1\%$ of the test population is affected, which is less than the miscoverage level $\alpha=5\%$ one is trying to certify. The functionals introduced in Section~\ref{sec:upper} make this precise: the $\chi^2$-divergence of the shift is $0.2$, the average of $v$ over the $5\%$ of the test population where it is largest is $5.2$, and the variance proxy that governs the weighted quantile is four times smaller than the worst-case value $\alpha(1-\alpha)B$. We return to this example when clipping the ratio (Section~\ref{sec:clip}) and when its functionals have to be estimated (Section~\ref{sec:est}); it is the design of Sections~\ref{sec:exp_frontier}--\ref{sec:exp_est}.

\paragraph{Existing PAC constructions.} \citet{park2022pac} give two. The first is a conservative deterministic baseline: since
\[
Q(S>\tau)=\EE_P\big[v\ind{S>\tau}\big]\le B\,P(S>\tau),
\]
a Clopper--Pearson calibration of the \emph{source} scores at level $\alpha/B$ is PAC for the target, at the price of a set whose target miscoverage is typically far below $\alpha$. The second, their main proposal, accepts each calibration point with probability $v_i/B$, obtaining an exchangeable subsample of expected size $m/B$ from the test distribution, and applies a Clopper--Pearson bound to it. This is a randomized procedure: two analysts holding the same calibration data obtain different sets, with a variability that Section~\ref{sec:exp_frontier} measures; and it is tied to the supremum, since it keeps $m/B$ points---$5\%$ of the calibration set in the running example. Two deterministic constructions have appeared since. The importance-weighted learn-then-test procedure of \citet{almeida2025high} tests the weighted loss with a concentration-based p-value and uses every calibration point; the clipped weighted conformal prediction of \citet{wang2026weight} learns a clipped ratio and inflates the target by an estimated clipping bias. Both are discussed in the related work below and compared with our certificate in Sections~\ref{sec:clip} and \ref{sec:experiments}.

This paper shows that the standard weighted quantile itself, which is deterministic and uses every calibration point with its weight, admits an explicit training-conditional certificate, and that the certificate is governed by a tail average of the likelihood ratio rather than by its supremum.

\paragraph{Contributions.}
\begin{itemize}[leftmargin=1.5em,itemsep=2pt]
\item \textbf{Explicit training-conditional bounds and PAC calibration (Section~\ref{sec:upper}).} A threshold that under-covers must cross a fixed population quantile, and the event that it does is a one-sided deviation of a single bounded i.i.d.\ sum; one concentration inequality therefore suffices, with no uniform convergence and no unspecified constant. The resulting bounds (Theorem~\ref{thm:upper}) are the exchangeable ones with $m$ replaced by an effective calibration size that is $m/B$ in the worst case but $m/\mathrm{CVaR}_\alpha(v)$ in general, where $\mathrm{CVaR}_\alpha(v)$ is the average of the likelihood ratio over the $\alpha$-fraction of the test population where it is largest; the supremum $B$ enters only lower-order terms. Corollary~\ref{cor:pac} and Proposition~\ref{prop:bentkus} turn them into a deterministic $(\alpha,\delta)$-PAC procedure, summarized in Procedure~\ref{proc:wq}.
\item \textbf{Lower bounds and the role of score--weight dependence (Section~\ref{sec:lower}).} For any covariate shift and any threshold rule, the training-conditional coverage deviates from $1-\alpha$ by $\Omega(\sqrt{(1+\chi^2(Q\|P))\alpha(1-\alpha)/m})$ on some score distribution, so the $\chi^2$-divergence, not the supremum, is the intrinsic price of the shift. For the weighted quantile itself the full variance proxy $\sigma_\alpha^2$, including its coupling term, is the exact asymptotic variance under the adversarial alignment of the score tail with the high-ratio region.
\item \textbf{Clipping, estimated inputs and empirical comparisons (Sections~\ref{sec:estimated}--\ref{sec:experiments}).} Clipping the ratio trades a computable bias for a smaller range, for the weighted quantile and for rejection sampling alike, and the clip level can be optimized before any label is seen; this yields explicit crossover conditions between the two constructions. An oracle inequality handles estimated ratios, an exact guarantee handles interval-valued ratios, and simultaneous confidence bounds from an unlabeled source sample make the certificate fully finite-sample when the tail functionals of $v$ are unknown. Experiments on synthetic shifts and four UCI datasets locate each valid procedure on the resulting frontier.
\end{itemize}

\paragraph{Related work.}
Training-conditional coverage in the exchangeable case goes back to \citet{vovk2012conditional}; \citet{bian2023training} treat split conformal, CV+ and jackknife+, \citet{liang2023algorithmic} use algorithmic stability, and \citet{duchi2025sample} gives Bernstein-type and conditional versions and points to the minimax lower bounds of \citet{areces2024two}. \citet{park2022pac} construct PAC prediction sets under covariate shift by rejection sampling the calibration set down to an exchangeable subsample of expected size $m/B$ and applying a Clopper--Pearson bound; Theorem~\ref{thm:lower} shows that this sample-size loss is unavoidable in the worst case $v\in\{0,B\}$, where $K^2=B$, while for a general shift the intrinsic scale is $K^2=1+\chi^2(Q\|P)$, which can be much smaller than $B$, and Theorem~\ref{thm:upper} shows that the deterministic weighted quantile attains the $m^{-1/2}$ rate with explicit constants (the matching is in $m$ and in the $K^2$ scale, not uniformly in $\alpha$; Section~\ref{sec:lower}). \citet{almeida2025high} weight the calibration losses by the likelihood ratio inside the learn-then-test framework of \citet{angelopoulos2025learn}: since $\EE_P[v(X)L(X,Y;\lambda)]$ is the target risk, any valid p-value for the mean of the weighted loss, which lies in $[0,B]$, can be used in a fixed-sequence test over thresholds; they use the betting p-value of \citet{waudby2024estimating}, with Hoeffding--Bentkus and empirical Bernstein p-values as alternatives. The procedure is deterministic (up to the data ordering of the betting p-value) and its empirical-variance p-values adapt to the variance of the weighted loss,
\[
\Var_P\big(v\ind{S>\lambda}\big)=\EE_Q\big[v\ind{S>\lambda}\big]-Q(S>\lambda)^2,
\]
which shares its coupling moment with \eqref{eq:secondmoment_main} but evaluated at the actual coupling rather than at the worst case; the range $B$ of the weighted loss enters both concentration-based p-values, whereas the summands of Theorem~\ref{thm:upper} have one-sided range $\alpha B$. Section~\ref{sec:experiments} compares the two concentration-based variants with our certificate; the betting variant is not evaluated. \citet{wang2026weight} study weighted conformal prediction with clipped likelihood ratios that may be unbounded and must be learned: they fit the clipped ratio $\min(w,B)$ directly, estimate the clipping bias $\EE_P(w-B)^+=1-\EE_P\min(w,B)$ from unlabeled source data, and prove two-sided dataset-conditional coverage for the weighted quantile at an inflated target through a weighted DKW inequality, with calibration sizes of order $B\log(1/(\epsilon\delta))/\epsilon^2+B^2\log(1/\delta)/\epsilon^2$ for an additional inflation $\epsilon$. Their clipping bias is the quantity $1-\mu_{B'}$ of Lemma~\ref{lem:clipbias}, which dominates our $\Delta_{B'}$, and their bias estimate plays the role of $r_+$ in Proposition~\ref{prop:est}; for a known clipped ratio, Proposition~\ref{prop:clip}(i) replaces the DKW-type sample complexity by a Bernstein-type inflation that is feasible at the calibration sizes of Section~\ref{sec:experiments}, and Section~\ref{sec:clip} optimizes the clip level. The estimation of the ratio itself is complementary to our analysis, which takes $v$ as given or as a fixed estimate. For estimated weights, \citet{lei2021conformal} bound the marginal coverage loss by $\frac12\EE_P|\widehat w-w|$, and \citet{barber2023conformal} bound the coverage gap of any fixed-weight quantile by a weighted total-variation distance; \citet{qiu2023prediction} and \citet{yang2024doubly} obtain asymptotic validity under unknown shift through doubly robust constructions. Theorem~\ref{thm:estimated} is the training-conditional counterpart of the first two results.

\section{Setting}\label{sec:setting}

Throughout we condition on the training split, so the score function $s:\cX\times\cY\to\R$ is fixed. The calibration sample $Z_i=(X_i,Y_i)$, $i\in\cI$, $|\cI|=m$, is i.i.d.\ from $P=P_X\times P_{Y|X}$ and the test point $Z_{\rm test}\sim Q=Q_X\times P_{Y|X}$. We assume $Q\ll P$ and write
\[
v(z):=\frac{dQ}{dP}(z)=\frac{dQ_X}{dP_X}(x),\qquad v_i:=v(Z_i),\qquad S_i:=s(Z_i);
\]
until Section~\ref{sec:estimated} we assume that $v$ is known and $v\le B$ for a known $B\ge1$. Let
\[
F_Q(t):=\PP_{Z\sim Q}(s(Z)\le t),\qquad q_Q(\beta):=\inf\{t:\ F_Q(t)\ge\beta\},\qquad P_e(\tau):=1-F_Q(\tau),
\]
so that $F_Q(q_Q(\beta))\ge\beta$ and $\PP_Q(S<q_Q(\beta))\le\beta$ by right-continuity. The self-normalized weighted empirical CDF of the calibration scores and its $\beta$-quantile are
\begin{equation}\label{eq:tauhat}
\begin{gathered}
\Fhat(t):=\sum_{i\in\cI}\what_i\ind{S_i\le t},\qquad \what_i:=\frac{v_i}{\sum_{j\in\cI}v_j},\\
\tauhat(\beta):=\inf\{t:\ \Fhat(t)\ge\beta\}\in\R\cup\{+\infty\},
\end{gathered}
\end{equation}
with $\Fhat\equiv0$ if $\sum_jv_j=0$. Since $\Fhat$ is a right-continuous step function, $\Fhat(\tauhat(\beta))\ge\beta$ whenever $\tauhat(\beta)<\infty$. The prediction set
\[
\widehat C_\beta(x):=\{y:\ s(x,y)\le\tauhat(\beta)\}
\]
depends on the calibration data only, so its training-conditional miscoverage is exactly $P_e(\tauhat(\beta))$.

The weighted split conformal set of \citet{tibshirani2019conformal} at level $1-\alpha$ uses the threshold $F_{Q,m}^{-1}(1-\alpha)$, where
\[
F_{Q,m}(t)=w_{\rm test}\ind{t=\infty}+\sum_iw_i\ind{S_i\le t},\qquad w_i=\frac{v_i}{v(Z_{\rm test})+\sum_jv_j}\le\what_i.
\]
Hence $F_{Q,m}\le\Fhat$ on $\R$ for every value of $v(Z_{\rm test})$, the threshold is at least $\tauhat(1-\alpha)$, and
\begin{equation}\label{eq:reduction}
P_e^{\rm split}(D):=\PP\big(Y_{\rm test}\notin\widehat C^{\rm split}_\alpha(X_{\rm test})\mid D\big)\le P_e\big(\tauhat(1-\alpha)\big).
\end{equation}
Everything below is a statement about the random variable $P_e(\tauhat(\beta))$; every upper bound on its miscoverage transfers to the original weighted split conformal set through \eqref{eq:reduction} (bounds on over-coverage do not); the set $\widehat C_\beta$ is also the natural object in practice, since its threshold is common to all test points (the set $\widehat C_\beta(x)$ still depends on $x$ through the score) and does not require the test weight.

\section{Training-conditional bounds without uniform convergence}\label{sec:upper}

\paragraph{Why one inequality suffices.} Fix a level $\beta=1-\alpha$ and suppose the weighted quantile under-covers by more than $\gamma$: $P_e(\tauhat(\beta))>\alpha+\gamma$. Then $\tauhat(\beta)$ lies strictly below the \emph{fixed} population quantile $q:=q_Q(\beta-\gamma)$, and since $\Fhat$ reaches $\beta$ at $\tauhat(\beta)$, the weighted empirical mass strictly below $q$ is at least $\beta$:
\[
\sum_{i\in\cI}\what_i\ind{S_i<q}\ \ge\ \beta .
\]
Clearing the denominator, this is
\[
\frac1m\sum_{i\in\cI}U_i\ \ge\ 0,\qquad U_i:=v_i\big(\ind{S_i<q}-\beta\big),
\]
where the $U_i$ are i.i.d., bounded by $B$ in absolute value, with mean $\PP_Q(S<q)-\beta\le-\gamma$ (because $\EE_P[v\,h(Z)]=\EE_Q[h(Z)]$). Under-coverage by $\gamma$ thus forces a bounded i.i.d.\ sum to exceed its mean by at least $\gamma$, and a single concentration inequality bounds the probability of that event. No uniformity over thresholds is needed, and the self-normalizing denominator $\sum_jv_j$ never has to be controlled separately; this is the observation behind Vovk's training-conditional guarantee in the form given by \citet[Sec.~2.1]{duchi2025sample}, applied with the weights kept inside the summands. The rate is decided by the second moment of $U_i$,
\begin{equation}\label{eq:secondmoment_main}
\EE U_i^2=\alpha^2\,\EE_Q\big[v\ind{S<q}\big]+(1-\alpha)^2\,\EE_Q\big[v\ind{S\ge q}\big]=\alpha^2K^2+(1-2\alpha)\,\EE_Q\big[v\ind{S\ge q}\big],
\end{equation}
where $K^2:=\EE_Q[v]$. The second term is the likelihood ratio integrated over the (roughly) $\alpha$-fraction of the test population that falls outside the prediction set. Bounding it by $B\cdot\PP_Q(S\ge q)$ gives a bound scaled by $B$; bounding it instead by its largest possible value over all subsets of test probability about $\alpha$ gives a bound scaled by a tail average of $v$. This is where the supremum and the tail average part ways.

\paragraph{The functionals.} Let $F_v$ be the CDF of $v(Z)$ for $Z\sim Q$ and $F_v^{-1}$ its quantile function. For $a\in(0,1]$ define
\begin{equation}\label{eq:tailfunctionals}
K^2:=\EE_Q[v]=\EE_P[v^2]=1+\chi^2(Q\|P),\qquad \rho_a:=F_v^{-1}(1-a),
\end{equation}
\begin{equation}\label{eq:Lambda}
\Lambda(a):=\int_{1-a}^{1}F_v^{-1}(u)\,du=\sup\big\{\EE_Q[vh]:\ 0\le h\le1,\ \EE_Qh\le a\big\},
\end{equation}
where the last identity is the Hardy--Littlewood inequality (the supremum is over $[0,1]$-valued $h$ rather than over events so that it remains an equality when $v$ has atoms under $Q$). In words: $K^2$ is the second moment of the ratio under $P$, one plus the $\chi^2$-divergence of the shift; $\Lambda(a)/a=:\mathrm{CVaR}_a(v)$ is the average of the likelihood ratio over the $a$-fraction of the test population where it is largest, and $\rho_a$ the corresponding value at risk, the ratio at the boundary of that fraction. Clearly
\[
aK^2\ \le\ \Lambda(a)\ \le\ \min(aB,K^2),\qquad \rho_a\le B,\qquad K^2\le B,
\]
and $\Lambda$ is concave. The variance proxy that will govern the weighted quantile is
\[
\sigma_\alpha^2:=\alpha^2K^2+(1-2\alpha)\Lambda(\alpha),\qquad \alpha(1-\alpha)K^2\ \le\ \sigma_\alpha^2\ \le\ \alpha(1-\alpha)B\quad(\alpha\le1/2),
\]
with equality on the right iff $v\in\{0,B\}$ $Q$-a.s. In the running example of the introduction ($v=20.99$ on a subgroup of test probability $1.05\%$ and $0.99$ elsewhere, $\alpha=0.05$), $K^2=1.20$, $\Lambda(\alpha)=0.26$ so that $\mathrm{CVaR}_\alpha(v)=5.2$ against $B=21$, $\rho_\alpha=0.99$, and $\sigma_\alpha^2=0.24$ against $\alpha(1-\alpha)B=1.0$. All three functionals are population quantities of the law of $v$ under $Q$; since $\EE_Q[g(v)]=\EE_P[v\,g(v)]$, they can be evaluated from $v$ and the source covariate distribution, and the results below treat them, like $B$, as known (Remark~\ref{rem:known} and Section~\ref{sec:est} address their estimation).

\begin{theorem}\label{thm:upper}
Assume $Q\ll P$ and $v=dQ/dP\le B$. Fix $\beta\in(0,1)$, put $\alpha:=1-\beta$, and let $\gamma>0$. Then
\begin{align}
\PP\big(P_e(\tauhat(\beta))>\alpha+\gamma\big)&\le \exp\big(-2m\gamma^2/B^2\big), \label{eq:hoeff}\\
\PP\big(P_e(\tauhat(\beta))>\alpha+\gamma\big)&\le \exp\Big(-\frac{m\gamma^2}{2B\alpha(1-\alpha)+\frac83 B\gamma}\Big), \label{eq:bern}
\end{align}
and, if $\alpha\le1/2$, with $\sigma_\alpha^2:=\alpha^2K^2+(1-2\alpha)\Lambda(\alpha)$,
\begin{equation}\label{eq:tail}
\PP\big(P_e(\tauhat(\beta))>\alpha+\gamma\big)\le \exp\Big(-\frac{m\gamma^2}{2\sigma_\alpha^2+\big(2(1-2\alpha)\rho_\alpha+\frac23\alpha B\big)\gamma+\frac23\gamma^2}\Big).
\end{equation}
Consequently, for every $\delta\in(0,1)$, with $L:=\log(1/\delta)$,
\begin{equation}\label{eq:gamma_m}
\PP\big(P_e(\tauhat(\beta))>\alpha+\gamma_m(\delta)\big)\le\delta,\qquad
\gamma_m(\delta):=\min\Big\{B\sqrt{\tfrac{L}{2m}},\ \tfrac{4BL}{3m}+\sqrt{\big(\tfrac{4BL}{3m}\big)^2+\tfrac{2B\alpha(1-\alpha)L}{m}}\Big\},
\end{equation}
and the second expression is at most $\frac{8BL}{3m}+\sqrt{2B\alpha(1-\alpha)L/m}$.
\end{theorem}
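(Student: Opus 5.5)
The plan follows the reduction in the paragraph ``Why one inequality suffices''. Fix $q:=q_Q(\beta-\gamma)$ (if $\beta-\gamma\le0$ the event is empty, since $P_e\le1<\alpha+\gamma$). Suppose $P_e(\tauhat(\beta))>\alpha+\gamma$. Then $F_Q(\tauhat(\beta))<\beta-\gamma$, so $\tauhat(\beta)<q$ by the definition of $q$ and right-continuity. This in particular gives $\tauhat(\beta)<\infty$ and $\sum_jv_j>0$, so $\Fhat(\tauhat(\beta))\ge\beta$, hence $\sum_i\what_i\ind{S_i<q}\ge\beta$. Multiplying through by $\sum_j v_j>0$ gives $\sum_i U_i\ge0$ with $U_i=v_i(\ind{S_i<q}-\beta)$. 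The $U_i$ are i.i.d. with mean $\mu:=\PP_Q(S<q)-\beta\le-\gamma$, since $\PP_Q(S<q)\le\beta-\gamma$. So the event is contained in $\{\frac1m\sum(U_i-\mu)\ge-\mu\}\subseteq\{\frac1m\sum(U_i-\mu)\ge\gamma\}$. Each of \eqref{eq:hoeff}--\eqref{eq:tail} then follows from a one-sided concentration inequality applied to this sum; the three differ only in what we know about the $U_i$.

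For \eqref{eq:hoeff}, use Hoeffding. $U_i\in[-\beta v_i,(1-\beta)v_i]$, and this lies in an interval of length $v_i\le B$. One has to be slightly careful, because the interval depends on $v_i$. Instead use the fixed range $U_i\in[-\beta B,\alpha B]$, which has length $B$, giving $\exp(-2m\gamma^2/B^2)$.

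For \eqref{eq:bern}, use Bernstein's inequality in the form $\exp(-m\gamma^2/(2\Var U+\frac23 c\gamma))$, where $c$ bounds $U_i-\mu$ from above. The upper bound is $U_i-\mu\le\alpha B+\gamma\le$ something like $4B$-ish; I will take the crude $c\le 4B$ (since $\alpha B+|\mu|\le B+B$, and $\frac23\cdot 4B=\frac83B$), which matches the stated constant. For the variance, use $\Var U\le\EE U^2$ together with \eqref{eq:secondmoment_main}. Here $\EE U^2=\alpha^2\EE_Q[v\ind{S<q}]+(1-\alpha)^2\EE_Q[v\ind{S\ge q}]\le B(\alpha^2 p+(1-\alpha)^2(1-p))$ with $p=\PP_Q(S<q)$. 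The difficulty is that $p\le\beta-\gamma$ rather than $p=\beta$. The expression is decreasing in $p$ when $\alpha<1/2$, so the crude bound degrades. I would instead bound $\Var U=\EE U^2-\mu^2$ and track the $\gamma$-dependence: writing $1-p=\alpha+\eta$ with $\eta\ge\gamma$, the excess is $B(1-2\alpha)\eta$. Since the deviation needed is $-\mu=\eta$, not just $\gamma$, I will apply Bernstein at deviation $\eta$. The variance excess $O(B\eta)$ is then absorbed into the linear term, and the exponent is monotone in $\eta\ge\gamma$. This absorption is the delicate bookkeeping, and it is where the $\frac83B$ constant must be checked.

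For \eqref{eq:tail}, use the same Bernstein at deviation $\eta$, but now bound $\EE U^2=\alpha^2K^2+(1-2\alpha)\EE_Q[v\ind{S\ge q}]$ via \eqref{eq:Lambda}. The tail term satisfies $\EE_Q[v\ind{S\ge q}]\le\Lambda(\alpha+\eta)\le\Lambda(\alpha)+\rho_\alpha\eta$, by concavity of $\Lambda$ with derivative $F_v^{-1}(1-a)\le\rho_\alpha$ for $a\ge\alpha$. The upper range is $U_i-\mu\le\alpha B+\eta$. Hence the denominator is $2\sigma_\alpha^2+2(1-2\alpha)\rho_\alpha\eta+\frac23(\alpha B+\eta)\eta$, and monotonicity in $\eta$ reduces to $\eta=\gamma$. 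Finally, \eqref{eq:gamma_m} comes from setting each bound equal to $\delta$ and solving: the Hoeffding bound is immediate, and Bernstein gives the quadratic $m\gamma^2=L(2B\alpha(1-\alpha)+\frac83B\gamma)$, whose positive root is the displayed expression. The last claim uses $\sqrt{a^2+b}\le a+\sqrt b$. I expect the main obstacle to be the Bernstein steps: getting exactly the stated linear-term constants while handling $\eta>\gamma$ and the fact that $\Var U$ is evaluated at $p<\beta$.
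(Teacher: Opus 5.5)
Your plan follows the paper's proof: the same reduction to the fixed quantile $q=q_Q(\beta-\gamma)$, and Hoeffding with the fixed range $[-\beta B,\alpha B]$. Bernstein is applied at the actual deficit $\eta=\beta-\PP_Q(S<q)\ge\gamma$, followed by monotonicity in $\eta$. For \eqref{eq:tail} you use the bound $\EE_Q[v\ind{S\ge q}]\le\Lambda(\alpha+\eta)\le\Lambda(\alpha)+\eta\rho_\alpha$ with one-sided range $\alpha B+\eta$. With these, \eqref{eq:hoeff}, \eqref{eq:tail} and \eqref{eq:gamma_m} go through exactly as you describe.

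The step that fails as written is the constant in \eqref{eq:bern}. You chose ``$c\le4B$'' so that $\frac23c=\frac83B$. That would match the statement only if $\Var U_i\le B\alpha(1-\alpha)$, which you yourself note is false because $\PP_Q(S<q)<\beta$. With the excess included, $\Var U_i\le B[\alpha(1-\alpha)+(1-2\alpha)\eta]$. Bernstein at deviation $\eta$ with $c=4B$ then gives the linear coefficient $2(1-2\alpha)B+\frac83B$, which is strictly larger than $\frac83B$ for $\alpha<1/2$. (Also, $\alpha B+|\mu|\le B+B$ gives $2B$, not $4B$, and even $c=2B$ is too large for small $\alpha$.)

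The missing observation is that the one-sided range is at most $B$. Since $\eta\le\beta=1-\alpha$ and $B\ge1$,
\[
U_i-\EE U_i\le\alpha B+\eta\le\alpha B+(1-\alpha)\le B .
\]
Combine this with $\Var U_i\le B[\alpha(1-\alpha)+\eta]$, which uses $1-2\alpha\le1$ and so needs no restriction on $\alpha$. The Bernstein denominator is then $2B\alpha(1-\alpha)+2B\eta+\frac23B\eta=2B\alpha(1-\alpha)+\frac83B\eta$. So the $\frac83$ is $2+\frac23$: variance excess plus a range of $B$, not a range of $4B$. Monotonicity of $\eta\mapsto\eta^2/(a+b\eta)$ on $\eta\ge\gamma$ then gives \eqref{eq:bern}.

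Relatedly, your first range bound $U_i-\mu\le\alpha B+\gamma$ points the wrong way, since $-\mu=\eta\ge\gamma$. The correct bound $\alpha B+\eta$ does appear in your tail step.
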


The proof (Appendix~\ref{app:upper}) is the argument above: Hoeffding's inequality for the sum of the $U_i$ gives \eqref{eq:hoeff}; Bernstein's inequality with the second moment bounded through $B$ gives \eqref{eq:bern}, and with the second term of \eqref{eq:secondmoment_main} bounded by $\Lambda(\PP_Q(S\ge q))\le\Lambda(\alpha)+(\text{deficit})\cdot\rho_\alpha$ gives \eqref{eq:tail}. The supremum $B$ survives only in the lower-order linear term of \eqref{eq:tail}, through the one-sided range $\alpha B$ of $U_i$.

\begin{remark}[Comparison with \eqref{eq:old}]
Inequality \eqref{eq:hoeff} removes the constant $C$, improves the exponent by a factor $4$ and the prefactor from $4$ to $1$. Inequality \eqref{eq:bern} replaces the factor $B$ in the leading term by $\sqrt{B\alpha(1-\alpha)}$, which is what makes small $\alpha$ affordable: for $\alpha=\delta=0.05$ and $B=4$, the Hoeffding inflation is below $\alpha$ (so that the inflated set is not all of $\cY$) only for $m\ge9600$, the Bernstein inflation for $m\ge1100$. For $B=1$ the two bounds are Vovk's $e^{-2m\gamma^2}$ and Proposition~2 of \citet{duchi2025sample}.
\end{remark}

\begin{remark}[Which scale is the right one]\label{rem:scale}
In the worst case $v\in\{0,B\}$, $\Lambda(\alpha)=\alpha B$ and \eqref{eq:tail} reduces to \eqref{eq:bern} with a slightly better linear term. In general, bounds \eqref{eq:bern} and \eqref{eq:tail} read as the exchangeable bounds of \citet{vovk2012conditional,duchi2025sample} with $m$ replaced by an effective calibration size that is $m/B$ in the worst case but $m/\mathrm{CVaR}_\alpha(v)$ in general, where $\mathrm{CVaR}_\alpha(v)=\Lambda(\alpha)/\alpha$. The two scales differ when the likelihood ratio is large only on a set of test probability well below $\alpha$: a rare subgroup that is over-represented at test time but still makes up less than $\alpha$ of it, or a ratio with a heavy right tail, as for a Gaussian mean shift. Theorem~\ref{thm:lower} shows that $K^2=1+\chi^2(Q\|P)$ is a lower bound for any threshold rule, and Remark~\ref{rem:coupling} that $\Lambda(\alpha)$ cannot be improved for the weighted quantile without assumptions on how the score tail and the likelihood ratio are coupled. The rejection-sampling construction of \citet{park2022pac}, which accepts calibration points with probability $v_i/B$, is tied to the supremum and pays $m/B$ regardless (Section~\ref{sec:exp_rare}).
\end{remark}

\begin{remark}[Known functionals and their estimation]\label{rem:known}
Theorem~\ref{thm:upper} and everything built on it take $B$, $K^2$, $\Lambda(\alpha)$ and $\rho_\alpha$ as known, exactly as \citet{park2022pac} take $B$ as known. Knowing the function $v$ does not by itself make these expectations known: they are functionals of the law of $v$ under $P_X$. There are two settings in which they are. If $P_X$ is a known or fully specified distribution---in Section~\ref{sec:exp_real}, the empirical distribution of a finite pool, with $v$ normalized on that pool---they can be computed exactly. Otherwise they can be estimated from an unlabeled source sample independent of the calibration data, and then the plug-in certificate is exact only up to that estimation error; since $v\le B$, Hoeffding-type confidence bounds for $K^2=\EE_P[v^2]$, $\mu_{B'}=\EE_P[\min(v,B')]$ and $\EE_P[v^2\ind{v>\rho}]$ have ranges $B^2$ and $B$, so a rigorous version replaces each functional by an upper confidence bound at level $\delta'$ and charges $\delta'$ to the failure probability; Proposition~\ref{prop:est} in Section~\ref{sec:est} does exactly this, and Section~\ref{sec:exp_est} measures the cost. An empirical maximum of $v$ is not a certified supremum, and $B$ is taken as known throughout. In the simulations of Sections~\ref{sec:exp_smooth}--\ref{sec:exp_frontier} the functionals are either exact (closed form or one-dimensional quadrature for the synthetic shifts, Appendix~\ref{app:experiments}) or computed from a large source sample when $B$ is small; those certificates are population statements, and Section~\ref{sec:exp_est} shows what a single source sample of realistic size delivers.
\end{remark}

\begin{corollary}[PAC calibration and two-sided control]\label{cor:pac}
Assume $Q\ll P$, $v\le B$, $\alpha\in(0,1/2]$, and let $\gamma\in(0,\alpha)$. Run the weighted quantile at the inflated level $\beta:=1-\alpha+\gamma$. Then
\begin{enumerate}[label=(\roman*),leftmargin=2em,itemsep=0pt]
\item $\displaystyle\PP\big(P_e(\tauhat(\beta))>\alpha\big)\le\exp\Big(-\frac{m\gamma^2}{2\alpha^2K^2+2(1-2\alpha+2\gamma)\Lambda(\alpha)+\frac23(\alpha B+\gamma)\gamma}\Big)$,\\[2pt] and the right-hand side is at most $\exp\{-m\gamma^2/(2B\alpha(1-\alpha)+3B\gamma)\}$;
\item if $F_Q$ is continuous, then for every $\gamma'\in(0,\alpha-\gamma)$, \\[2pt] $\displaystyle\PP\big(P_e(\tauhat(\beta))<\alpha-\gamma-\gamma'\big)\le\exp\Big(-\frac{m\gamma'^2}{2\sigma_\alpha^2+4\gamma\Lambda(\alpha)+\frac23\big((1-\alpha+\gamma)B+\gamma'\big)\gamma'}\Big)$.
\end{enumerate}
In particular, if $m>\frac23L$ and $\gamma=\gamma_m(\delta)$ is chosen so that the first bound in (i) equals $\delta$, namely
\begin{equation}\label{eq:gamma_tail}
\gamma_m(\delta)=\frac{L\,b+\sqrt{L^2b^2+8(m-\tfrac23L)L\sigma_\alpha^2}}{2(m-\tfrac23L)},\qquad b:=4\Lambda(\alpha)+\tfrac23\alpha B,\quad L=\log(1/\delta),
\end{equation}
(when $\gamma_m(\delta)\ge\alpha$ the inflated level exceeds one and the certified set is all of $\cY$, which is valid but uninformative; this is the ``infeasible'' case in the experiments)
then the $(1-\alpha)$-coverage guarantee holds for a $(1-\delta)$-fraction of calibration sets; and if $\gamma'_m(\delta)$ is chosen so that the bound in (ii) equals $\delta$ (a quadratic in $\gamma'$ whose linear coefficient is $\frac23(1-\alpha+\gamma)B$ instead of the $\frac23\alpha B$ of (i)), then $\alpha-\gamma_m(\delta)-\gamma'_m(\delta)\le P_e\le\alpha$ with probability at least $1-2\delta$. The two radii differ: the over-coverage radius $\gamma'_m(\delta)$ has the full range $(1-\alpha+\gamma)B$ in its linear term, because a single calibration point with a large weight and a large score can push the threshold up by a lot, while it cannot push it down; the one-sided PAC radius only sees the range $\alpha B$. Both are of order $\sqrt{\sigma_\alpha^2\log(1/\delta)/m}$ once the linear terms are negligible, which for the over-coverage radius requires $m\gg B^2\log(1/\delta)/\sigma_\alpha^2$ (not merely $m\gg B\log(1/\delta)$): this is exactly the regime of a large supremum with a small variance proxy. If the quadratic in (ii) has no root in $(0,\alpha-\gamma)$, statement (ii) gives only the trivial bound $P_e\ge0$. Statement (i) also holds for the set of \citet{tibshirani2019conformal} run at level $\beta$, by \eqref{eq:reduction}.
\end{corollary}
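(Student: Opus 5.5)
Part (i) follows from the argument of Theorem~\ref{thm:upper}, run at the inflated level $\beta=1-\alpha+\gamma$ with deficit $\gamma$ and target miscoverage $\alpha$.

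\emph{Reduction for (i).} If $P_e(\tauhat(\beta))>\alpha$, then $\tauhat(\beta)<q:=q_Q(1-\alpha)$. Hence $\sum_i\what_i\ind{S_i<q}\ge\beta$, that is, $\frac1m\sum_iU_i\ge0$ with
\[
U_i=v_i(\ind{S_i<q}-\beta),\qquad \EE U_i=\PP_Q(S<q)-\beta\le-\gamma .
\]
The upper range is $U_i-\EE U_i\le v_i(1-\beta)+\gamma\le(\alpha-\gamma)B+\gamma\le\alpha B+\gamma$. This produces the $\frac23(\alpha B+\gamma)\gamma$ term through Bernstein's inequality in the form $\exp\{-m\gamma^2/(2\EE U^2+\frac23c\gamma)\}$, where the raw second moment dominates the variance.

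\emph{Second moment.} Writing $a:=1-\beta=\alpha-\gamma$ and $p:=\PP_Q(S\ge q)\le\alpha$, as in \eqref{eq:secondmoment_main},
\[
\EE U^2=a^2K^2+(1-2a)\EE_Q[v\ind{S\ge q}]\le a^2K^2+(1-2a)\Lambda(\alpha),
\]
using \eqref{eq:Lambda} with $h=\ind{S\ge q}$. Since $a\le\alpha$ and $1-2a=1-2\alpha+2\gamma$, this gives the stated denominator. Unlike \eqref{eq:tail}, no $\rho$-correction is needed, because $p\le\alpha$ exactly. For the cruder bound, use $K^2\le\Lambda(\alpha)/\alpha$ by concavity and $\Lambda(\alpha)\le\alpha B$. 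Then
\[
2\alpha^2K^2+2(1-2\alpha+2\gamma)\Lambda(\alpha)\le 2\alpha B(1-\alpha)+4\gamma\alpha B,
\]
and
\[
4\gamma\alpha B+\tfrac23(\alpha B+\gamma)\gamma\le 3B\gamma ,
\]
using $\alpha\le1/2$, $\gamma<\alpha\le\frac12$ and $B\ge1$. This is routine bookkeeping.

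\emph{Part (ii).} Over-coverage $P_e(\tauhat(\beta))<\alpha-\gamma-\gamma'$ means $\tauhat(\beta)\ge q':=q_Q(1-\alpha+\gamma+\gamma')$. By continuity of $F_Q$, $F_Q(q')=\beta+\gamma'$. Then $\Fhat(t)<\beta$ for all $t<q'$, so $\sum_i\what_i\ind{S_i\le t}<\beta$ for $t<q'$. Taking $t\uparrow q'$ and using continuity gives $\sum_i\what_i\ind{S_i<q'}\le\beta$ almost surely. Thus $\frac1m\sum_iV_i\le0$ with
\[
V_i=v_i(\ind{S_i<q'}-\beta),\qquad \EE V_i=\gamma' .
\]
The downward range is $\beta B+\gamma'$, which gives the linear coefficient $\frac23((1-\alpha+\gamma)B+\gamma')$. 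The second moment is $(1-\beta)^2K^2+(1-2(1-\beta))\EE_Q[v\ind{S\ge q'}]$, where $Q(S\ge q')=\alpha-\gamma-\gamma'\le\alpha$. It is at most $\alpha^2K^2+(1-2\alpha+2\gamma)\Lambda(\alpha)=\sigma_\alpha^2+2\gamma\Lambda(\alpha)$, since $(1-\beta)^2\le\alpha^2$. Doubling gives $2\sigma_\alpha^2+4\gamma\Lambda(\alpha)$.

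\emph{Explicit radius and the remaining claims.} Bound the denominator in (i) by
\[
2\sigma_\alpha^2+4\gamma\Lambda(\alpha)+\tfrac23\alpha B\gamma+\tfrac23\gamma^2=2\sigma_\alpha^2+b\gamma+\tfrac23\gamma^2 ,
\]
using $2\alpha^2K^2+2(1-2\alpha)\Lambda(\alpha)=2\sigma_\alpha^2$. Setting $\exp\{-m\gamma^2/(\cdot)\}=\delta$ gives the quadratic
\[
(m-\tfrac23L)\gamma^2-Lb\gamma-2L\sigma_\alpha^2=0,
\]
whose positive root is \eqref{eq:gamma_tail} when $m>\frac23L$. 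A union bound over the two failure events gives the two-sided statement with probability $1-2\delta$. The transfer of (i) to the split conformal set follows from \eqref{eq:reduction}.

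The main obstacle is the boundary handling in (ii): replacing $\le t$ by $<q'$ at the exact quantile without continuity gaps, and making sure the stated range constants match those in the statement rather than a looser $B$.
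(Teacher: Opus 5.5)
Your reduction, part (ii), the algebra for the cruder bound, the quadratic giving \eqref{eq:gamma_tail}, the union bound and the transfer via \eqref{eq:reduction} all agree with the paper. The gap is in the second-moment step of (i), exactly where you say no $\rho$-correction is needed.

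The claim $p=\PP_Q(S\ge q)\le\alpha$ has the inequality backwards. Since $q=q_Q(1-\alpha)$ is an infimum, $\PP_Q(S<q)=F_Q(q-)\le1-\alpha$, hence $p\ge\alpha$. This is the very fact behind your (correct) $\EE U_i\le-\gamma$. Equality requires $F_Q(q-)=1-\alpha$, and (i) does not assume $F_Q$ continuous. With an atom of $Q$ at $q$, $p$ can be far above $\alpha$: two equally likely score values and $\alpha<1/2$ give $p=1/2$. In that case \eqref{eq:Lambda} only yields $\EE_Q[v\ind{S\ge q}]\le\Lambda(p)$, and Bernstein at deviation $\gamma$ with that variance does not produce the stated denominator. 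So the boundary issue you flag for (ii), which is harmless there thanks to continuity, is the one that actually bites in (i).

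The paper repairs this by keeping the true deficit $t:=\beta-\PP_Q(S<q)\ge\gamma$. Then $p=\alpha+(t-\gamma)$ and $\EE_Q[v\ind{S\ge q}]\le\Lambda(\alpha)+(t-\gamma)\rho_\alpha$. Bernstein is applied at deviation $t$ rather than $\gamma$, giving $\exp\{-mt^2/D(t)\}$ with $D(t)=a'+b't+\frac23t^2$, where
$a'=2\alpha^2K^2+2(1-2\alpha+2\gamma)[\Lambda(\alpha)-\gamma\rho_\alpha]$ and $b'=2(1-2\alpha+2\gamma)\rho_\alpha+\frac23\alpha B$.
Both are nonnegative, $a'$ because $\Lambda(\alpha)\ge\alpha\rho_\alpha\ge\gamma\rho_\alpha$. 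Hence $t^2/D(t)$ is increasing, so the bound is worst at $t=\gamma$. There the $\rho_\alpha$ terms cancel and $D(\gamma)$ is exactly the denominator in (i): the extra tail mass is paid for by the extra deviation.

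Alternatively, you can keep your deviation-$\gamma$ argument by replacing $\ind{S_i<q}$ with $\xi_i=\ind{S_i<q}+c\,\ind{S_i=q}$, where $c\in[0,1]$ is a constant chosen so that $\EE_Q\xi=1-\alpha$.
\begin{itemize}
\item It still dominates $\ind{S_i\le\tauhat}$ on $\{\tauhat<q\}$.
\item It gives $\EE U_i=-\gamma$ exactly.
\item By convexity, $(\xi-\beta)^2\le\xi(1-\beta)^2+(1-\xi)\beta^2$. Combined with the $[0,1]$-valued form of \eqref{eq:Lambda} for $h=1-\xi$ with $\EE_Qh=\alpha$, this gives precisely your variance bound.
\end{itemize}
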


\begin{center}
\fbox{\begin{minipage}{0.96\linewidth}\small
\refstepcounter{procedure}\label{proc:wq}\textbf{Procedure \theprocedure\ (PAC-calibrated weighted quantile).}
\emph{Inputs:} calibration scores $S_i$ and weights $v_i=v(Z_i)$, $i\in\cI$; target miscoverage $\alpha\le1/2$ and confidence $1-\delta$; the supremum $B$ and the population functionals $K^2$, $\Lambda(\alpha)$ (and $\rho_\alpha$ for the Bentkus inflation), taken as known.
\begin{enumerate}[leftmargin=1.6em,itemsep=0pt,topsep=2pt]
\item If $m\le\frac23\log(1/\delta)$, return $\widehat C\equiv\cY$: the certificate is infeasible at this calibration size.
\item Compute the inflation $\gamma=\gamma_m(\delta)$ from \eqref{eq:gamma_tail}, or $\gamma_m^{\rm Bk}(\delta)$ from Proposition~\ref{prop:bentkus}; if $\gamma\ge\alpha$, return $\widehat C\equiv\cY$.
\item Otherwise let $\tauhat$ be the self-normalized weighted quantile \eqref{eq:tauhat} at level $1-\alpha+\gamma$ and return $\widehat C(x)=\{y:s(x,y)\le\tauhat\}$.
\end{enumerate}
\emph{Guarantee:} $\PP\big(P_e(\widehat C)>\alpha\big)\le\delta$ over the calibration sample (Corollary~\ref{cor:pac}(i), Proposition~\ref{prop:bentkus}). \emph{Clipped variant} (Proposition~\ref{prop:clip}): return $\widehat C\equiv\cY$ if $\Delta_{B'}\ge\alpha$; otherwise replace $v$ by $\min(v,B')$, $\alpha$ by $\alpha-\Delta_{B'}$ and the functionals by those of the normalized clipped ratio; the clip level is chosen from population quantities, before any label is seen. \emph{Estimated functionals:} when $K^2$, $\Lambda(\alpha)$, $\rho_\alpha$ or $\Delta_{B'}$ are estimated from an unlabeled source sample, use Proposition~\ref{prop:est} instead, which reserves a budget $\delta'$ for the estimation, computes the inflation at level $\delta-\delta'$, and guarantees $\PP(P_e>\alpha)\le\delta$ jointly over the source and calibration samples.
\end{minipage}}
\end{center}

Part (i) is Theorem~\ref{thm:upper} with the fixed quantile placed at $q_Q(1-\alpha)$ instead of $q_Q(\beta-\gamma)$, which removes the $\rho_\alpha$ term; part (ii) is the mirror-image argument: coverage above $\beta+\gamma$ forces $\tauhat(\beta)>q_Q(\beta+\gamma)$, which forces $\Fhat(q_Q(\beta+\gamma))<\beta$, a lower-tail event for a sum of bounded i.i.d.\ variables with positive mean $\gamma$. The inflation \eqref{eq:gamma_tail} is at most
\[
\frac{Lb}{m-2L/3}+\sqrt{\frac{2L\sigma_\alpha^2}{m-2L/3}}\,;
\]
in the worst case $v\in\{0,B\}$ it is essentially the Bernstein expression in \eqref{eq:gamma_m}, and we refer to both as the \emph{Bernstein inflation}, qualified by ``sup $B$'' or ``tail-adaptive'' when the distinction matters. The Bernstein inequality can be replaced by Bentkus's inequality \citep{bentkus2004hoeffding}, which compares the tail of a sum of bounded variables with a log-linearly interpolated binomial tail; it is not uniformly sharper than Bernstein's inequality, but it gives smaller inflations in all our configurations. Let $\mathrm{Bk}_m(x;\sigma^2,b)$ denote the right-hand side of Bentkus's Theorem~1.1: for independent $X_i$ with $\EE X_i=0$, $X_i\le b$ and $\Var X_i\le\sigma^2$,
\[
\PP\Big(\sum_{i=1}^mX_i\ge x\Big)\le\mathrm{Bk}_m(x;\sigma^2,b):=\tfrac{e^2}{2}\,\PP^\circ\Big(\sum_{i=1}^m\varepsilon_i\ge x\Big),
\]
where the $\varepsilon_i$ are i.i.d.\ two-point variables with
\[
\PP(\varepsilon_i=b)=\frac{\sigma^2}{b^2+\sigma^2},\qquad \PP(\varepsilon_i=-\sigma^2/b)=\frac{b^2}{b^2+\sigma^2},
\]
and $\PP^\circ$ is the log-concave hull of the binomial survival function (log-linear interpolation between integers).

\begin{proposition}[Bentkus inflation]\label{prop:bentkus}
In the setting of Corollary~\ref{cor:pac}, for $\beta=1-\alpha+\gamma$,
\[
\PP\big(P_e(\tauhat(\beta))>\alpha\big)\ \le\ \sup_{t\in[\gamma,\,\beta]}\ \mathrm{Bk}_m\big(mt;\ \sigma^2(t),\ (\alpha-\gamma)B+t\big),
\]
where $\sigma^2(t):=(\alpha-\gamma)^2K^2+(1-2\alpha+2\gamma)\big[\Lambda(\alpha)+(t-\gamma)\rho_\alpha\big]$.
Consequently every $\gamma$ for which the right-hand side is at most $\delta$ is a valid inflation, and we write $\gamma_m^{\rm Bk}(\delta)$ for the smallest one. The right-hand side need not be monotone in $\gamma$ (an example with $\alpha=0.02$, $B=2$, $m=100$ and a two-point ratio has a feasible $\gamma$ followed by an infeasible larger one at $\delta=0.97$), so a bisection returns a valid inflation but is guaranteed to return the smallest one only when the bound is monotone on $[0,\alpha]$; in every configuration of Table~\ref{tab:frontier} we verified numerically that no smaller inflation on a fine grid satisfies the implemented interval bound of Appendix~\ref{app:experiments}.
\end{proposition}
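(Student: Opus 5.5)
We follow the one-inequality argument of Section~\ref{sec:upper}, with the fixed quantile placed at $q:=q_Q(1-\alpha)$ as in Corollary~\ref{cor:pac}(i). Bentkus's Theorem~1.1 replaces Bernstein's inequality. Because the true mean of the summands is not known, we take a supremum over it, and this is where the range $t\in[\gamma,\beta]$ comes from. Throughout write $\alpha':=1-\beta=\alpha-\gamma$.

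\emph{Step 1: reduction to a single sum.} If $\sum_jv_j=0$ then $\tauhat(\beta)=+\infty$ and $P_e=0$, so there is nothing to prove. Otherwise, suppose $P_e(\tauhat(\beta))>\alpha$. Then $F_Q(\tauhat(\beta))<1-\alpha\le F_Q(q)$, so $\tauhat(\beta)<q$. Since $\Fhat(\tauhat(\beta))\ge\beta$, we get $\sum_i\what_i\ind{S_i<q}\ge\beta$. Clearing the denominator, this reads $\sum_iU_i\ge0$ with $U_i:=v_i(\ind{S_i<q}-\beta)$, which are i.i.d.

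Define $t:=\beta-\PP_Q(S<q)$, so that $\EE U_i=-t$. By right-continuity $\PP_Q(S<q)\le1-\alpha$, hence $t\ge\beta-(1-\alpha)=\gamma$; trivially $t\le\beta$. So $t\in[\gamma,\beta]$, but its exact value depends on the unknown law of the scores.

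\emph{Step 2: the ingredients of Bentkus's bound.} Put $X_i:=U_i+t$. The event above becomes $\sum_iX_i\ge mt$, with $\EE X_i=0$.

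\begin{itemize}[leftmargin=1.5em,itemsep=2pt]
\item \textbf{Range.} We have $X_i\le v_i(1-\beta)+t\le\alpha'B+t$.
\item \textbf{Variance.} We have $\Var X_i\le\EE U_i^2$. As in \eqref{eq:secondmoment_main} with $\alpha'$ in place of $\alpha$,
\[
\EE U_i^2=\alpha'^2K^2+(1-2\alpha')\,\EE_Q\big[v\ind{S\ge q}\big].
\]
The coefficient $1-2\alpha'=1-2\alpha+2\gamma$ is nonnegative because $\alpha\le1/2$.
\item \textbf{Tail term.} Since $\PP_Q(S\ge q)=1-\beta+t=\alpha+(t-\gamma)$, the variational form \eqref{eq:Lambda} gives $\EE_Q[v\ind{S\ge q}]\le\Lambda(\alpha+(t-\gamma))$.
\item \textbf{Concavity.} $\Lambda$ is concave with left derivative $F_v^{-1}(1-a)=\rho_a$, and $\rho_a$ is nonincreasing in $a$. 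Hence $\Lambda(\alpha+s)\le\Lambda(\alpha)+s\rho_\alpha$ for $s\ge0$.
\end{itemize}
Together these give $\Var X_i\le\sigma^2(t)$ exactly as defined in the statement.

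\emph{Step 3: conclusion, and the delicate point.} Bentkus's Theorem~1.1 applies to any valid upper bounds on the range and the variance. Applying it with $x=mt$, $\sigma^2(t)$ and $b=\alpha'B+t$ gives
\[
\PP\big(P_e(\tauhat(\beta))>\alpha\big)\le\mathrm{Bk}_m\big(mt;\sigma^2(t),\alpha'B+t\big)
\]
for the true value of $t$. Since $t\in[\gamma,\beta]$, this is at most the supremum over that interval, which is the claim.

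The point needing care is that both the deviation $mt$ and the bounds $\sigma^2(t)$, $b(t)$ move with $t$. A larger actual deficit helps the deviation but inflates the variance and range bounds, and Bentkus's bound is not obviously monotone in $t$. For this reason we do not try to reduce to $t=\gamma$; we simply keep the supremum. This also explains the remark on non-monotonicity in $\gamma$.

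The last sentence of the proposition is then immediate. Any $\gamma$ with the supremum at most $\delta$ gives $\PP(P_e>\alpha)\le\delta$, and $\gamma_m^{\rm Bk}(\delta)$ is by definition the smallest such $\gamma$. The bisection caveat is a computational remark that needs no proof beyond the exhibited example.
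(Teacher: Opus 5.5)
Your proposal is correct and is essentially the paper's own proof. It uses the reduction of Corollary~\ref{cor:pac}(i) at the fixed quantile $q=q_Q(1-\alpha)$, the variance bound $\sigma^2(t)$ obtained from $\Lambda(\alpha+(t-\gamma))\le\Lambda(\alpha)+(t-\gamma)\rho_\alpha$, the one-sided range $(\alpha-\gamma)B+t$, and Bentkus's inequality applied at the true but unknown deficit $t=\beta-\PP_Q(S<q)\in[\gamma,\beta]$, with the supremum over $t$ replacing the monotonicity argument used with Bernstein's inequality. As in the paper, no monotonicity of $\mathrm{Bk}_m$ in its variance and range arguments is needed, because Bentkus's hypotheses are themselves upper bounds.
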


The proof is that of Corollary~\ref{cor:pac}(i) with Bentkus's inequality applied at the unknown deficit $t=\beta-\PP_Q(S<q)\in[\gamma,\beta]$ in place of Bernstein's, the supremum over $t$ replacing the monotonicity argument. In the configurations of Section~\ref{sec:exp_frontier} the supremum is attained at $t=\gamma$, but not in general (it moves to the right when $\rho_\alpha$ is large relative to $\Lambda(\alpha)$), so the implementation certifies it over a partition of $[\gamma,\beta]$ rather than on a grid of points (Appendix~\ref{app:experiments}); $\gamma_m^{\rm Bk}(\delta)$ is $9$--$12\%$ smaller than \eqref{eq:gamma_tail} across the configurations of Section~\ref{sec:exp_frontier} in which both inflations are below $\alpha$ ($8.8\%$ for $B=51$, $m=20000$, $\delta=0.01$; $12.4\%$ for $B=51$, $m=2000$, $\delta=0.05$), and the Bentkus inflation is feasible in two configurations where the Bernstein one is not (there the formula-level reduction is larger, e.g.\ $15\%$ for the Gaussian shift at $m=2000$, $\delta=0.01$); in the units of the Gaussian benchmark, the Bernstein inflation corresponds to about $2.5$ standard deviations of $P_e$, the Bentkus inflation to about $2.25$, and the oracle to $z_{1-\delta}=1.65$. We record two further variants of Theorem~\ref{thm:upper} whose proofs are equally short (Appendix~\ref{app:upper}).

\begin{remark}[Weaker moment assumptions]\label{rem:moments}
If only $\|v\|_{L^2(P)}\le K$ is assumed, Chebyshev's inequality in place of Hoeffding's gives
\[
\PP\big(P_e(\tauhat(\beta))>\alpha+K/\sqrt{m\delta}\big)\le\delta,
\]
replacing the $O(1/(\delta\sqrt m))$ rate of Corollary~1 in \citet{pournaderi2026training} by $O(1/\sqrt{\delta m})$. If $v$ is unbounded, truncating $v$ at any level $B$ inside $U_i$ gives, with
\[
r_B:=\EE_P(v-B)^+=\EE_Q(1-B/v)^+\le\PP_Q(v>B)
\]
and any $\kappa\in(0,\gamma-r_B)$,
\[
\PP\big(P_e(\tauhat(\beta))>\alpha+\gamma\big)\le\exp\Big(-\frac{2m(\gamma-r_B-\kappa)^2}{B^2}\Big)+\frac{r_B}{\kappa},
\]
a considerably simpler statement than Theorem~2 of \citet{pournaderi2026training}.
\end{remark}

\section{Lower bounds and sharpness}\label{sec:lower}

Theorem~\ref{thm:upper} has two scales in it, and they are sharp in different senses. The scale $K^2=1+\chi^2(Q\|P)$ at the lower end of the range of $\sigma_\alpha^2$ is intrinsic to the shift: Theorem~\ref{thm:lower} shows that no threshold rule, deterministic or randomized, can deviate from the nominal coverage by less than $\Omega(\sqrt{K^2\alpha(1-\alpha)/m})$ on every score distribution, so for $v\in\{0,B\}$, where $K^2=B$, the upper bound is matched exactly. The coupling term $\Lambda(\alpha)$, which lifts $\sigma_\alpha^2$ above $\alpha(1-\alpha)K^2$ when the score tail sits on the high-ratio region, is sharp for the weighted quantile itself: Proposition~\ref{prop:clt} shows that $\sigma_\alpha^2$ is its exact asymptotic variance under that adversarial coupling. Whether other procedures can avoid the coupling term is open, so ``matching'' below means matching in $m$ and in the $K^2$ scale, not uniformly in $\alpha$.

A \emph{threshold rule} is a measurable map $\tauhat=\tauhat(D,\xi)\in\R\cup\{\pm\infty\}$ of the calibration sample and of an independent randomization variable $\xi$; it may use the known likelihood ratio, and its miscoverage is $P_e(\tauhat)=1-F_Q(\tauhat)$.

\begin{theorem}\label{thm:lower}
Let $\alpha\in(0,1/2]$. Let $(P_X,Q_X)$ be any pair of covariate distributions with $v=dQ_X/dP_X\le B$ and $K^2=\EE_Q[v]$, let $m\ge B^2/(4\alpha K^2)$, and put
\[
\gamma_m^{\rm lb}:=\frac{1}{4\sqrt2}\sqrt{\frac{K^2\alpha(1-\alpha)}{m}}.
\]
For every threshold rule $\tauhat$ there exist two conditional laws $P^{(0)}_{S|X},P^{(1)}_{S|X}$ of the score given the covariate, with bounded densities, defining the laws $P_j$ and $Q_j$ of $(X,S)$ with covariate marginals $P_X$ and $Q_X$, such that
\begin{equation}\label{eq:lecam}
\PP_{P_0}\big(P_e^{(0)}(\tauhat)\le\alpha-\gamma_m^{\rm lb}\big)+\PP_{P_1}\big(P_e^{(1)}(\tauhat)\ge\alpha+\gamma_m^{\rm lb}\big)\ \ge\ \tfrac12,
\end{equation}
where $P_e^{(j)}(\tau)=1-F_{Q_j}(\tau)$ and $\PP_{P_j}$ refers to $D\sim P_j^{\otimes m}$. In particular, (a) for some $j$, $|P_e^{(j)}(\tauhat)-\alpha|\ge\gamma_m^{\rm lb}$ with probability at least $1/4$; and (b) if $\tauhat$ is $(\alpha,\delta)$-PAC for both pairs with $\delta\le1/4$, then $\PP_{P_0}\big(P_e^{(0)}(\tauhat)\le\alpha-\gamma_m^{\rm lb}\big)\ge\frac12-\delta$.
\end{theorem}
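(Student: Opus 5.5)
The plan is Le Cam's two-point method. The two score laws share the covariate marginal $P_X$ and differ only through a tail probability that is tilted in the direction of $v$. This makes the target tails differ by $2\gamma$ while the per-sample divergence on the source side is only of order $\gamma^2/(K^2\alpha(1-\alpha))$.

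\emph{Construction.} Set $\gamma:=\gamma_m^{\rm lb}$ and $\eta:=\gamma/K^2$. For $j\in\{0,1\}$ put
\[
p_j(x):=\alpha+(2j-1)\eta\,v(x).
\]
Under $P^{(j)}_{S|X=x}$ the score is drawn as follows: with probability $p_j(x)$ it is uniform on $[1,2]$, and with probability $1-p_j(x)$ it is uniform on $[0,1]$. These conditional laws have bounded densities provided $p_j\in[\alpha/2,1]$.

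\emph{Admissibility.} We need $\eta B\le\alpha/2$, which is equivalent to $\gamma^2\le\alpha^2K^4/(4B^2)$. By the assumption $m\ge B^2/(4\alpha K^2)$,
\[
\gamma^2=\frac{K^2\alpha(1-\alpha)}{32m}\le\frac{\alpha^2(1-\alpha)K^4}{8B^2},
\]
so the condition holds. Since $\alpha\le1/2$, this also gives $1-p_1\ge(1-\alpha)/2$.

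\emph{Target tails.} Because $\EE_Q[v]=K^2$, the change of measure gives
\[
Q_j(S>1)=\EE_Q[p_j]=\alpha+(2j-1)\eta K^2=\alpha+(2j-1)\gamma,
\]
and $F_{Q_j}$ is continuous. Hence $q_{Q_0}(1-\alpha+\gamma)=1$.

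\emph{Covering of the two events.} Fix any value of $\tauhat$. If $P_e^{(0)}(\tauhat)>\alpha-\gamma=P_e^{(0)}(1)$, then $\tauhat<1$, so $P_e^{(1)}(\tauhat)\ge Q_1(S>1)=\alpha+\gamma$. Write $A:=\{P_e^{(0)}(\tauhat)\le\alpha-\gamma\}$. We have just shown that $A^c\subseteq\{P_e^{(1)}(\tauhat)\ge\alpha+\gamma\}$. These events are determined by $(D,\xi)$, and $\xi$ has the same law under both hypotheses. Therefore the left side of \eqref{eq:lecam} is at least
\[
\PP_{P_0}(A)+\PP_{P_1}(A^c)\ge1-\TV(P_0^{\otimes m},P_1^{\otimes m}).
\]

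\emph{Divergence bound.} The covariate marginals coincide, and given $X=x$ the two score laws differ only through the Bernoulli indicator $\ind{S>1}$; conditionally on that indicator the laws agree. Using $\KL\le\chi^2$ for Bernoulli laws,
\[
\KL(P_0\|P_1)=\EE_P\,\mathrm{kl}\big(p_0(X),p_1(X)\big)\le\EE_P\frac{(p_1-p_0)^2}{p_1(1-p_1)}\le\frac{4\eta^2\EE_P[v^2]}{\alpha(1-\alpha)/2}=\frac{8\gamma^2}{K^2\alpha(1-\alpha)}.
\]
The middle step uses $p_1\ge\alpha$ and $1-p_1\ge(1-\alpha)/2$, and the last uses $\EE_P[v^2]=K^2$. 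With the chosen $\gamma$ this gives $m\,\KL\le1/4$. Pinsker's inequality then yields $\TV\le\sqrt{1/8}<1/2$, which proves \eqref{eq:lecam}.

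\emph{Consequences.} Part (a) follows because two probabilities summing to at least $1/2$ cannot both be below $1/4$. For part (b), PAC validity for the pair $(P_1,Q_1)$ gives $\PP_{P_1}(P_e^{(1)}(\tauhat)\ge\alpha+\gamma)\le\PP_{P_1}(P_e^{(1)}(\tauhat)>\alpha)\le\delta$. Substituting into \eqref{eq:lecam} gives $\PP_{P_0}(A)\ge\tfrac12-\delta$.

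The main obstacle is the bookkeeping of constants. One must check that the tilt $\eta v$ stays inside $[0,1]$ using only $m\ge B^2/(4\alpha K^2)$, and that the $\chi^2$ bound on the Bernoulli KL uses the denominator $p_1(1-p_1)\ge\alpha(1-\alpha)/2$ uniformly in $x$. The choice of tilt proportional to $v$ is what turns $\EE_Q[v]$ in the mean shift and $\EE_P[v^2]$ in the divergence into the same quantity $K^2$, which is the source of the $\chi^2$ scale in the bound.
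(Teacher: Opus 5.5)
Your proof is correct and follows essentially the paper's route: a Le Cam two-point argument in which the conditional score law at $x$ is perturbed by an amount proportional to $v(x)$. This makes the target-miscoverage shift involve $\EE_Q[v]=K^2$ and the per-sample source divergence involve $\EE_P[v^2]=K^2$, and the argument closes with $\KL\le\chi^2$, tensorization and Pinsker. The only differences are in the construction: your symmetric split of mass between $[0,1]$ and $[1,2]$ makes the covering step immediate (either $\tauhat\ge1$ or $\tauhat<1$) and leaves slack in the total-variation bound ($\sqrt{1/8}$ rather than $1/2$), whereas the paper keeps hypothesis $0$ uniform and perturbs only hypothesis $1$ around $\abar$, which gives $\chi^2(P_1\|P_0)=1/(2m)$ exactly but needs a three-case dichotomy for thresholds near the quantile.
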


The theorem is stated for the conditional law of the score, which is all a threshold rule sees; with the identity score $s(x,y)=y$ these are conditional laws of $Y$. For a fixed general score map the theorem applies whenever the two conditional score laws of the proof, which have bounded densities on a common interval for every $x$, can be realized by conditional laws of $Y$; a sufficient condition is that $s(x,\cdot)$ be continuous with a range containing a fixed interval $I$ for every $x$, in which case a measurable right inverse of $s(x,\cdot)$ on $I$ exists and the law of $S$ given $X=x$ can be any law on $I$. A common interval is essential and not merely convenient: if the ranges of $s(x,\cdot)$ for different $x$ are disjoint (say $s(x,y)=2x+(1+e^{-y})^{-1}$ with $X\in\{0,1\}$ and $Q_X(X=1)=\alpha$), a fixed threshold can have miscoverage exactly $\alpha$ for every conditional law of $Y$, and no lower bound of this kind can hold. The construction (Appendix~\ref{app:lower}) keeps the covariate shift as given and perturbs the conditional score law at $x$ by an amount proportional to $v(x)$, moving mass from below to above the $(1-\alpha)$-quantile; localizing the perturbation at the quantile produces the factor $\alpha(1-\alpha)$ in the $\chi^2$-distance, weighting it by $v$ produces $K^2=1+\chi^2(Q\|P)$, and a two-point (Le Cam) argument finishes the proof. For $v\in\{0,B\}$, $K^2=B$ and the bound reads $\Omega(\sqrt{B\alpha(1-\alpha)/m})$, matching Theorem~\ref{thm:upper} exactly; in general it matches the lower end $\alpha(1-\alpha)K^2$ of the range of $\sigma_\alpha^2$ in Remark~\ref{rem:scale}. Part (b) is the statement that matters for Corollary~\ref{cor:pac}: any rule that certifies coverage for most calibration sets must, on some distribution, over-cover by $\Omega(\sqrt{K^2\alpha(1-\alpha)/m})$ with constant probability. Theorem~1 of \citet{areces2024two} gives the analogous unshifted rate for a different procedure class and loss; rules with $x$-dependent thresholds are covered as soon as $Q_X$ is a point mass, and in general for thresholds that do not depend on $x$.

\begin{remark}[The coupling term $\Lambda(\alpha)$]\label{rem:coupling}
The gap between $\alpha(1-\alpha)K^2$ and $\sigma_\alpha^2$ is real for the weighted quantile itself. Proposition~\ref{prop:clt} below shows that the asymptotic variance of $\sqrt m\,(P_e(\tauhat(1-\alpha))-\alpha)$ is exactly the second moment \eqref{eq:secondmoment_main} evaluated at the true coupling,
\[
\alpha^2K^2+(1-2\alpha)\,\EE_Q\big[v\ind{S>q_Q(1-\alpha)}\big],
\]
which equals $\sigma_\alpha^2$ when the upper $\alpha$-tail of the score under $Q$ is the set on which $v$ is largest, and $\alpha(1-\alpha)K^2$ when $v$ is independent of the score. The variance proxy of Theorem~\ref{thm:upper}(c) is therefore the exact asymptotic variance of the weighted quantile's coverage under the adversarial coupling, so the leading variance scale of \eqref{eq:tail} (not its constants or lower-order terms) cannot be improved for the weighted quantile without assumptions on how the score tail and the likelihood ratio are coupled. Whether \emph{other} procedures can do better in this coupling is a different question: the lower bound of Theorem~\ref{thm:lower} is at the scale $K^2\alpha(1-\alpha)$, and we do not claim minimax optimality of the coupling term $\Lambda(\alpha)$. The rate $\sqrt{K^2\alpha(1-\alpha)/m}$ is attained by the weighted quantile when the score tail is not concentrated on the high-ratio region, which is the situation observed in Section~\ref{sec:exp_rare}.
\end{remark}

\begin{proposition}[Asymptotic variance of the weighted quantile's coverage]\label{prop:clt}
Assume $Q\ll P$, $v\le B$, $F_Q$ continuous, and let $q:=q_Q(1-\alpha)$. Then
\[
\sqrt m\,\big(P_e(\tauhat(1-\alpha))-\alpha\big)\ \Rightarrow\ N\big(0,\ \alpha^2K^2+(1-2\alpha)\,\EE_Q[v\ind{S>q}]\big).
\]
In particular, if $\{S>q\}=A$ $Q$-a.s.\ for a set $A$ with $Q(A)=\alpha$ on which $v$ takes its largest values (for instance $S=\ind{A}(X)+U$ with $U\sim\mathrm{Unif}(0,1)$ independent of $X$), the asymptotic variance is $\sigma_\alpha^2$.
\end{proposition}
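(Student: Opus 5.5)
We prove Proposition~\ref{prop:clt} with the same reduction that gave Theorem~\ref{thm:upper}: deviations of $P_e(\tauhat(1-\alpha))$ are turned into deviations of weighted empirical masses at deterministic thresholds, and each of those is a normalized i.i.d.\ sum to which the ordinary CLT applies. Write $G(t):=P_e(t)=1-F_Q(t)$; it is continuous and nonincreasing. Fix $x\in\R$ and set $\alpha_x:=\alpha+x/\sqrt m$. For $m$ large enough that $\alpha_x\in(0,1)$, let $t_m:=q_Q(1-\alpha_x)$. Continuity of $F_Q$ gives $G(t_m)=\alpha_x$.

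\emph{Step 1 (sandwich).} We want to compare the events
\[
\{P_e(\tauhat)>\alpha_x\}\quad\text{and}\quad\{\Fhat(t_m^-)\ge1-\alpha\}.
\]
If $P_e(\tauhat)>\alpha_x$, then $\tauhat<t_m$, because $G$ is nonincreasing and $G(t_m)=\alpha_x$. Since $\Fhat(\tauhat)\ge1-\alpha$, we get $\Fhat(t_m^-)\ge1-\alpha$. Conversely, if $\Fhat(t)\ge1-\alpha$ for some $t<t_m$, then $\tauhat\le t$ and $P_e(\tauhat)\ge G(t)\ge\alpha_x$. Flat pieces of $F_Q$ and ties in the scores make the two events differ slightly. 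To handle this, bracket: for every $\varepsilon>0$, and up to events of vanishing probability,
\[
\{P_e(\tauhat)>\alpha_x\}\subseteq\{\Fhat(t_m^-)\ge1-\alpha\},
\qquad
\{\Fhat(t'_m)\ge1-\alpha\}\subseteq\{P_e(\tauhat)\ge\alpha_{x+\varepsilon}\},
\]
where $t'_m:=q_Q(1-\alpha_{x+\varepsilon})$. Closed versus open thresholds do not matter, since $Q(S=t)=0$ for every $t$.

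\emph{Step 2 (triangular CLT).} The event $\{\Fhat(t_m^-)\ge1-\alpha\}$ can be written as
\[
\Big\{\tfrac1{\sqrt m}\sum_i U_{i,m}\ge0\Big\},\qquad U_{i,m}:=v_i\big(\ind{S_i<t_m}-(1-\alpha)\big).
\]
These summands are i.i.d.\ within each row and bounded by $B$. Their mean is $\EE_P U_{i,m}=F_Q(t_m)-(1-\alpha)=-x/\sqrt m$. Their second moment is given by \eqref{eq:secondmoment_main} with $q$ replaced by $t_m$. As $m\to\infty$, $t_m\to q$ (possibly after passing to the endpoints of a flat piece of $F_Q$, where $Q$ has no mass). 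Hence $\EE_Q[v\ind{S\ge t_m}]\to\EE_Q[v\ind{S>q}]$ by dominated convergence, using $v\le B$ and $Q(S=q)=0$, and
\[
\Var_P(U_{i,m})\to s^2:=\alpha^2K^2+(1-2\alpha)\,\EE_Q[v\ind{S>q}].
\]
Lindeberg's condition is automatic for bounded summands. The Lindeberg--Feller CLT therefore gives
\[
\PP\Big(\tfrac1{\sqrt m}\sum_i U_{i,m}\ge0\Big)\to\PP(N(0,s^2)\ge x)=1-\Phi(x/s).
\]
The same argument applies at $t'_m$, with $x$ replaced by $x+\varepsilon$.

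\emph{Step 3 (conclusion).} From the sandwich, the limsup and liminf of $\PP(\sqrt m(P_e-\alpha)>x)$ lie between $1-\Phi((x+\varepsilon)/s)$ and $1-\Phi(x/s)$. Letting $\varepsilon\downarrow0$ proves the convergence in distribution. If $s=0$ the limit is degenerate, and the same bracket gives convergence to $0$. For the special case in the statement, $\{S>q\}=A$ with $Q(A)=\alpha$ on the top set of $v$. Then $\EE_Q[v\ind{S>q}]=\EE_Q[v\ind A]=\Lambda(\alpha)$ by \eqref{eq:Lambda}, since the Hardy--Littlewood supremum is attained by the indicator of the top-$\alpha$ region, and so $s^2=\sigma_\alpha^2$. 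In the example $S=\ind A(X)+U$ we have $q=1$, $\{S>1\}=A$, and $F_Q$ is continuous because $U$ has a density.

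The main obstacle is Step 1. The boundary between the events $\{\tauhat<t\}$ and $\{\tauhat\le t\}$ must be handled, together with possible flat pieces of $F_Q$ near $q$, where the quantile map is discontinuous. The fallback is the $\varepsilon$-bracketing above together with monotonicity of $G$, so that the limit needs only the continuity of $\Phi$.
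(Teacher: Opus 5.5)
Your proof is correct and is essentially the paper's argument: quantile crossing at the moving deterministic level $1-\alpha-x/\sqrt m$, followed by the Lindeberg--Feller CLT for the bounded triangular array $v_i(\ind{S_i<t_m}-(1-\alpha))$, with the variance read off from \eqref{eq:secondmoment_main}. The only difference is technical: the paper passes to $W_i=F_Q(S_i)$, which are uniform under $Q$, so that $F_Q(\tauhat(1-\alpha))$ is exactly the weighted quantile of the $W_i$ and flat pieces of $F_Q$ disappear, whereas you handle them directly with the two-sided sandwich and $\varepsilon\downarrow0$ (your case $s=0$ cannot arise, since $s^2\ge\min(\alpha,1-\alpha)^2K^2>0$).
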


The proof (Appendix~\ref{app:lower}) is the quantile-crossing argument of Theorem~\ref{thm:upper} run at a moving quantile: since $F_Q$ is continuous, $F_Q(\tauhat(\beta))$ is the self-normalized weighted $\beta$-quantile of $W_i:=F_Q(S_i)$, which are uniform under $Q$, and $\{F_Q(\tauhat(\beta))\le\beta+z/\sqrt m\}$ is the event that a sum of i.i.d.\ bounded variables with mean $z/\sqrt m$ and variance converging to the displayed one is nonnegative.

\begin{remark}[The worst case and an exact benchmark]\label{rem:beta}
The worst case for the weighted quantile, $\sigma_\alpha^2=\alpha(1-\alpha)B$, is $v\in\{0,B\}$ $Q$-a.s., i.e.\ $Q_X$ concentrated on a set of $P_X$-mass $1/B$ (then $K^2=B$ and Theorem~\ref{thm:lower} gives the matching rate). There, the weighted quantile is the ordinary empirical quantile of the $N\sim\mathrm{Bin}(m,1/B)$ calibration points that carry weight, so for continuous scores and $N\ge1$,
\[
1-P_e(\tauhat(\beta))\ \big|\ N\ \sim\ \mathrm{Beta}\big(\lceil\beta N\rceil,\,N+1-\lceil\beta N\rceil\big)\qquad\text{when }\lceil\beta N\rceil\le N,
\]
and the set is all of $\cY$ otherwise, as it is when $N=0$. This exact law is checked numerically in Appendix~\ref{app:experiments}; it gives
\[
\sd\big(P_e(\tauhat(1-\alpha))\big)=\sqrt{B\alpha(1-\alpha)/m}\,(1+o(1)),
\]
and the smallest inflation achieving $\PP(P_e>\alpha)\le\delta$ is approximately $z_{1-\delta}\sqrt{B\alpha(1-\alpha)/m}$.
\end{remark}

\section{Clipping and estimated inputs}\label{sec:estimated}

Theorem~\ref{thm:upper} takes the likelihood ratio and its functionals as known. This section treats three distinct departures from that assumption, which are easily conflated. First, the \emph{ratio itself is estimated}, from the training covariates and an unlabeled sample from $Q_X$: Section~\ref{sec:point} gives an oracle inequality for a point estimate and Section~\ref{sec:interval} an exact guarantee for interval-valued estimates. Second, the ratio is known but is \emph{deliberately clipped} to reduce its range at the price of a computable bias: Section~\ref{sec:clip} shows that this trade is available to the weighted quantile and to rejection sampling alike, and uses it to compare the two. Third, the ratio is known but its \emph{population functionals} $K^2$, $\Lambda(\alpha)$, $\rho_\alpha$ and the clipping bias are \emph{estimated} from an unlabeled source sample: Section~\ref{sec:est} makes the certificate rigorous in that case. \citet{pournaderi2026training} handled the first problem by adding the random term $2\sum_i|\widehat V_i-V_i|/\widehat V$ inside the weighted DKW bound; the quantile argument gives a cleaner decomposition.

\subsection{Estimated ratios, point estimates: an oracle inequality}\label{sec:point}

Let $\widehat v\ge0$ be a measurable function independent of the calibration sample (all statements are conditional on the data used to build it), with $0<\EE_P\widehat v<\infty$ and $\widehat B:=\|\widehat v\|_\infty<\infty$. Define
\[
\widetilde v:=\frac{\widehat v}{\EE_P\widehat v},\qquad \widehat Q:=\widetilde v\cdot P,\qquad \widetilde B:=\frac{\widehat B}{\EE_P\widehat v},
\]
\[
\breve F(t):=\frac{\sum_{i}\widehat v(Z_i)\ind{S_i\le t}}{\sum_{j}\widehat v(Z_j)},\qquad \breve\tau(\beta):=\inf\{t:\ \breve F(t)\ge\beta\},
\]
and note that
\[
\frac{d\widehat Q}{dP}=\widetilde v\le\widetilde B,\qquad \EE_P\widetilde v=1,\qquad F_{\widehat Q}(t)=\EE_P\big[\widetilde v\,\ind{S\le t}\big],
\]
and that $\breve\tau$ is unchanged if $\widehat v$ is replaced by $\widetilde v$. Let
\begin{equation}\label{eq:Delta}
\Delta:=\sup_{t\in\R}\big|F_Q(t)-F_{\widehat Q}(t)\big| .
\end{equation}

\begin{theorem}\label{thm:estimated}
For every $\beta\in(0,1)$, $\alpha:=1-\beta$ and $\gamma>0$, with $P_e(\tau)=1-F_Q(\tau)$ the miscoverage under the true $Q$,
\begin{equation}\label{eq:oracle}
\PP\big(P_e(\breve\tau(\beta))>\alpha+\Delta+\gamma\big)\le\min\Big\{e^{-2m\gamma^2/\widetilde B^2},\ \exp\Big(-\frac{m\gamma^2}{2\widetilde B\alpha(1-\alpha)+\frac83\widetilde B\gamma}\Big)\Big\}.
\end{equation}
More generally, every statement of Theorem~\ref{thm:upper} and Corollary~\ref{cor:pac} holds for $\breve\tau(\beta)$ with $(Q,B)$ replaced by $(\widehat Q,\widetilde B)$, and the miscoverage under $Q$ differs from that under $\widehat Q$ by at most $\Delta$. Moreover,
\begin{equation}\label{eq:Delta_bounds}
\Delta\ \le\ \tfrac12\,\EE_P\big|\EE_P[v-\widetilde v\mid S]\big|\ \le\ \TV(Q,\widehat Q)=\tfrac12\|v-\widetilde v\|_{L^1(P)}\ \le\ \|v-\widehat v\|_{L^1(P)} .
\end{equation}
\end{theorem}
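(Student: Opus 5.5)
The plan is to reduce everything to Theorem~\ref{thm:upper} applied to the pair $(P,\widehat Q)$, and then to transfer from $\widehat Q$ to $Q$ at a uniform cost $\Delta$.

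\emph{Step 1: the weighted quantile under $\widehat Q$.} The pair $(P,\widehat Q)$ satisfies every hypothesis of Theorem~\ref{thm:upper}. Indeed $\widehat Q=\widetilde v\cdot P$ is a probability measure, because $\EE_P\widetilde v=1$. Its density $d\widehat Q/dP=\widetilde v$ is bounded by $\widetilde B$. The function $\widetilde v$ is fixed, being independent of the calibration sample. Finally $\breve\tau(\beta)$ is exactly the self-normalized weighted quantile \eqref{eq:tauhat} built from $\widetilde v$, since normalization cancels. The proof of Theorem~\ref{thm:upper} uses only the identity $\EE_P[\widetilde v\,h]=\EE_{\widehat Q}[h]$ and the bound $\widetilde v\le\widetilde B$. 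It never uses that $\widehat Q$ shares the conditional law $P_{Y|X}$, and in any case $\widetilde v$ depends on $x$ only if $\widehat v$ does. Therefore \eqref{eq:hoeff}, \eqref{eq:bern}, and all later statements, including Corollary~\ref{cor:pac}, hold verbatim for $\widehat P_e(\tau):=1-F_{\widehat Q}(\tau)$ with $(Q,B)$ replaced by $(\widehat Q,\widetilde B)$. The tail functionals are then those of $\widetilde v$ under $\widehat Q$. If $\widetilde B<1$ were possible I would note that the proof never actually uses $B\ge1$; in fact $\widetilde B\ge1$ automatically, because $\EE_P\widetilde v=1$.

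\emph{Step 2: transfer to $Q$.} For every $\tau$ we have $|P_e(\tau)-\widehat P_e(\tau)|=|F_Q(\tau)-F_{\widehat Q}(\tau)|\le\Delta$. For $\tau=\pm\infty$ both miscoverages coincide, equal to $0$ or $1$. Hence the event $\{P_e(\breve\tau)>\alpha+\Delta+\gamma\}$ is contained in $\{\widehat P_e(\breve\tau)>\alpha+\gamma\}$. Taking the minimum of the two bounds from Step~1 gives \eqref{eq:oracle}. The same inclusion transfers every upper-tail statement, and the reverse inclusion handles the two-sided part of Corollary~\ref{cor:pac}(ii).

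\emph{Step 3: the chain \eqref{eq:Delta_bounds}.} This is the only genuinely new computation, and I expect the first inequality to be the delicate step. Write $F_Q(t)-F_{\widehat Q}(t)=\EE_P[(v-\widetilde v)\ind{S\le t}]=\EE_P[g(S)\ind{S\le t}]$, where $g(S):=\EE_P[v-\widetilde v\mid S]$ by the tower property. Since $\EE_P g=\EE_Pv-\EE_P\widetilde v=1-1=0$, the same quantity also equals $-\EE_P[g\ind{S>t}]$. Averaging the two expressions gives
\[
|F_Q(t)-F_{\widehat Q}(t)|\le\tfrac12\EE_P|g|.
\]
More sharply, any signed function with mean zero satisfies $|\EE_P[g\,\mathbf 1_E]|\le\EE_P g^+=\tfrac12\EE_P|g|$.

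The second inequality is Jensen's inequality for the conditional expectation, $|\EE[\cdot\mid S]|\le\EE[|\cdot|\mid S]$. Together with the standard formula for total variation between absolutely continuous measures, this gives $\tfrac12\EE_P|v-\widetilde v|=\TV(Q,\widehat Q)$.

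For the last inequality, let $c:=\EE_P\widehat v$. Write $v-\widetilde v=(v-\widehat v)+\widehat v(1-1/c)$. The second term has $L^1(P)$ norm $|c-1|=|\EE_P(\widehat v-v)|\le\|v-\widehat v\|_{L^1(P)}$. Hence $\|v-\widetilde v\|_{L^1}\le2\|v-\widehat v\|_{L^1}$, and halving gives the claim.
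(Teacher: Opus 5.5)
Your proposal is correct and follows the paper's proof essentially step for step: conditionally on $\widehat v$, apply Theorem~\ref{thm:upper} and Corollary~\ref{cor:pac} to the pair $(P,\widehat Q)$ and transfer to $Q$ at the uniform cost $\Delta$; then obtain \eqref{eq:Delta_bounds} from $g(S)=\EE_P[v-\widetilde v\mid S]$ with $\EE_Pg=0$, Jensen's inequality, and $\|\widetilde v-\widehat v\|_{L^1(P)}=|1-\EE_P\widehat v|$. One small correction to an aside: the proof of Theorem~\ref{thm:upper} does use $B\ge1$ (Step~4 needs it to bound the range of the centred summands by $B$), so your remark otherwise is inaccurate, but this is harmless because, as you note, $\widetilde B\ge1$ automatically.
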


The statistical term in \eqref{eq:oracle} depends on $(P,\widehat v)$ only through $\widetilde B=\|\widehat v\|_\infty/\EE_P\widehat v$; the normalizer $\EE_P\widehat v$ is in general unknown, like the functionals of Remark~\ref{rem:known}, and in practice $\widehat v$ is normalized to unit empirical mean on an independent source sample, so that $\widetilde B\approx\max\widehat v$ up to the estimation of that mean (a lower confidence bound on $\EE_P\widehat v$, as in Section~\ref{sec:est}, makes it rigorous). The bias $\Delta$ is unknown, and \eqref{eq:oracle} is a PAC guarantee at level $\alpha+\Delta$. The marginal counterpart of the bias is known---\citet[Prop.~1]{lei2021conformal} bound the marginal coverage loss by $\frac12\EE_P|\widehat w-w|$, and \citet{barber2023conformal} bound the coverage gap of a fixed-weight quantile by a weighted total variation---so the content of Theorem~\ref{thm:estimated} is that the same $L^1$ quantity controls the whole distribution of the training-conditional coverage, together with the first inequality in \eqref{eq:Delta_bounds}: only the part of the weight error that is correlated with the score, $\EE_P[v-\widetilde v\mid S]$, matters. Errors in directions of $\cX$ that do not change the conditional law of the score are harmless, and Section~\ref{sec:exp_estimated} shows that $\Delta$ can be an order of magnitude below $\TV(Q,\widehat Q)$. Two further comments. First, since $(P,Q)$ and $(P,\widehat Q)$ generate identical data, no procedure that observes only $(D,\widehat v)$ can tell which of the two targets it is calibrating for; the guarantee of the true-ratio quantile therefore cannot in general be transferred to the estimated-ratio quantile without information that controls the discrepancy $\Delta$ (a quantitative efficiency statement would need a separate argument). Second, independence of $\widehat v$ from the calibration sample is free when $\widehat v$ is built from the training split and unlabeled target covariates; using the calibration covariates as well would require uniformity over the class of possible $\widehat v$, i.e.\ exactly the empirical-process machinery that Theorem~\ref{thm:upper} avoids.

\subsection{Estimated ratios, interval estimates: exact PAC guarantees}\label{sec:interval}

If instead one has $v_{\rm lo}\le v\le v_{\rm hi}$ (pointwise, or on an event of probability $1-\delta'$ over the data used to build them), the exact PAC guarantee can be retained by a conservative quantile, in the spirit of the interval-valued weights of \citet[Sec.~3.3]{park2022pac}, at an observable price.

\begin{proposition}\label{prop:interval}
Let $0\le v_{\rm lo}\le v\le v_{\rm hi}\le B_{\rm hi}$ pointwise, with $v_{\rm lo},v_{\rm hi}$ independent of the calibration sample. Let
\[
\begin{gathered}
A_{\rm lo}(t):=\sum_iv_{\rm lo}(Z_i)\ind{S_i\le t},\qquad C_{\rm hi}(t):=\sum_iv_{\rm hi}(Z_i)\ind{S_i>t},\\
\breve F(t):=\frac{A_{\rm lo}(t)}{A_{\rm lo}(t)+C_{\rm hi}(t)}
\end{gathered}
\]
(with $0/0:=0$) and $\breve\tau(\beta):=\inf\{t:\breve F(t)\ge\beta\}$. Then
\begin{enumerate}[label=(\roman*),leftmargin=2em,itemsep=0pt]
\item $\breve F$ is a nondecreasing right-continuous step function with $\breve F\le\Fhat$ pointwise, so $\breve\tau(\beta)\ge\tauhat(\beta)$, $P_e(\breve\tau(\beta))\le P_e(\tauhat(\beta))$, and Theorem~\ref{thm:upper} and Corollary~\ref{cor:pac}(i) hold for $\breve\tau(\beta)$ with $B$ replaced by $B_{\rm hi}$;
\item with the observable
\[
\widehat\rho:=\frac{\sum_i\big(v_{\rm hi}(Z_i)-v_{\rm lo}(Z_i)\big)}{\sum_iv_{\rm lo}(Z_i)}
\]
(and $\widehat\rho:=+\infty$ when the denominator vanishes), $\breve F\ge\Fhat-2\widehat\rho$ pointwise, hence $\breve\tau(\beta)\le\tauhat(\beta+2\widehat\rho)$; if $F_Q$ is continuous, $\alpha\le1/2$ and $\beta+2\rho_0+\gamma<1$,
\[
\PP\big(P_e(\breve\tau(\beta))<\alpha-2\rho_0-\gamma\big)\le\PP(\widehat\rho>\rho_0)+\exp\Big(-\frac{m\gamma^2}{2B_{\rm hi}\alpha(1-\alpha)+\frac83B_{\rm hi}\gamma}\Big),
\]
and, when $\EE_Pv_{\rm lo}>0$, $\widehat\rho$ concentrates around $\EE_P[v_{\rm hi}-v_{\rm lo}]/\EE_P v_{\rm lo}$, the relative $L^1(P)$ width of the intervals.
\end{enumerate}
\end{proposition}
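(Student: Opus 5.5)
The plan is to prove (i) by direct comparison with $\Fhat$, and (ii) by a pointwise perturbation bound followed by the mirror-image (lower-tail) argument of Corollary~\ref{cor:pac}(ii).

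\emph{Part (i).} First, $A_{\rm lo}(t)$ is nondecreasing and $C_{\rm hi}(t)$ is nonincreasing. Both are right-continuous step functions with jumps only at the $S_i$. Since $x\mapsto x/(x+y)$ is increasing in $x$ and decreasing in $y$, $\breve F$ is nondecreasing and right-continuous.

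For the comparison, write $A(t)=\sum_iv_i\ind{S_i\le t}$ and $C(t)=\sum_iv_i\ind{S_i>t}$, so that $\Fhat=A/(A+C)$. Pointwise we have $A_{\rm lo}\le A$ and $C_{\rm hi}\ge C$, and the same monotonicity gives $\breve F\le \Fhat$. The $0/0$ conventions need one check: if $A+C=0$ then $\Fhat\equiv0$ and $A_{\rm lo}=0$, so $\breve F=0$ as well. Hence $\breve\tau(\beta)\ge\tauhat(\beta)$, and because $F_Q$ is nondecreasing, $P_e(\breve\tau)\le P_e(\tauhat)$. Theorem~\ref{thm:upper} and Corollary~\ref{cor:pac}(i) then apply with $B$ replaced by $B_{\rm hi}\ge B$. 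Their bounds are monotone in $B$, and $v\le v_{\rm hi}\le B_{\rm hi}$, so the substitution is legitimate.

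\emph{Part (ii), the pointwise bound.} Let $W:=\sum_iv_{\rm lo}(Z_i)$ and $D:=\sum_i(v_{\rm hi}-v_{\rm lo})(Z_i)$, so $\widehat\rho=D/W$. Write $A=A_{\rm lo}+a$ and $C_{\rm hi}=C+c$, where $a,c\ge0$ and $a+c\le D$. Then
\[
\Fhat-\breve F=\frac{A}{A+C}-\frac{A_{\rm lo}}{A_{\rm lo}+C+c}.
\]
Splitting this difference into the effect of removing $a$ from the numerator and the effect of adding $c$ to the denominator bounds it by
\[
\frac{a}{A+C}+\frac{c}{A_{\rm lo}+C_{\rm hi}}.
\]
Both denominators are at least $W$, because $A+C=\sum v_i\ge W$ and $A_{\rm lo}+C_{\rm hi}\ge W$. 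This gives $\Fhat-\breve F\le 2D/W=2\widehat\rho$; the factor 2 leaves slack, and the crude bound suffices. When $W=0$ we have $\widehat\rho=\infty$ and nothing needs proving.

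Consequently $\breve F(t)\ge\Fhat(t)-2\widehat\rho$, so whenever $\Fhat(t)\ge\beta+2\widehat\rho$ we also have $\breve F(t)\ge\beta$, which yields $\breve\tau(\beta)\le\tauhat(\beta+2\widehat\rho)$.

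\emph{Part (ii), the probability bound.} On the event $\{\widehat\rho\le\rho_0\}$ we get $\breve\tau(\beta)\le\tauhat(\beta')$ with $\beta'=\beta+2\rho_0$, and hence $P_e(\breve\tau)\ge P_e(\tauhat(\beta'))$. It therefore remains to bound $\PP(P_e(\tauhat(\beta'))<1-\beta'-\gamma)$. This is the over-coverage event, and I would handle it as in the mirror argument of Corollary~\ref{cor:pac}(ii). By continuity of $F_Q$, the event forces $\tauhat(\beta')>q:=q_Q(\beta'+\gamma)$, hence $\Fhat(q)<\beta'$. This is the event
\[
\sum_iv_i(\ind{S_i\le q}-\beta')<0,
\]
where the summands are i.i.d. with mean $\gamma$ and second moment at most $B\,\EE_Q(\ind{S\le q}-\beta')^2$. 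Applying Bernstein's inequality with variance bounded by $B\alpha(1-\alpha)$ plus lower-order terms, and range bounded by $B_{\rm hi}$, gives the displayed exponential.

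The main obstacle is matching the exact constants $2B_{\rm hi}\alpha(1-\alpha)+\frac83B_{\rm hi}\gamma$ in this last step. The level $\beta'\ne\beta$ shifts the variance away from $\alpha(1-\alpha)$, and the range of the summands is $\beta' B$ rather than $\alpha B$. I would first try absorbing these discrepancies into the generous $\frac83$ linear term, as in the proof of \eqref{eq:bern}. Failing that, I would note that $\alpha\le1/2$ makes $\beta'(1-\beta')\le\alpha(1-\alpha)$ whenever $\beta'\ge\beta$, which controls the variance term directly.

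Finally, the concentration of $\widehat\rho$ follows from the strong law of large numbers (or Hoeffding) applied separately to the numerator and denominator, both of which have bounded summands.
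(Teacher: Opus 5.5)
Your proposal is correct and follows the paper's proof. For (i) it uses the monotonicity of $x/(x+y)$; for (ii) a pointwise perturbation bound yields $\breve\tau(\beta)\le\tauhat(\beta+2\widehat\rho)$, followed on $\{\widehat\rho\le\rho_0\}$ by the mirror-image Bernstein argument of Corollary~\ref{cor:pac}(ii) at the deterministic level $\beta'=\beta+2\rho_0$; Hoeffding's inequality then gives the concentration of $\widehat\rho$.

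Your two-step split, with both denominators bounded below by $\sum_iv_{\rm lo}(Z_i)$, is only an algebraic variant of the paper's chain through $(\Fhat-\rho)/(1+\rho)$, and since $a+c\le D$ it even gives constant $1$. The constants you were unsure about close directly. The second moment of the summands is at most $B_{\rm hi}\big[\beta'(1-\beta')-(2\beta'-1)\gamma\big]$, which is at most $B_{\rm hi}\alpha(1-\alpha)$ because $\beta'\ge\beta\ge1/2$. The one-sided range $\beta'B_{\rm hi}+\gamma$ is below $B_{\rm hi}$ because $\gamma<1-\beta'$ and $B_{\rm hi}\ge1$.
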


The two subsections are complementary: a point estimate yields an exact statistical term and an unknown bias (possible under-coverage), an interval estimate yields an exact PAC guarantee and an observable over-coverage. Both prices are controlled from above by the $L^1(P)$ estimation error of the likelihood ratio, which is an upper bound and can be loose, since by \eqref{eq:Delta_bounds} only the score-projected error matters for $\Delta$. For a $(0.05,0.05)$-PAC guarantee $\Delta$ must be well below $\alpha$ and $\widetilde B$ must stay moderate; in Section~\ref{sec:exp_estimated} a well-specified parametric ratio model (logistic regression of ``target vs.\ source'' on the covariates, exact for exponential tilts) achieves this, whereas the kernel density-ratio estimate tested there, in five dimensions and at the sample sizes used, does not, mainly because it inflates $\widetilde B$.

\subsection{Clipping a known ratio and a certified frontier}\label{sec:clip}

Theorem~\ref{thm:estimated} has a second use: the ``estimate'' $\widehat v$ may be chosen deliberately. Clipping the likelihood ratio at a level $B'\in[1,B]$ trades a known bias for a smaller effective range, and the same trade is available to rejection sampling. Let
\[
\begin{gathered}
v_{B'}:=\min(v,B'),\qquad \widetilde v_{B'}:=\frac{v_{B'}}{\EE_Pv_{B'}},\qquad Q_{B'}:=\widetilde v_{B'}\cdot P,\\
\Delta_{B'}:=\TV(Q,Q_{B'})=\tfrac12\|v-\widetilde v_{B'}\|_{L^1(P)},
\end{gathered}
\]
a computable function of $v$ (through source covariates) with $\Delta_B=0$; note that $\Delta_1=\TV(Q,Q_1)$ with $\widetilde v_1=\min(v,1)/\EE_P\min(v,1)$, which differs from $\TV(Q,P)$ in general (see the second caution below).

\begin{proposition}[Clipped procedures]\label{prop:clip}
Fix $B'\in[1,B]$, $\alpha\in(0,1/2]$ and $\delta\in(0,1)$.
\begin{enumerate}[label=(\roman*),leftmargin=2em,itemsep=0pt]
\item (Clipped weighted quantile.) Let $\alpha':=\alpha-\Delta_{B'}>0$ and let $\gamma_{B'}$ be the inflation \eqref{eq:gamma_tail}, or the Bentkus inflation of Proposition~\ref{prop:bentkus}, computed for the target $\alpha'$ and the pair $(Q_{B'},\|\widetilde v_{B'}\|_\infty)$, i.e.\ with $K^2,\Lambda(\alpha'),\rho_{\alpha'}$ of $\widetilde v_{B'}$ and range
\[
\|\widetilde v_{B'}\|_\infty=\frac{\min(M,B')}{\mu_{B'}}\le\frac{B'}{\mu_{B'}},\qquad M:=\operatorname{ess\,sup}v,\quad \mu_{B'}:=\EE_Pv_{B'}
\]
(the larger bound $B'/\mu_{B'}$ is what is used when only $B\ge M$ is known). The weighted quantile with weights $v_{B'}(Z_i)$ at level $1-\alpha'+\gamma_{B'}=1-\alpha+\Delta_{B'}+\gamma_{B'}$ satisfies $\PP(P_e>\alpha)\le\delta$.
\item (Clipped rejection sampling.) Accepting each calibration point with probability equal to $\min(v_i/B',1)$ yields an i.i.d.\ sample from $Q_{B'}$; the Clopper--Pearson set of \citet{park2022pac} built on it at miscoverage level $\alpha-\Delta_{B'}$ satisfies $\PP(P_e>\alpha)\le\delta$.
\end{enumerate}
In both cases the certified over-coverage is a function of $B'$ alone---$\Delta_{B'}+\gamma_{B'}$ for (i), and $\Delta_{B'}+\big(\alpha-\Delta_{B'}-k^\star/\bar N\big)$ for (ii), where $\bar N=m\,\EE_P[v_{B'}]/B'$ is the expected accepted size and $k^\star$ the number of exceedances the Clopper--Pearson rule allows at that size---so the clip level can be chosen, before any label is seen, to minimize it.
\end{proposition}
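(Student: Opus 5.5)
Both parts follow one pattern: view the clipped procedure as a calibration for the surrogate target $Q_{B'}$, obtain a PAC guarantee there from an earlier result, and pay $\Delta_{B'}$ to move back to $Q$. The basic transfer fact is that, for any threshold $\tau$ depending only on the calibration data,
\[
|P_e(\tau)-P_e^{B'}(\tau)|=|Q(S>\tau)-Q_{B'}(S>\tau)|\le\TV(Q,Q_{B'})=\Delta_{B'},
\]
where $P_e^{B'}(\tau):=1-F_{Q_{B'}}(\tau)$. Hence $\{P_e>\alpha\}\subseteq\{P_e^{B'}>\alpha-\Delta_{B'}\}=\{P_e^{B'}>\alpha'\}$. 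It therefore suffices to show in each case that the surrogate miscoverage exceeds $\alpha'$ with probability at most $\delta$. This is precisely the route of Theorem~\ref{thm:estimated}, with $\widehat v=v_{B'}$, which is deterministic and so independent of the calibration sample. Only the TV bound from \eqref{eq:Delta_bounds} is needed, not the sharper score-projected one.

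For (i), apply Corollary~\ref{cor:pac}(i) (or Proposition~\ref{prop:bentkus}) to the pair $(P,Q_{B'})$ at target $\alpha'$.
\begin{itemize}
\item The hypotheses hold: $Q_{B'}\ll P$, its density $\widetilde v_{B'}$ is bounded by $\|\widetilde v_{B'}\|_\infty=\min(M,B')/\mu_{B'}$, and $\alpha'\le\alpha\le1/2$.
\item The self-normalized quantile built with weights $v_{B'}(Z_i)$ equals the one built with $\widetilde v_{B'}(Z_i)$, since normalization cancels.
\item Corollary~\ref{cor:pac}(i) then gives $\PP(P_e^{B'}>\alpha')\le\delta$ at level $1-\alpha'+\gamma_{B'}$, and the inclusion above yields $\PP(P_e>\alpha)\le\delta$.
\end{itemize}
I will also check the degenerate cases. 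If $\gamma_{B'}\ge\alpha'$, the set is $\cY$ and validity is trivial. Using the looser range $B'/\mu_{B'}$ in place of the exact one is harmless, because the Bernstein/Bentkus bounds are monotone in the range parameter.

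For (ii), compute directly the law of an accepted point. Conditional on acceptance, $Z$ has density proportional to $\min(v,B')/B'$ with respect to $P$, which after normalization is exactly $\widetilde v_{B'}$. Acceptance decisions are independent across $i$, so conditional on the accepted count $N$ the accepted points are i.i.d.\ from $Q_{B'}$. The Clopper--Pearson guarantee of \citet{park2022pac} conditional on $N$ then gives $\PP(P_e^{B'}>\alpha'\mid N)\le\delta$. Integrating over $N$ (with $N=0$ giving $\cY$) and applying the TV inclusion concludes.

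The closing claim is bookkeeping. In (i) the certified level gap is $\Delta_{B'}+\gamma_{B'}$, and $\gamma_{B'}$ depends only on $m,\delta,\alpha'$ and the functionals of $\widetilde v_{B'}$. In (ii) the expected accepted size is $\bar N=m\,\EE_P[v_{B'}]/B'$ and the CP exceedance count $k^\star$ at that size is fixed. Both quantities are functions of $B'$ and the law of $v$ under $P$, so they can be minimized before labels are seen. The only step needing care is the sign convention in the transfer. We need the upper inclusion, and TV gives it in both directions; the real subtlety is the rejection-sampling step, where conditioning on $N$ must preserve i.i.d.-ness, and independent Bernoulli acceptance guarantees this.
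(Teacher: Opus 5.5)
Your proposal is correct and follows the paper's own proof: part (i) is Theorem~\ref{thm:estimated} with $\widehat v=v_{B'}$, i.e.\ Corollary~\ref{cor:pac}(i) or Proposition~\ref{prop:bentkus} applied to $(P,Q_{B'})$ at target $\alpha'$ followed by a shift of $\Delta_{B'}$, and part (ii) is the Clopper--Pearson guarantee of \citet{park2022pac} for $Q_{B'}$ followed by the same total-variation transfer. One small correction: the paper explicitly notes that $\mathrm{Bk}_m$ is not monotone in its variance and range arguments, so using the looser range $B'/\mu_{B'}$ is harmless for a different reason than the one you give, namely that Bernstein's and Bentkus's inequalities only require upper bounds on the variance and range of the summands.
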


Part (i) is Theorem~\ref{thm:estimated} with $\widehat v=v_{B'}$, $\widetilde B=B'/\mu_{B'}$ and $\Delta\le\Delta_{B'}$, applied at the target $\alpha'$ (note that the clipped range bound is $B'/\mu_{B'}>B'$ in general, since $\mu_{B'}\le1$, and that the functionals must be taken at $\alpha'$ rather than $\alpha$, because $\sigma_a^2$ is not monotone in $a$); part (ii) is the PAC guarantee of \citet{park2022pac} for the target $Q_{B'}$ together with $|F_Q-F_{Q_{B'}}|\le\Delta_{B'}$. Since $B'$ is chosen from population functionals of $v$ (in practice from an independent source sample) and not from the calibration data, the selection does not affect validity. Two cautions. First, the certified over-coverage is a quantile-level quantity used to \emph{select} $B'$; the guarantee is the validity statement, not the value of the over-coverage, and for rejection sampling the criterion is evaluated at the expected accepted size and is a selection proxy. Second, clipping at $B'=1$ does not in general recover the source distribution: $\widetilde v_1=\min(v,1)/\mu_1$ is constant only when $v\ge1$ a.s., and the acceptance probabilities $\min(v_i,1)$ are all one only in that case. In the rare-subgroup designs of Section~\ref{sec:exp_frontier} the bulk has $v=1-\pi\approx1$ and $B'=1$ is nearly, but not exactly, plain Clopper--Pearson calibration of the source scores; the latter, with the allowance $\TV(Q,P)$, is a separate deterministic candidate that we also report. Proposition~\ref{prop:clip} places the two constructions on a common footing and makes the comparison quantitative. Writing $L=\log(1/\delta)$, $z=z_{1-\delta}$ and $a:=\alpha-\Delta_{B'}$ for the effective target, and keeping only the leading terms in $1/\sqrt m$, the certified over-coverages of the clipped weighted quantile and of clipped rejection sampling are
\begin{equation}\label{eq:proxies}
\Delta_{B'}+\sqrt{\frac{2L\,\sigma_a^2(Q_{B'})}{m}}\qquad\text{and}\qquad \Delta_{B'}+z\sqrt{\frac{a(1-a)B'}{m\,\mu_{B'}}},
\end{equation}
where $\sigma_a^2(Q_{B'})$ is the variance proxy of $\widetilde v_{B'}$ at the target $a$. These are leading-order proxies for the certified over-coverage, not exact finite-sample comparisons of widths (the map from over-coverage to width depends on the score distribution, and the lower-order terms differ), and the statements below are to be read as such. Three consequences follow.
\begin{itemize}[leftmargin=1.5em,itemsep=1pt]
\item \emph{Unclipped, the weighted quantile has the smaller proxy iff}
\begin{equation}\label{eq:crossover}
\sigma_\alpha^2<\frac{z^2}{2L}\,\alpha(1-\alpha)B
\end{equation}
(for $\delta<1/2$, so that $z>0$). For small $\alpha$, where $\sigma_\alpha^2\approx\alpha\,\mathrm{CVaR}_\alpha(v)$, this reads $\mathrm{CVaR}_\alpha(v)\lesssim0.45B$ at $\delta=0.05$ ($0.59B$ at $\delta=0.01$, $0.69B$ at $\delta=10^{-3}$). The factor $z^2/2L$ is the price of a distribution-free concentration inequality against an exact binomial tail; it is what rejection sampling buys with its subsampling.
\item \emph{As $m\to\infty$ with the shift fixed}, any clip with $\Delta_{B'}>0$ is eventually inferior to one with $\Delta_{B'}=0$, so both optima approach the zero-bias set $\{B':\Delta_{B'}=0\}$ (an optimal clip may retain a positive bias that tends to zero). Let $M:=\operatorname{ess\,sup}v\le B$. When $v$ is nonconstant on $\{v>0\}$ the zero-bias set is $[M,B]$, since a clip below $M$ changes the normalized ratio on a set of positive probability; when $v=c\ind{A}$ it is all of $[1,B]$. Over the zero-bias set the weighted quantile is unchanged, whereas the effective range $B'/\mu_{B'}$ of rejection sampling is smallest at $B'=M$ (where $\mu_M=1$; for $v=c\ind A$, at any $B'\le c=M$), larger clips only discarding more observations. The optimized leading-order comparison is therefore the first item with $B$ replaced by $M$: the weighted quantile has the smaller proxy iff \eqref{eq:crossover} holds with $B$ replaced by $M$ ($\delta<1/2$; equality requires the lower-order terms). The distinction matters when $B$ is a loose bound: for $v\equiv1$ with $B=4$ and $\alpha=\delta=0.05$, the first item favours the weighted quantile against unclipped rejection sampling, but optimized rejection sampling clips at $B'=1$ and has leading proxy $0.358/\sqrt m$ against $0.533/\sqrt m$ for the Bernstein-inflated quantile.
\item \emph{At finite $m$}, clipping is cheap when the shift is small in total variation. The extreme case is to ignore the shift altogether: since
\[
|Q(S>\tau)-P(S>\tau)|\le\TV(Q,P)\qquad\text{for every }\tau,
\]
plain Clopper--Pearson calibration of the source scores at level $a_0:=\alpha-\TV(Q,P)$ is a deterministic PAC procedure when $0<\TV(Q,P)<\alpha$ (an alternative to the CP-C baseline of \citet{park2022pac}, which uses the level $\alpha/B$, and the less conservative of the two exactly when $\TV(Q,P)\le\alpha(1-1/B)$; it is void when $\TV(Q,P)\ge\alpha$ and is unweighted Clopper--Pearson PAC calibration when $\TV(Q,P)=0$). Against it, for $\delta<1/2$ and $\sqrt{2L\sigma_\alpha^2}>z\sqrt{a_0(1-a_0)}$, the unclipped weighted quantile has the smaller proxy iff $m>m^\star$, where
\begin{equation}\label{eq:mstar}
m^\star:=\bigg[\frac{\sqrt{2L\sigma_\alpha^2}-z\sqrt{a_0(1-a_0)}}{\TV(Q,P)}\bigg]^2.
\end{equation}
\end{itemize}
The weighted quantile is therefore the large-$m$, small-$\delta$ and heavy-tail end of the frontier, and clipped rejection sampling the small-$m$, mild-shift end. In the running example, clipping at $B'=1$ removes the subgroup's excess weight entirely: the bias is $\Delta_1=0.010$, the subgroup's excess test mass, so the effective target drops from $0.05$ to $0.04$ while the range drops from $21$ to $1.01$; at $B'=5$ the bias is still $0.008$ for a range of $5.0$. At $\alpha=\delta=0.05$ the certified criterion selects the extreme clip for rejection sampling at $m\le5000$ and no clip at $m=20000$, and levels $5$, $10$ and $21$ for the weighted quantile at $m=2000$, $5000$ and $20000$; Section~\ref{sec:exp_frontier} shows both ends of the frontier.

\subsection{Known ratio, estimated functionals: certificates from an unlabeled source sample}\label{sec:est}

The tail-adaptive and clipped certificates above depend on population functionals of $v$ under $P_X$ (the sup-$B$ Hoeffding and Bernstein certificates, CP-C and rejection sampling as published need only $B$): $K^2$, $\Lambda(\alpha)$, $\rho_\alpha$, and for the clipped versions $\mu_{B'}$ and $\Delta_{B'}$. Remark~\ref{rem:known} noted that they are typically estimated from an unlabeled source sample. Here we make the certificate rigorous in that situation: the functionals are replaced by simultaneous upper confidence bounds computed from one half of the source sample, the clip level and two auxiliary grid points are chosen on the other half, and the confidence level of the bounds is charged to $\delta$. The construction uses two facts: the bias of clipping is dominated by a single bounded expectation, and $\Lambda$ has a variational representation whose minimizer can be chosen on the independent half.

\begin{lemma}\label{lem:clipbias}
For $B'\in[1,B]$,
\[
\Delta_{B'}\ \le\ 1-\mu_{B'}\ =\ \EE_P(v-B')^+,
\]
and for $a\in(0,1]$ and any $s\ge0$,
\[
\Lambda(a)=\inf_{t\ge0}\big\{at+\EE_Q(v-t)^+\big\}\le a\,s+\EE_P\big[v(v-s)^+\big],
\]
with equality at $s=\rho_a$. The same holds for $Q_{B'}$ and $\widetilde v_{B'}$ in place of $Q$ and $v$.
\end{lemma}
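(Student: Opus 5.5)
The lemma has two independent parts. Each reduces to a short computation with the definitions \eqref{eq:tailfunctionals}--\eqref{eq:Lambda} and the change of measure $\EE_Q[g]=\EE_P[vg]$.

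\emph{Clipping bias.} Write $\mu:=\mu_{B'}=\EE_P\min(v,B')\le\EE_Pv=1$. Then $\Delta_{B'}=\frac12\EE_P|v-v_{B'}/\mu|$. We split the integrand as
\[
v-\frac{v_{B'}}{\mu}=(v-v_{B'})-v_{B'}\Big(\frac1\mu-1\Big).
\]
Both pieces are nonnegative functions, and their $P$-expectations are $1-\mu$ and $\mu(1/\mu-1)=1-\mu$. Hence the first piece, $v-v_{B'}$, has $L^1(P)$ norm $1-\mu$. The second piece, $v_{B'}(1/\mu-1)$, also has norm $1-\mu$. The triangle inequality gives $\|v-\widetilde v_{B'}\|_{L^1(P)}\le2(1-\mu)$, so $\Delta_{B'}\le1-\mu$. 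Finally, $1-\mu=\EE_P[v-\min(v,B')]=\EE_P(v-B')^+$ because $\EE_Pv=1$; this identity is immediate.

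\emph{Variational formula for $\Lambda$.} This is the Rockafellar--Uryasev representation of CVaR, and I would prove it in two directions. For the upper bound, take any $h$ with $0\le h\le1$ and $\EE_Qh\le a$, and any $t\ge0$. Then
\[
\EE_Q[vh]=\EE_Q[th]+\EE_Q[(v-t)h]\le at+\EE_Q(v-t)^+ .
\]
Taking the supremum over $h$ (which equals $\Lambda(a)$ by \eqref{eq:Lambda}) and the infimum over $t$ gives $\Lambda(a)\le\inf_t\{\cdot\}$. For equality, set $t=\rho_a=F_v^{-1}(1-a)$ and choose $h=\ind{v>\rho_a}+\theta\ind{v=\rho_a}$, with $\theta\in[0,1]$ picked so that $\EE_Qh=a$. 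Such a $\theta$ exists because $Q(v>\rho_a)\le a\le Q(v\ge\rho_a)$ by the definition of the quantile; this atom-handling step is the only point needing care. With this $h$,
\[
\EE_Q[vh]=\rho_a a+\EE_Q[(v-\rho_a)h]=a\rho_a+\EE_Q(v-\rho_a)^+,
\]
since $(v-\rho_a)h=(v-\rho_a)^+$ pointwise. Hence the infimum equals $\Lambda(a)$ and is attained at $t=\rho_a$. Restricting to $t\ge0$ is harmless because $\rho_a\ge0$. The displayed inequality at an arbitrary $s\ge0$ then follows by evaluating at $t=s$ and rewriting $\EE_Q(v-s)^+=\EE_P[v(v-s)^+]$. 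Equality at $s=\rho_a$ was just shown.

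\emph{Clipped versions.} For the clipped measure, the statements about $\Lambda$ used only that $Q\ll P$ with density $v\ge0$. They therefore apply verbatim to $Q_{B'}=\widetilde v_{B'}\cdot P$, with the functionals defined through the law of $\widetilde v_{B'}$ under $Q_{B'}$. For the bias statement, "the same holds" should be read as exactly this application to the $\Lambda$ formula. If it is meant literally (clipping $\widetilde v_{B'}$ further), the bias argument applies unchanged, since it used only $\EE_P v=1$, and that holds for $\widetilde v_{B'}$. I expect no real obstacle; the main thing to get right is the randomized $h$ on the atom at $\rho_a$.
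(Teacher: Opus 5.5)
Your proof is correct. The clipping-bias half is the paper's argument: the same split of $v-v_{B'}/\mu_{B'}$ into two nonnegative pieces, each with $P$-mass $1-\mu_{B'}$.

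For the variational formula the paper gives no argument of its own. It cites the Rockafellar--Uryasev representation, which identifies the quantile integral $\int_{1-a}^1F_v^{-1}(u)\,du$ directly with $\inf_t\{at+\EE_Q(v-t)^+\}$ and its minimizer with $F_v^{-1}(1-a)$, and then changes measure. You instead prove a primal--dual statement. Weak duality is $\EE_Q[vh]\le at+\EE_Q(v-t)^+$ for every feasible $h$ and every $t\ge0$. Strong duality comes from the explicit randomized maximizer $h=\ind{v>\rho_a}+\theta\ind{v=\rho_a}$. You reach the integral definition of $\Lambda$ only through the supremum form in \eqref{eq:Lambda}.

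Your route is self-contained and handles an atom of $v$ at $\rho_a$ explicitly, which is exactly why the paper allows $[0,1]$-valued $h$ in \eqref{eq:Lambda}. Its cost is that it rests on the Hardy--Littlewood identity in \eqref{eq:Lambda}, which the paper only asserts, whereas the cited representation starts from the integral. You can remove that dependence by writing $v$ under $Q$ as $F_v^{-1}(U)$ with $U$ uniform. Then $\int_{1-a}^1F_v^{-1}\le at+\EE_Q(v-t)^+$ for every $t$, with equality at $t=\rho_a$ because $(F_v^{-1}(u)-\rho_a)^+=(F_v^{-1}(u)-\rho_a)\ind{u>1-a}$.

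One boundary remark: your claim $\rho_a\ge0$ uses $a<1$, or the convention that $F_v^{-1}(0)$ is the essential infimum of $v$ under $Q$. The statement leaves this case equally implicit.
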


Let $X'_1,\dots,X'_n\sim P_X$ be i.i.d.\ and independent of the calibration data, with $v$ known and $v\le B$ for a known $B$, as in \citet{park2022pac}, and split the index set into $I_1$ and $I_2$ of sizes $n_1$ and $n_2$. For a function $g$ with values in $[0,R]$ and a level $\eta\in(0,1)$, let $U_\eta(g)$ denote an upper confidence bound for $\EE_Pg(v)$ computed from $\{v(X'_j)\}_{j\in I_2}$, i.e.\ a statistic with
\[
\PP\big(\EE_Pg(v)\le U_\eta(g)\big)\ge1-\eta;
\]
Hoeffding's bound $\bar g+R\sqrt{\log(1/\eta)/(2n_2)}$ is one choice, and the empirical Bernstein bound of \citet[Thm.~4]{maurer2009empirical},
\begin{equation}\label{eq:eb}
U_\eta(g)=\bar g+\sqrt{\frac{2\widehat{\Var}(g)\log(2/\eta)}{n_2}}+\frac{7R\log(2/\eta)}{3(n_2-1)},
\end{equation}
with $\widehat{\Var}$ the sample variance with denominator $n_2-1$, is much better when $g(v)$ is large only on a rare event, which is the case of interest.

\begin{proposition}[Certificate with estimated functionals]\label{prop:est}
Fix $\alpha\in(0,1/2]$, $0<\delta'<\delta<1$, and let $B'\in[1,B]$ and two grid points $s_L,s_R\in[0,B']$ be arbitrary functions of $\{v(X'_j)\}_{j\in I_1}$ (and of $m$, $\alpha$, $\delta$); $s_L$ enters the bound on $\Lambda$ and $s_R$ the test that certifies $\rho$. Let $J=3$ for the Bernstein inflation \eqref{eq:gamma_tail}, $J=4$ for the Bentkus inflation, $\eta:=\delta'/J$, and compute from $I_2$, with $v_{B'}=\min(v,B')$ and with $r_+:=0$ when $B'=B$ (the function $(v-B')^+$ has range $B-B'$), the quantities
\begin{align*}
r_+&:=U_\eta\big((v-B')^+\big),\qquad \mu_-:=1-r_+,\qquad \alpha_+:=\alpha-r_+,\qquad \widetilde B_+:=B'/\mu_-,\\
K^2_+&:=\min\big\{U_\eta(v_{B'}^2)/\mu_-^2,\ \widetilde B_+\big\},\\
\Lambda_+&:=\min\big\{\alpha_+s_L/\mu_-+U_\eta\big(v_{B'}(v_{B'}-s_L)^+\big)/\mu_-^2,\ \alpha_+\widetilde B_+,\ K^2_+\big\},\\
\rho_+&:=s_R/\mu_-\ \text{ if }\ U_\eta\big(v_{B'}\ind{v_{B'}>s_R}\big)\le\alpha_+\mu_-,\ \text{ and }\ \rho_+:=\widetilde B_+\ \text{ otherwise (Bentkus only)}.
\end{align*}
Suppose $n_2\ge2$ and $m>\frac23\log(1/(\delta-\delta'))$ (the domain of \eqref{eq:gamma_tail}); if $\alpha_+\le0$ the remaining quantities are not needed and the output below is $\cY$. Otherwise let $\widehat\gamma$ be the inflation \eqref{eq:gamma_tail}, or that of Proposition~\ref{prop:bentkus}, evaluated with
\[
(\alpha,K^2,\Lambda(\alpha),\rho_\alpha,B,\delta)\quad\text{replaced by}\quad(\alpha_+,K^2_+,\Lambda_+,\rho_+,\widetilde B_+,\delta-\delta').
\] Let $\breve\tau$ be the weighted quantile with weights $v_{B'}(Z_i)$ at level $1-\alpha+r_++\widehat\gamma$ if $\alpha_+>0$, $\widehat\gamma$ is finite and nonnegative and $\widehat\gamma+r_+<\alpha$, and $\breve\tau:=+\infty$ (the set is all of $\cY$) otherwise, including when $m\le\frac23\log(1/(\delta-\delta'))$. Then $\PP(P_e(\breve\tau)>\alpha)\le\delta$, the probability being over the source sample and the calibration data jointly. Likewise, with $J=1$, the clipped rejection-sampling set of Proposition~\ref{prop:clip}(ii) at miscoverage level $\alpha_+$ and confidence $1-(\delta-\delta')$ (all of $\cY$ if $\alpha_+\le0$ or no threshold qualifies) satisfies $\PP(P_e>\alpha)\le\delta$.
\end{proposition}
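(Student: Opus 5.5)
The proof conditions on the source sample and combines a good event with the calibration-level guarantees already established. Since $B'$, $s_L$ and $s_R$ are functions of $I_1$ only, they are fixed once we condition on $I_1$. The $J$ statistics $U_\eta(\cdot)$ are computed from the independent half $I_2$. A union bound then gives an event $G$ of probability at least $1-J\eta=1-\delta'$ on which every upper confidence bound holds simultaneously:
\begin{itemize}
\item $\EE_P(v-B')^+\le r_+$;
\item $\EE_Pv_{B'}^2\le U_\eta(v_{B'}^2)$;
\item $\EE_P[v_{B'}(v_{B'}-s_L)^+]\le U_\eta(\cdot)$;
\item for Bentkus, $\EE_P[v_{B'}\ind{v_{B'}>s_R}]\le U_\eta(\cdot)$.
\end{itemize}
When $B'=B$ the first of these is trivial. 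The plan is to show that on $G$ each plug-in quantity dominates the corresponding population functional of the normalized clipped ratio $\widetilde v_{B'}$ at a target no smaller than the effective one. Since the calibration sample is independent of the source sample, Corollary~\ref{cor:pac}(i) (or Proposition~\ref{prop:bentkus}) applied conditionally at level $\delta-\delta'$ then gives total failure probability at most $\delta'+(\delta-\delta')=\delta$.

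\textbf{Step 1 (dominance on $G$).} By Lemma~\ref{lem:clipbias}, $\Delta_{B'}\le1-\mu_{B'}\le r_+$, so $\mu_{B'}\ge\mu_-$ and $\|\widetilde v_{B'}\|_\infty\le B'/\mu_{B'}\le\widetilde B_+$. This also gives $K^2(\widetilde v_{B'})=\EE_Pv_{B'}^2/\mu_{B'}^2\le U_\eta/\mu_-^2$, and $K^2\le\|\widetilde v\|_\infty\le\widetilde B_+$.

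Set $\alpha':=\alpha-\Delta_{B'}\ge\alpha_+$. I would apply Theorem~\ref{thm:estimated} at target $\alpha_+$ rather than $\alpha'$: a miscoverage under $Q_{B'}$ of at most $\alpha_+$ yields miscoverage under $Q$ of at most $\alpha_++\Delta_{B'}\le\alpha$. The weighted quantile is run at level $1-\alpha_++\widehat\gamma$, so all functionals are needed at $a=\alpha_+$.

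For $\Lambda$, use the variational bound of Lemma~\ref{lem:clipbias} applied to $\widetilde v_{B'}$ at the point $s=s_L/\mu_{B'}$. Writing $\widetilde v=v_{B'}/\mu_{B'}$, this gives
\[
\Lambda(\alpha_+)\le\alpha_+s_L/\mu_{B'}+\EE_P[v_{B'}(v_{B'}-s_L)^+]/\mu_{B'}^2,
\]
and each term is monotone in $1/\mu_{B'}\le1/\mu_-$. One caveat: the Lemma's inequality holds for the given $s$ with the normalized ratio, so I would pick $s=s_L/\mu_{B'}$ exactly, which is legitimate because the infimum holds for every $s\ge0$. The other two caps in $\Lambda_+$ follow from $\Lambda(a)\le\min(aB,K^2)$.

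For $\rho$, if $\EE_Q\ind{\widetilde v>s_R/\mu_{B'}}=\EE_P[v_{B'}\ind{v_{B'}>s_R}]/\mu_{B'}\le\alpha_+$, then $\rho_{\alpha_+}\le s_R/\mu_{B'}\le s_R/\mu_-$. The test compares $U_\eta$ with $\alpha_+\mu_-\le\alpha_+\mu_{B'}$, so passing it implies the population condition. Otherwise $\rho\le\widetilde B_+$.

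\textbf{Step 2 (monotonicity of the inflation).} This is the main obstacle. I must check that the failure bound of Corollary~\ref{cor:pac}(i) is nonincreasing when $K^2,\Lambda,\rho,B$ are replaced by larger values. The exponent denominator $2\alpha^2K^2+2(1-2\alpha+2\gamma)\Lambda+\tfrac23(\alpha B+\gamma)\gamma$ has nonnegative coefficients for $\alpha\le1/2$, so enlarging the inputs only shrinks the bound. Consequently a $\widehat\gamma$ computed from the plug-ins satisfies the true bound at $\delta-\delta'$.

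For Bentkus, I would invoke the known monotonicity of $\mathrm{Bk}_m(x;\sigma^2,b)$ in $\sigma^2$ and $b$ (Bentkus's comparison is over a class enlarged by increasing these). Since $\sigma^2(t)$ is increasing in $K^2,\Lambda,\rho$, the supremum is preserved.

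The remaining conditions are side conditions: $\widehat\gamma+r_+<\alpha$ ensures a level below one, and the degenerate outputs $\cY$ are trivially valid.

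\textbf{Step 3 (rejection sampling, $J=1$).} On the single event $r_+\ge\EE_P(v-B')^+\ge\Delta_{B'}$, Proposition~\ref{prop:clip}(ii) run at level $\alpha_+\le\alpha-\Delta_{B'}$ and confidence $\delta-\delta'$ gives $Q_{B'}$-miscoverage at most $\alpha_+$, and hence $P_e\le\alpha$. The probabilities add as before.
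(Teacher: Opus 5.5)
Your overall structure matches the paper's proof:
\begin{enumerate}
\item a union-bound event over the $J$ confidence bounds computed on $I_2$, which is legitimate because $B',s_L,s_R$ are fixed given $I_1$;
\item domination of $\|\widetilde v_{B'}\|_\infty$, $K^2$, $\Lambda(\alpha_+)$ and $\rho_{\alpha_+}$ of the pair $(Q_{B'},\widetilde v_{B'})$ by the plug-ins, through Lemma~\ref{lem:clipbias} at $s=s_L/\mu_{B'}$;
\item a conditional application at target $\alpha_+$ and level $\delta-\delta'$;
\item the transfer $P_e(\breve\tau)\le1-F_{Q_{B'}}(\breve\tau)+\Delta_{B'}\le\alpha_++r_+=\alpha$.
\end{enumerate}
The Bernstein branch is correct as you argue it. The closed-form denominator $2\alpha_+^2K^2+2(1-2\alpha_++2\gamma)\Lambda+\frac23(\alpha_+B+\gamma)\gamma$ of Corollary~\ref{cor:pac}(i) is nondecreasing in $(K^2,\Lambda,B)$. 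So the bound at the true functionals is at most the plug-in bound, which equals $\delta-\delta'$ at $\widehat\gamma$. One wording fix: enlarging the inputs \emph{weakens} the probability bound rather than ``shrinking'' it, and that is the direction you need. Because this final expression no longer contains $\rho$, $J=3$ suffices.

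\textbf{The gap is in the Bentkus branch.} You compare $\sup_t\mathrm{Bk}_m(mt;\sigma^2(t),b(t))$ at the true functionals with the same supremum at the plug-ins. To do so you invoke a ``known monotonicity'' of $\mathrm{Bk}_m(x;\sigma^2,b)$ in $\sigma^2$ and $b$. No such monotonicity holds in general, and the paper explicitly disclaims it (Appendix~\ref{app:experiments}, paragraph on the Bentkus inflation). Already for $m=1$, $x=1/2$, $\sigma^2=1$, the range $b=1/2$ gives $\PP^\circ(\varepsilon\ge1/2)=0.8$, while $b=1$ gives $2^{-3/4}\approx0.59$. So the bound decreases as the range parameter grows.

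Enlarging the class $\{\EE X=0,\ X\le b,\ \Var X\le\sigma^2\}$ does not make the bound \emph{function} monotone. It only means Bentkus's theorem may be \emph{applied} with larger parameters. That is the repair, and it must be made inside the proof of Proposition~\ref{prop:bentkus}, not on its conclusion:
\begin{itemize}
\item Let $\beta=1-\alpha_++\widehat\gamma$, let $q$ be the $(1-\alpha_+)$-quantile of the score under $Q_{B'}$, and let $t\in[\widehat\gamma,\beta]$ be the unknown true deficit.
\item The summands $U_i=\widetilde v_{B'}(Z_i)\,(\ind{S_i<q}-\beta)$ satisfy $\Var U_i\le(\alpha_+-\widehat\gamma)^2K^2+(1-2\alpha_++2\widehat\gamma)\big[\Lambda(\alpha_+)+(t-\widehat\gamma)\rho_{\alpha_+}\big]\le\sigma_+^2(t)$. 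Here $\rho_{\alpha_+}\le\rho_+$ enters only as a bound on the slope of the concave $\Lambda$ at $\alpha_+$.
\item The one-sided range satisfies $U_i-\EE U_i\le(\alpha_+-\widehat\gamma)\widetilde B_++t=:b_+(t)$.
\item Bentkus's theorem needs only upper bounds on the variance and the one-sided range. It therefore gives $\PP(\overline U-\EE\overline U\ge t)\le\mathrm{Bk}_m(mt;\sigma_+^2(t),b_+(t))\le\sup_{t'\in[\widehat\gamma,\beta]}\mathrm{Bk}_m(mt';\sigma_+^2(t'),b_+(t'))\le\delta-\delta'$ directly.
\end{itemize}
This route never compares values of $\mathrm{Bk}_m$ at different parameters. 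Your parenthetical about the enlarged class is exactly this argument; the monotonicity claim it is attached to is false and should be dropped.
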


The proof is in Appendix~\ref{app:estimated}. Three points about the construction. \emph{Independence and confidence allocation.} The clip level $B'$ and the grid points $s_L,s_R$ are chosen on $I_1$, so each confidence bound on $I_2$ is applied to a fixed function conditionally on $I_1$; a plain union bound over the $J\le4$ bounds charges $\delta'$ once, and the inflation is computed at $\delta-\delta'$. $B'$ can be selected as in Proposition~\ref{prop:clip}, by minimizing the certified over-coverage $r_++\widehat\gamma$ recomputed with $I_1$ in the role of $I_2$, and $s_L$ is the empirical $(1-\alpha_+)$-quantile of $v_{B'}$ under the clipped test law on $I_1$, the minimizer in Lemma~\ref{lem:clipbias}; a suboptimal $s_L$ only loosens $\Lambda_+$. \emph{Fallbacks.} The output is $\cY$ when $\alpha_+\le0$, when $m$ is below the domain of \eqref{eq:gamma_tail}, or when the certified over-coverage $r_++\widehat\gamma$ reaches $\alpha$; and the certificate of $\rho_{\alpha_+}$ is a one-sided test whose failure returns the range $\widetilde B_+$, which is valid but can be far from $\rho_{\alpha_+}$. \emph{The margin rule for $s_R$.} Where that test is placed matters. At the empirical quantile $s_L$ the estimated tail mass is about $\alpha_+\mu_-$ and the confidence bound exceeds it by its half-width, so the test fails with a probability that does not vanish as $n_2$ grows; at the smallest $I_1$-value where the $I_1$-version of the test passes, the $I_2$-bound differs from the $I_1$-bound by a symmetric fluctuation of order $n^{-1/2}$, and the failure probability tends to one half. We therefore select with a margin that dominates this fluctuation and still vanishes. Writing $U^{(1)}_\eta$, $\alpha^{(1)}_+$, $\mu^{(1)}_-$ for the quantities of Proposition~\ref{prop:est} computed on $I_1$ and $\widehat{\Var}_1$ for the sample variance on $I_1$, we take
\begin{equation}\label{eq:margin}
s_R:=\min\Big\{s\in\{v_{B'}(X'_j)\}_{j\in I_1}:\ U^{(1)}_\eta\big(v_{B'}\ind{v_{B'}>s}\big)+\kappa_1(s)\le\alpha^{(1)}_+\mu^{(1)}_-\Big\},
\end{equation}
with
\[
\kappa_1(s):=\sqrt{\frac{2\,\widehat{\Var}_1\big(v_{B'}\ind{v_{B'}>s}\big)\log n_1}{n_1}},
\]
and $s_R:=B'$ if the set is empty; $\kappa_1$ is the variance term of the empirical Bernstein half-width at confidence level $2/n_1$. Proposition~\ref{prop:margin} in Appendix~\ref{app:margin} shows that, with the empirical Bernstein bounds and under mild regularity of the law of $v_{B'}$ (no atom at the target quantile, positive mass around it), the test on $I_2$ then passes with probability tending to one and $\rho_+$ converges to the quantile the certificate targets; the appendix also treats atoms and gives an example with $B=100$ in which the fallback would cost ten percent of the inflation.

Two further remarks. First, rejection sampling as published needs no functional beyond $B$ and pays nothing here; the price of the weighted quantile's sharper certificate is that its inputs must be estimated, and Section~\ref{sec:exp_est} quantifies that price. Second, the estimation error enters in two places with different sensitivities: the bias bound $r_+$ is added to the over-coverage directly, so its half-width, the last two terms of \eqref{eq:eb} with range $R=B-B'$, must be small compared with $\widehat\gamma\approx\sqrt{2L\sigma_\alpha^2/m}$, whereas the bounds on $K^2$ and $\Lambda$ enter under a square root, with range $B'^2$. For a bounded quantity with range $R_g$ the range term dominates when $n_2\Var(g)\lesssim R_g^2\log(1/\eta)$; for the tail functionals of the running example, whose variance is driven by the few subgroup points, this is the regime of a small source sample: the certification half of a $10^4$-point sample contains on average $2.5$ subgroup points. Clipping has two opposite effects: a lower $B'$ reduces the range $B'^2$ of the moment estimates, but the bias bound $r_+$, with range $B-B'$, grows and enters the over-coverage directly, which can favour $B'=B$ when the source sample is small. For the unclipped certificate, the source sample needed for the estimation error in $K^2$ and $\Lambda$ to be negligible is of order $B^2\log(1/\eta)/\sigma_\alpha^2$.

\section{Experiments}\label{sec:experiments}

Code reproducing every figure and table is available in the repository listed in the Supplementary Material. Throughout, ``fail'' denotes the fraction of calibration draws with $P_e(D)>\alpha$, which the PAC guarantee bounds by $\delta$; ``none'' is the weighted quantile at level $1-\alpha$; ``Hoeff.''\ and ``Bern.\ $B$'' the weighted quantile at level $1-\alpha+\gamma_m(\delta)$ with the Hoeffding and the sup-$B$ Bernstein inflation of \eqref{eq:gamma_m}; ``Bern.\ tail'' the weighted quantile with the tail-adaptive inflation \eqref{eq:gamma_tail}, where $K^2$, $\Lambda(\alpha)$ and $B$ are the population functionals of $v$ (exact for the synthetic shifts of Sections~\ref{sec:exp_smooth} and \ref{sec:exp_frontier}--\ref{sec:exp_est}, Appendix~\ref{app:experiments}, and computed on the pool for real data); ``WQ-Bk'' the same with the Bentkus inflation of Proposition~\ref{prop:bentkus}; ``RS+CP'' the rejection-sampling Clopper--Pearson set of \citet{park2022pac} (accept each calibration point with probability $v_i/B$, then choose the smallest threshold whose Clopper--Pearson upper confidence bound on the miscoverage, at confidence $1-\delta$, is at most $\alpha$); ``IW-HB'' and ``IW-EB'' the importance-weighted learn-then-test procedure of \citet{almeida2025high}, which tests, for each threshold $\lambda$ of a fixed grid in decreasing order, the null hypothesis $\EE_P[v\ind{S>\lambda}]>\alpha$ with the importance-weighted $0$--$1$ loss $v_i\ind{S_i>\lambda}\in[0,B]$ and stops at the first non-rejection, using either the Hoeffding--Bentkus p-value of \citet{bates2021distribution} on the loss rescaled to $[0,1]$ or the empirical Bernstein rule
\[
\widehat R+\widehat\sigma\sqrt{2\log(2/\delta)/m}+\frac{7B\log(2/\delta)}{3(m-1)}\le\alpha
\]
(both in their Appendix~B; their preferred betting p-value depends on the order of the data and is not reproduced here); and ``empirical oracle'' the weighted quantile with the smallest inflation that achieves $\PP(P_e>\alpha)\le\delta$ on the simulated draws themselves, found by bisection; being tuned on the draws on which it is evaluated, it is an in-sample benchmark for the unclipped weighted-quantile inflation family, not a procedure and not a lower bound for other certificates; its gap from the certificates reflects both the replacement of the actual score--weight coupling by $\Lambda(\alpha)$ and the looseness of the concentration inequality. We use $\delta=0.05$ except in Table~\ref{tab:frontier}, which also reports $\delta=0.01$.

\subsection{Smooth covariate shift with known weights}\label{sec:exp_smooth}

Next, $X\sim\mathrm{Unif}[-1,1]^5$ under $P$, $Q_X$ has density proportional to $e^{cx_1}$ on the cube, so that $v(x)=e^{cx_1}/\EE_Pe^{cX_1}$ and $B=e^{c}/\EE_Pe^{cX_1}$; \[
Y=1+X_1+0.5X_2-0.5X_3^2+\big(0.5+(X_1+1)\big)\varepsilon,\qquad \varepsilon\sim N(0,1),
\]
so that the noise level, and hence the score distribution, depends on the shifted coordinate; $\widehat\mu$ is the least-squares fit on an independent training sample of size $1000$ and $s(x,y)=|y-\widehat\mu(x)|$. Training-conditional coverage is evaluated on $3\times10^5$ test points from $Q$, and the population functionals of $v$ are exact (Appendix~\ref{app:experiments}). Table~\ref{tab:smooth} reports, for $m=2000$, $\alpha=\delta=0.05$ and $c\in\{0,1,2,3\}$, the procedures that are compared on the frontier in Section~\ref{sec:exp_frontier}: the tail-adaptive Bernstein inflation \eqref{eq:gamma_tail} (WQ), the Bentkus inflation of Proposition~\ref{prop:bentkus} (WQ-Bk) and its clipped version at the certified clip level of Section~\ref{sec:clip}, rejection sampling with a Clopper--Pearson bound (RS+CP) and its clipped version, and the two concentration-based importance-weighted learn-then-test procedures of \citet{almeida2025high} (IW-HB, IW-EB; Section~\ref{sec:exp_frontier} describes them), with the uninflated weighted quantile as the reference.

\begin{table}[tb]
\centering\small
\caption{Smooth shift, $m=2000$, $\alpha=\delta=0.05$, $1000$ replications, exact population functionals. Widths are medians of $2\tauhat$ relative to the uninflated set; clip levels in parentheses. WQ: tail-adaptive Bernstein inflation; WQ-Bk: Bentkus inflation; RS+CP: rejection sampling with Clopper--Pearson; IW-HB, IW-EB: importance-weighted learn-then-test with Hoeffding--Bentkus and empirical Bernstein p-values (Section~\ref{sec:exp_frontier}).}
\label{tab:smooth}
{\setlength{\tabcolsep}{4pt}\input{tables/tab_smooth.tex}}
\end{table}

For an exponential tilt the supremum is attained on a set of large test probability, so $\mathrm{CVaR}_\alpha(v)$ is close to $B$ ($3.97$ vs.\ $4.07$ for $c=2$) and this is essentially a worst-case setting for the weighted quantile's bound, although not for the weighted quantile itself: $K^2=1+\chi^2(Q\|P)$ is only $2.1$ for $B=4.07$, and the oracle inflation is accordingly small ($5.8\%$ extra width). The uninflated quantile fails for about half of the calibration sets, and every certified procedure meets the target. The Hoeffding inflation (not shown) is infeasible as soon as $B>1$, and the sup-$B$ Bernstein inflation \eqref{eq:gamma_m} gives $10$--$41\%$ extra width, against $8$--$20\%$ for the tail-adaptive inflation, whose gain comes from the lower-order terms since $\Lambda(\alpha)\approx\alpha B$ here, and $7$--$16\%$ for the Bentkus inflation. Clipping buys nothing on this shift: the certified clip level is $B$ itself for the weighted quantile in every row and for rejection sampling in all but the last, where a level of $5.6$ is certified marginally better but not narrower in the realized sets, as the crossover condition of Section~\ref{sec:clip} predicts when $\mathrm{CVaR}_\alpha/B\approx1$. Rejection sampling is the narrowest procedure in this regime, $5$--$12\%$ wider than the uninflated set and $2$--$5$ width points narrower than WQ-Bk, and it uses most of its $\delta$ budget (it fails for $3$--$5\%$ of the calibration sets); its coverage is as variable across calibration sets as that of the uninflated quantile (standard deviation $0.0045$--$0.0092$ against $0.0050$--$0.0096$ as $B$ grows from $1$ to $6$), whereas the inflated weighted quantiles are less variable ($0.004$--$0.007$); its variability across reruns on a fixed calibration set is measured in Section~\ref{sec:exp_frontier}. The importance-weighted learn-then-test set with the Hoeffding--Bentkus p-value is within a width point of WQ-Bk ($1.070$--$1.171$ against $1.071$--$1.164$), and the one with the empirical Bernstein p-value is $11$--$32\%$ wider than the uninflated set. This mild-shift regime is the one where the exact binomial tail of rejection sampling is expected to win (Section~\ref{sec:clip}); the rare-event designs of Section~\ref{sec:exp_frontier} reverse the order.

\subsection{Estimated likelihood ratios}\label{sec:exp_estimated}

In the design of Section~\ref{sec:exp_smooth} with $c=2$ ($B=4.07$) and $\alpha=0.1$, we estimate $v$ from $n_w$ source and $n_w$ target covariates, independent of the calibration sample, by (a) logistic regression of the sample indicator on $x$, which is well specified here, and (b) a Gaussian kernel density ratio; the estimate is normalized to unit mean on the source covariates and $\widetilde B$ is the supremum of the normalized estimate, analytic for (a) and, for (b), the empirical maximum over $5\times10^4$ source draws, which is not a certified supremum (Remark~\ref{rem:known}). Table~\ref{tab:estimated} reports the $L^1(P)$ error, $\TV(Q,\widehat Q)$, the discrepancy $\Delta$ of \eqref{eq:Delta}, and the failure fractions of the uninflated and the Bernstein-inflated weighted quantile with estimated weights, all evaluated under the true $Q$. With the well-specified logistic estimate, the $L^1$ error of $\widehat v$ decreases from $0.12$ ($n_w=250$ and $1000$) to $0.04$ ($n_w=4000$) and $\TV(Q,\widehat Q)$ from $0.058$ to $0.020$, while the discrepancy $\Delta$ that actually enters the bound is $0.002$--$0.005$ in all three cases, at the resolution of its own Monte Carlo evaluation, i.e.\ $4$--$28$ times smaller than $\TV(Q,\widehat Q)$ and $9$--$56$ times smaller than the $L^1$ error. At $n_w\le1000$ the estimate also overstates the range ($\widetilde B=6.2$--$6.5$ against the true $B=4.07$), which inflates $\gamma_m(\delta)$ from $0.045$ to $0.055$--$0.057$; this is the other cost of a poor estimate. Since $\Delta$ is so small, the uninflated procedure behaves as with the true weights (it fails for $45$--$57\%$ of the calibration sets, with median coverage $0.90$), and the Bernstein inflation never fails at the nominal level $\alpha$ in $500$ replications because $\gamma_m(\delta)\gg\Delta$; had $\Delta$ been larger one could have inflated by it (unknown in practice) or used interval weights (Proposition~\ref{prop:interval}). The kernel density ratio in five dimensions behaves differently: its $L^1$ error is $0.69$ and $\TV(Q,\widehat Q)=0.19$, yet $\Delta=0.003$, because most of its error lies in covariate directions that do not affect the score distribution; the estimate is unusable for a PAC guarantee nonetheless, because its (empirical) supremum $\widetilde B=36$ makes the inflation infeasible ($\gamma_m(\delta)=0.20>\alpha$). This is the practical content of Theorem~\ref{thm:estimated}: a ratio estimate has to be accurate on the score scale \emph{and} bounded to be useful for training-conditional guarantees.
\begin{table}[tb]
\centering\small
\caption{Estimated weights (smooth shift, $c=2$, $m=2000$, $\alpha=0.1$, $\delta=0.05$, $500$ replications). The Bernstein inflation uses $\widetilde B$ from the estimate; failures are counted at the nominal $\alpha$ (not at $\alpha+\Delta$).}
\label{tab:estimated}
\resizebox{\linewidth}{!}{\input{tables/tab_estimated.tex}}
\end{table}

\subsection{Real data}\label{sec:exp_real}

Finally we revisit the four UCI regression datasets of \citet{pournaderi2026training} (Wine quality, red and white combined; Abalone; Concrete compressive strength; Combined cycle power plant), with the same exponential tilts $\exp(x^\top\beta_{\rm data})$ to define the shifted test population, so that $v$ is known up to a normalization, computed on the pool defined next. For each dataset a random forest is fitted once on a training split; the remaining $n_{\rm pool}$ points form a pool whose empirical distribution plays the role of $P$ and whose $v$-tilt plays the role of $Q$, so that calibration sets of size $m$ drawn with replacement from the pool are i.i.d.\ from $P$, the training-conditional coverage is the $v$-weighted coverage on the pool, and the functionals of $v$ are exact. Since $\gamma_m(\delta)<\alpha$ requires $m$ of order $B\log(1/\delta)/\alpha$, we use $\alpha=0.1$ and larger calibration sets than in \citet{pournaderi2026training}. Table~\ref{tab:real} reports the procedures of Table~\ref{tab:smooth}, once with the original tilts and once with the tilts rescaled so that $B=4$.

\begin{table}[tb]
\centering\small
\caption{UCI datasets with exponential-tilt covariate shift, $\alpha=0.1$, $\delta=0.05$, $1000$ calibration draws from the pool; procedures and conventions as in Table~\ref{tab:smooth}; sample sizes in Appendix~\ref{app:experiments}. ``$\infty$'' means that the set is all of $\cY$ for more than half of the draws.}
\label{tab:real}
{\footnotesize\setlength{\tabcolsep}{3.5pt}\input{tables/tab_real.tex}}
\end{table}

The question here is which certificates are feasible on real shifts and at what width. With the original tilts the ratio bounds are large ($B$ between $7$ and $58$) but, since the tilts act on unbounded features, the supremum is attained on a small part of the test population ($\mathrm{CVaR}_\alpha/B$ between $0.47$ and $0.89$). The sup-$B$ Bernstein inflation (not shown) is infeasible on three of the four datasets, whereas the tail-adaptive certificates are feasible on all four; the Bentkus inflation is $2$--$44$ width points narrower than the Bernstein one, the largest gain on Concrete ($m=500$), where the Bernstein inflation is close to $\alpha$. Among all valid sets, clipped rejection sampling is the narrowest on every original tilt, by $2$--$24$ width points over the clipped Bentkus-inflated quantile, in line with the crossover condition of Section~\ref{sec:clip} for $\mathrm{CVaR}_\alpha/B$ of about $0.5$ or more and calibration sets of a few thousand; the learn-then-test set with the Hoeffding--Bentkus p-value is wider than WQ-Bk on three datasets and equal on the fourth, and the empirical Bernstein variant is infeasible on three. Rescaling the tilts to $B=4$ brings $\mathrm{CVaR}_\alpha$ to $2$--$3.5$ and the weighted-quantile and rejection-sampling variants within a few width points of each other: WQ-Bk is within one width point of unclipped rejection sampling on all four datasets (narrower on Wine quality) and within two points of clipped rejection sampling on the three larger ones; the exception is Concrete, the smallest calibration set, where clipped rejection sampling is $9$ points narrower. Clipping selects $B'=B$ for the weighted quantile on every $B=4$ tilt and $B'\approx3$ for rejection sampling. Unclipped rejection sampling fails for $2.4$--$4.3\%$ of the calibration draws in every row, close to its nominal $\delta=5\%$; the inflated weighted quantiles fail for at most $0.3\%$, the usual slack of a distribution-free bound.

\subsection{The frontier: widths of valid sets when the supremum is a rare-event quantity}\label{sec:exp_rare}\label{sec:exp_frontier}

The settings above have $\mathrm{CVaR}_\alpha(v)\approx B$ or $B/2$. We now take shifts for which the supremum is attained on a small part of the test population, and compare all valid procedures by the cost at which they certify $\PP(P_e\le\alpha)\ge1-\delta$: the median width of the set relative to the uninflated weighted quantile (validity is checked in every cell). Two designs, both with the covariate distribution, least-squares predictor and absolute-residual score of Section~\ref{sec:exp_smooth}. \emph{Rare over-represented subgroup} (the running example of the introduction): $Q_X=(1-\pi)P_X+\pi P_X(\cdot\mid A)$ with $A=\{x_1>1-2p_A\}$, $P_X(A)=p_A$, so $v=(1-\pi)+\pi\ind A/p_A$; a subgroup that is a fraction $p_A$ of the training population is a fraction $\approx\pi<\alpha$ of the test population; the noise level is largest on $A$, so the score tail is positively associated with the high-ratio region; $(\pi,p_A)\in\{(0.01,5\cdot10^{-4}),(0.02,10^{-3}),(0.01,2\cdot10^{-4})\}$, i.e.\ $B\in\{21,21,51\}$, with $\TV(Q,P)\approx\pi$. \emph{Gaussian mean shift:} $x_1\sim N(0,1)$ truncated to $[-4,4]$ under $P$ and tilted by $e^{x_1}$ under $Q$, a heavy-tailed ratio with $B=33$, $\mathrm{CVaR}_\alpha=13$, $K^2=2.6$ and $\TV(Q,P)=0.38$; since $x_1$ ranges over $[-4,4]$ here rather than $[-1,1]$, the response model is
\[
Y=1+0.5X_1+0.5X_2-0.5X_3^2+\big(0.5+0.3|X_1|\big)\varepsilon,
\]
so that the noise level increases with $|X_1|$. Table~\ref{tab:frontier} reports, for $\delta\in\{0.05,0.01\}$ and $m\in\{2000,5000,20000\}$, the unclipped tail-adaptive weighted quantile with the Bernstein and with the Bentkus inflation, rejection sampling as published, the certified-optimally clipped versions of all three (Proposition~\ref{prop:clip}, clip level chosen separately for each by minimizing its certified over-coverage on a geometric grid, from the exact functionals), the two deterministic source-calibration baselines CP-C and CP+TV, the two learn-then-test variants, and the empirical oracle; Figure~\ref{fig:frontier} plots the competitive procedures against $m$.

\begin{figure}[tb]
\centering\includegraphics[width=\linewidth]{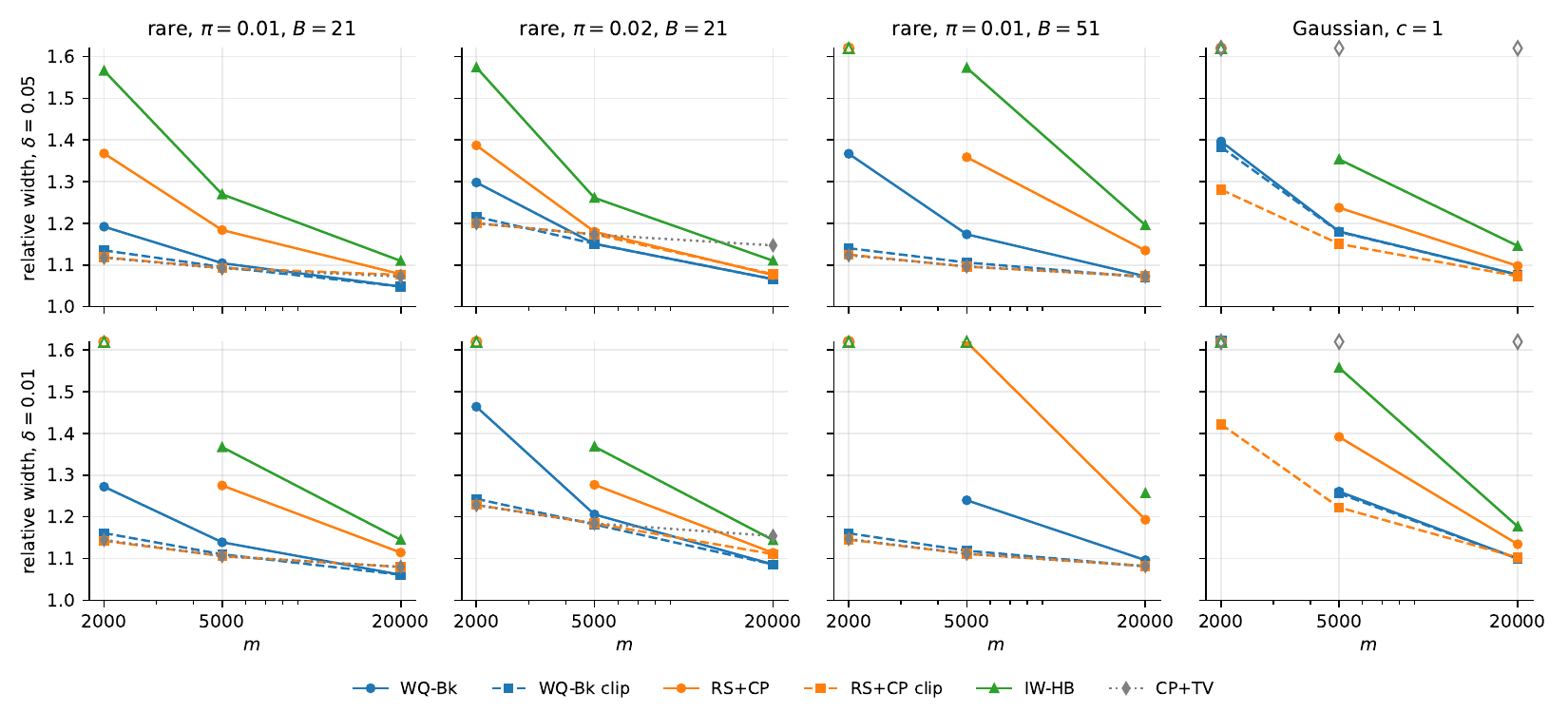}
\caption{The frontier of Table~\ref{tab:frontier}: median width relative to the uninflated weighted quantile against the calibration size $m$, for $\alpha=0.05$ and $\delta\in\{0.05,0.01\}$ (rows) in the four rare-event designs (columns). Solid lines are the unclipped procedures, dashed lines their certified-optimally clipped versions; hollow markers at the top of a panel mark cells in which the procedure returns $\cY$ for more than half of the draws or has relative width above $1.6$. Clipped rejection sampling leads at $m=2000$ in every panel; the clipped weighted quantile overtakes it between $m=2000$ and $5000$ in the $\pi=0.02$ design, between $5000$ and $20000$ in the other rare-subgroup designs and in the Gaussian shift at $\delta=0.01$ (by a few thousandths there), and not by $m=20000$ in the Gaussian shift at $\delta=0.05$, where clipped rejection sampling remains narrower ($1.073$ against $1.077$). The learn-then-test set with the Hoeffding--Bentkus p-value is wider than WQ-Bk throughout.}
\label{fig:frontier}
\end{figure}
\begin{table}[tb]
\centering\small
\caption{Widths of PAC sets relative to the uninflated weighted quantile ($\alpha=0.05$, $600$ replications; every entry has empirical failure fraction $\le\delta$ up to Monte Carlo error, standard error $\approx0.01$; widths are medians over all draws with the set $\cY$ counted as infinite width, ``$\infty$'' when more than half of the draws give $\cY$, and a superscript gives the percentage of draws giving $\cY$ when it is positive). ``WQ'' is the tail-adaptive weighted quantile with the Bernstein inflation and ``WQ-Bk'' with the Bentkus inflation, ``RS'' rejection sampling with Clopper--Pearson; ``clip'' the certified-optimally clipped version of each with its clip level $B'$ in parentheses; ``CP-C'' the source Clopper--Pearson baseline of \citet{park2022pac} at level $\alpha/B$, ``CP+TV'' source Clopper--Pearson at level $\alpha-\TV(Q,P)$, ``IW-HB'' and ``IW-EB'' importance-weighted learn-then-test \citep{almeida2025high} with the Hoeffding--Bentkus and the empirical Bernstein p-value. The narrowest certificate in each row is in bold; the empirical oracle is not a certificate.}
\label{tab:frontier}
\resizebox{\linewidth}{!}{\input{tables/tab_frontier.tex}}
\end{table}

The question is where the crossover of Section~\ref{sec:clip} falls when the supremum is a rare-event quantity, and the answer is visible in the figure: at $m=2000$ the clipped procedures are narrowest and clipped rejection sampling leads the clipped weighted quantile by $1$--$2$ width points in the rare-subgroup designs and by $10$--$21$ points in the Gaussian shift; at $m=5000$ the two are within a width point in the rare-subgroup designs (the weighted quantile ahead at $\pi=0.02$) and rejection sampling is $3$ points ahead in the Gaussian shift; at $m=20000$ the Bentkus-inflated weighted quantile, clipped where the certified level is below $B$, is the narrowest certificate in seven of the eight rows and within half a width point in the eighth, the certified clip levels having moved to $B'=B$ in five of them. The crossover sizes $m^\star$ of Section~\ref{sec:clip} for the three rare-subgroup designs at $\delta=0.05$ are $7{,}500$, $4{,}300$ and $20{,}200$ with the Bernstein constant and about a quarter smaller with the Bentkus constant, consistent with these outcomes. Three further findings. \emph{Unclipped}, the Bentkus-inflated weighted quantile is narrower than rejection sampling as published in every one of the $21$ cells in which the latter is feasible ($1$--$26\%$) and feasible in the three cells where it returns $\cY$; this is the regime of Remark~\ref{rem:scale}, where rejection sampling keeps $m/B$ points while the weighted quantile's certificate scales with $\mathrm{CVaR}_\alpha(v)\le0.45B$. \emph{The other deterministic procedures} are wider: CP-C certifies sets $1.7$--$2.3$ times the uninflated width in every cell; the learn-then-test set with the Hoeffding--Bentkus p-value is wider than rejection sampling wherever both are feasible and $4$--$44\%$ wider than WQ-Bk (equivalently, WQ-Bk is $4$--$30\%$ narrower), and the empirical Bernstein variant is infeasible in $14$ of the $24$ cells, its lower-order penalty $7B\log(2/\delta)/(3m)$ alone exceeding $\alpha/2$ at $m=5000$ for $B=21$; both penalties vanish as $m\to\infty$, so this is a finite-sample comparison, and the betting p-value that \citet{almeida2025high} prefer was not evaluated. \emph{The exception} to the weighted quantile's advantage at moderate $m$ is CP+TV, plain source calibration with a total-variation allowance, which is within a few thousandths of clipped rejection sampling at $B'=1$ in the rare-subgroup designs (the two are not identical, Section~\ref{sec:clip}) and is the narrowest entry in five of the twelve rare-subgroup cells with $m\le5000$; its bias floor $\TV(Q,P)$ shows at $m=20000$ ($1.147$ against $1.066$ at $\pi=0.02$), and it is void in the Gaussian shift, where $\TV(Q,P)=0.38>\alpha$. The empirical oracle gives sets only $2$--$17\%$ wider than the uninflated one, $1.7$--$3.5$ times less excess width than the best certificate; it is tuned on the draws on which it is evaluated and is a benchmark, not a procedure. Bootstrap standard errors of the median widths are below $0.021$ for $m\ge5000$ and below $0.07$ at $m=2000$, so differences of a few thousandths are not significant; failure fractions have a standard error of about $0.01$.

\begin{figure}[!tb]
\centering\includegraphics[width=\linewidth]{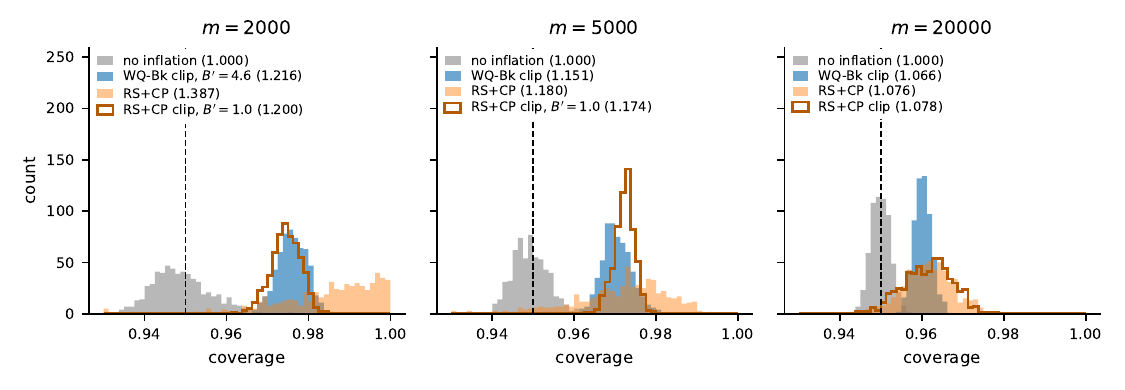}
\caption{Histograms of the training-conditional coverage over the $600$ calibration sets of Table~\ref{tab:frontier} in the rare-subgroup design with $\pi=0.02$, $B=21$ ($\alpha=\delta=0.05$): the uninflated weighted quantile, the Bentkus-inflated weighted quantile at its certified clip level (WQ-Bk clip), and rejection sampling with Clopper--Pearson as published (RS+CP) and at its certified clip level (RS+CP clip). Median widths relative to the uninflated set are in parentheses and clip levels $B'<B$ are shown; the few coverages below $0.93$ (at most four per histogram) are counted in the first bin. The dashed line is $1-\alpha$.}
\label{fig:rarehist}
\end{figure}

\paragraph{Coverage across calibration sets.} Figure~\ref{fig:rarehist} shows the training-conditional coverage behind the widths of Table~\ref{tab:frontier} in the design with $\pi=0.02$ at $\delta=0.05$, where the certificates have comparable widths. The uninflated weighted quantile is centred at $0.95$ and fails for $52$--$60\%$ of the calibration sets. The Bentkus-inflated weighted quantile, at its certified clip level, shifts the whole distribution above $0.95$ with a standard deviation of $0.0036$, $0.0036$ and $0.0022$ at $m=2000$, $5000$ and $20000$. Rejection sampling as published keeps about $m/B\approx100$--$950$ points, and its coverage spreads from below $0.95$ to above $0.99$ at $m\le5000$ (standard deviation $0.0135$, $0.0116$ and $0.0061$), so it uses its $\delta$ budget on some calibration sets and is very conservative on others, while its median set is wider at every $m$. At its certified clip level $B'=1$ ($m\le5000$), rejection sampling keeps almost every point and is as concentrated as the weighted quantile; it is narrower at $m=2000$ ($1.200$ against $1.216$) and wider at $m=5000$ ($1.174$ against $1.151$). At $m=20000$ its certified clip level is $B$, so it coincides with rejection sampling as published.

\paragraph{Run-to-run variability of rejection sampling.} Rejection sampling and the weighted quantile differ in three respects that the width comparison does not show: the weighted quantile needs the functionals of $v$ while rejection sampling needs only $B$ (Section~\ref{sec:est} prices this); rejection sampling can be narrower at small $m$, as above; and rejection sampling is randomized, so that the set attached to a given calibration sample depends on the acceptance coins. We measure the last effect as follows. For each of the three shifts (tilt with $c=2$, rare subgroup with $\pi=0.01$, Gaussian) we draw two calibration sets of size $m=5000$ and, holding each fixed, rerun the rejection-sampling Clopper--Pearson procedure $300$ times with fresh acceptance coins ($\alpha=\delta=0.05$); the tail-adaptive weighted quantile with the Bernstein inflation is computed once, since it is a deterministic function of the same set. Table~\ref{tab:rerun} reports the range and interquartile range of the training-conditional coverage of the $300$ rejection-sampling sets and their widths relative to the weighted-quantile set.

\begin{table}[tb]
\centering\small
\caption{Rejection sampling rerun $300$ times on a fixed calibration set ($m=5000$, $\alpha=\delta=0.05$; two calibration sets per shift). Coverage is training-conditional coverage; widths are relative to the tail-adaptive Bernstein-inflated weighted quantile on the same set, whose coverage is in the third column.}
\label{tab:rerun}
\resizebox{\linewidth}{!}{\input{tables/tab_rerun.tex}}
\end{table}

On the mild tilt the reruns span coverages $0.944$--$0.969$, some below and some above the nominal $0.95$, with widths within $\pm8\%$ of the deterministic set; on the rare-subgroup and Gaussian shifts, where only $m/B\approx240$ and $150$ points are accepted on average, the coverages span $0.93$--$0.996$ and the widths $0.76$--$1.41$ times the deterministic set. The interquartile ranges ($0.003$--$0.017$) are of the same order as the standard deviation of the coverage across calibration sets, i.e.\ the acceptance coins contribute as much variability as the data.

\subsection{Certificates with estimated functionals}\label{sec:exp_est}

Table~\ref{tab:frontier} evaluates the certificates at the exact population functionals. We now apply Proposition~\ref{prop:est} as stated. For each replication, a fresh unlabeled source sample of size $n_s\in\{10^4,10^5,10^6\}$ is drawn and split in halves, the clip level and the grid points $s_L$ and $s_R$ (the latter by the margin rule \eqref{eq:margin}) are chosen on the first half, the empirical Bernstein bounds are computed on the second at level $\eta=\delta'/J$ with $\delta'=\delta/5$, and the inflation is computed at level $\delta-\delta'$ (details in Appendix~\ref{app:experiments}). Three shifts are used, the exponential tilt of Section~\ref{sec:exp_smooth} with $c=2$ ($B=4$, $\mathrm{CVaR}_\alpha\approx B$), the rare-subgroup design with $\pi=0.01$ ($B=21$, the running example) and the Gaussian shift ($B=33$), at $m\in\{5000,20000\}$. Table~\ref{tab:est} compares the estimated certificates with rejection sampling as published, which needs no functional beyond $B$ and pays nothing for estimation, and with the exact-functional certificates of Table~\ref{tab:frontier} on the same calibration draws.

\begin{table}[tb]
\centering\small
\caption{Certificates with functionals estimated from an independent unlabeled source sample of size $n_s$ (Proposition~\ref{prop:est}, halves for selection and certification, empirical Bernstein bounds, $\delta'=\delta/5$), $\alpha=\delta=0.05$, $150$ replications with a fresh source sample each; widths relative to the uninflated weighted quantile, clip levels in parentheses, widths are medians over all replications with $\cY$ counted as infinite, ``$\infty$'' when more than half are $\cY$, superscripts as in Table~\ref{tab:frontier}. ``RS'' is rejection sampling as published, which needs only $B$; the last two columns are the certificates of Table~\ref{tab:frontier} with exact functionals, on the same calibration draws. Every entry has empirical failure fraction $\le\delta$ up to Monte Carlo error (standard error $\approx0.018$); the narrowest finite-sample certificate in each row is in bold.}
\label{tab:est}
\resizebox{\linewidth}{!}{\input{tables/tab_est.tex}}
\end{table}

Validity is not at issue: the estimated weighted-quantile certificates fail in at most $0.7\%$ of the replications of any cell, as the union bound and the conservative inflation predict, and the Clopper--Pearson sets fail at their nominal rate. The test that certifies $\rho_{\alpha_+}$ for the Bentkus inflation, placed by the margin rule \eqref{eq:margin}, passes in every replication of the $B=4$ design at $n_s=10^6$ and in at least $97\%$ at $n_s\le10^5$; in the two large-$B$ designs it fails in every replication at $n_s=10^4$, where the range term $7B\log(2/\eta)/(3n_2)$ of the confidence bound alone exceeds $\alpha$, and in at most $1.3\%$ at $n_s\ge10^5$. Its outcome barely matters here: replacing $\rho_+$ by the fallback $\widetilde B_+$ changes the certified inflation by less than $10^{-4}$ in every cell (the term $(t-\gamma)\rho$ of Proposition~\ref{prop:bentkus} is small at $B\le33$; Example~\ref{ex:margin} shows a case with $B=100$ where it is not), so the Bentkus columns of Table~\ref{tab:est} are, to three decimals, those of the moment estimates alone. The question is the width. \emph{When $B$ is moderate}, estimation is essentially free: for $B=4$ the estimated Bentkus certificate is $1.080$ at $n_s=10^4$ against $1.077$ for the exact functionals, and the two agree to within three thousandths at $n_s\ge10^5$; here rejection sampling is the narrowest certificate ($1.05$--$1.06$), as the crossover condition of Section~\ref{sec:clip} predicts for $\mathrm{CVaR}_\alpha/B\approx1$. \emph{When $B$ is large}, the cost of estimation is the cost of certifying rare-event expectations. In the rare-subgroup design the certification half of a $10^4$-point source sample contains on average $2.5$ points of the subgroup, and the empirical Bernstein bounds on $\EE_Pv^2$ and on the tail functional have half-widths dominated by their range terms, of order $B^2\log(1/\eta)/n_2$: the estimated unclipped certificate is $1.29$ (Bentkus) against $1.10$ for the clipped one with exact functionals and $1.19$ for exact rejection sampling at $m=5000$. Clipping does not help at this sample size, because the bias bound $r_+$ on $\EE_P(v-B')^+$, which enters the over-coverage directly, cannot be certified below a few thousandths from $5000$ draws either; the median selected level is accordingly $B'=B$ at $n_s\le10^5$, and moves to $B'=8$ ($10$ with exact functionals) only at $n_s=10^6$. With $n_s=10^5$ the estimated Bentkus certificate is within a width point of rejection sampling at both sizes ($1.18$ vs.\ $1.19$ at $m=5000$; $1.074$ vs.\ $1.075$ at $m=20000$), and with $n_s=10^6$ it is narrower at both ($1.117$ vs.\ $1.182$; $1.055$ vs.\ $1.076$), within $1$--$2$ width points of the certificate with exact functionals ($1.096$, $1.047$). The Gaussian shift, with $B=33$ and $\TV(Q,P)=0.38$, behaves the same way with a larger range: $1.44$, $1.29$ and $1.21$ at $m=5000$ for $n_s=10^4,10^5,10^6$ against $1.18$ with exact functionals and $1.24$ for rejection sampling, and $1.144$, $1.106$, $1.083$ at $m=20000$ against $1.077$ and $1.09$--$1.10$. The clipped rejection-sampling set, which needs only the bias bound $r_+$, is within two width points of its exact-functional value at $n_s=10^6$ in both designs ($1.170$ vs.\ $1.154$ in the Gaussian shift; $1.114$ vs.\ $1.092$ in the rare-subgroup design, $B'=1$, where the certified bias of the extreme clip is again a rare-event expectation); part of the gap is the confidence budget $\delta'$ that the estimated version pays as well. The rule that emerges is the one stated after Proposition~\ref{prop:est}: the estimation error is comparable to $\sigma_\alpha^2$ when $n_2\approx B^2\log(1/\eta)/\sigma_\alpha^2$, which is about $10^4$ in both large-$B$ designs, and negligible at $10$--$100$ times that. Unlabeled source covariates are usually the cheapest data in a covariate-shift problem, so samples of this size are realistic, but the requirement is real, it grows with $B^2$, and it is the price at which the sharper, deterministic certificate of Sections~\ref{sec:upper}--\ref{sec:clip} is available without population knowledge; rejection sampling as published does not pay it.

\section{Discussion}\label{sec:discussion}

\paragraph{What the results enable.} The training-conditional coverage of the weighted quantile is governed by $\sigma_\alpha^2=\alpha^2K^2+(1-2\alpha)\Lambda(\alpha)$: by the $\chi^2$-divergence of the shift and by the likelihood ratio on the $\alpha$-fraction of the test population where it is largest, not by its supremum. Both ingredients are computable, so a practitioner who calibrates once can certify, deterministically and with every calibration point used at its weight, that the deployed set has miscoverage at most $\alpha$ for a prescribed fraction of calibration samples (Procedure~\ref{proc:wq}); the same certificate transfers to the weighted split conformal set of \citet{tibshirani2019conformal} through \eqref{eq:reduction}. The lower bound shows that the $K^2$ scale is intrinsic to the shift, and for the weighted quantile itself the full proxy $\sigma_\alpha^2$ is the exact asymptotic variance under the adversarial alignment of score tail and likelihood ratio.

\paragraph{When to use which procedure, and what it needs.} Section~\ref{sec:clip} reduces the comparison with rejection sampling, at leading order, to two numbers: the ratio $\sigma_\alpha^2/(\alpha(1-\alpha)B)$, which for small $\alpha$ is about $\mathrm{CVaR}_\alpha(v)/B$, and the constant $z_{1-\delta}^2/(2\log(1/\delta))$ that separates a distribution-free concentration bound from an exact binomial tail; clipping lets both procedures trade a computable bias for range, and the crossover sizes it predicts match the experiments. The weighted quantile is the narrower certificate when the supremum is a rare-event quantity, the calibration set is large or $\delta$ is small; clipped rejection sampling is narrower for small calibration sets and mild shifts, and on the real-data shifts of Section~\ref{sec:exp_real}, where $\mathrm{CVaR}_\alpha/B$ is about $0.5$ or more and $m$ is a few thousand. The inputs differ: rejection sampling needs only $B$ and is randomized, so that the certified set depends on the acceptance coins as much as on the data (Table~\ref{tab:rerun}); the weighted quantile is deterministic but needs $K^2$, $\Lambda(\alpha)$ and, for the Bentkus inflation, $\rho_\alpha$. When these are estimated from an unlabeled source sample, Proposition~\ref{prop:est} keeps the guarantee exact at a total failure probability $\delta$; the cost is that of certifying rare-event expectations, negligible for moderate $B$ and requiring source samples of order $10$--$100$ times $B^2\log(1/\eta)/\sigma_\alpha^2$ when $B$ is large (Section~\ref{sec:exp_est}), with the quantile $\rho_\alpha$ certified by the margin rule of Section~\ref{sec:est}. The importance-weighted learn-then-test procedure of \citet{almeida2025high} is deterministic as well and uses every point; its two concentration-based p-values are wider than the inflated weighted quantile in every cell of Table~\ref{tab:frontier}, and the Hoeffding--Bentkus variant is within a width point of WQ-Bk on the mild shifts of Table~\ref{tab:smooth}; this is a finite-sample gap that vanishes with $m$ and that its betting p-value may narrow.

\paragraph{Open questions.} The largest remaining slack is not in the concentration inequality but in the coupling: the empirical oracle inflation is $2$--$4$ times smaller than the certificate in the rare-event designs because the actual variance is close to $\alpha(1-\alpha)K^2$ while the bound must allow for $\Lambda(\alpha)$; a data-driven estimate of $\EE_Q[v\ind{S\ge q}]$ from the calibration sample would close most of this gap, but its own concentration involves $B$ again, and whether any procedure can avoid the coupling term is open. Bentkus's inequality removes $9$--$12\%$ of the inflation relative to Bernstein's, and its constant $e^2/2$ is not known to be sharp \citep{bentkus2004hoeffding}. The argument does not extend to full conformal or jackknife+ under covariate shift, where the threshold is not the quantile of a fixed sample; there the uniform-convergence bounds of \citet{pournaderi2026training} remain the tool. The estimated-ratio analysis identifies the score-projected error $\EE_P[v-\widetilde v\mid S]$ as the relevant quantity, which suggests estimating the ratio with the downstream score in mind, using unlabeled target data only. Finally, the lower bound is stated at constant probability; a change-of-measure version of the two-point argument should give its $\log(1/\delta)$-dependence for general threshold rules.

\begin{appendix}
\section{Proofs for Section~\ref{sec:upper}}\label{app:upper}

\begin{proof}[Proof of Theorem~\ref{thm:upper}]
Write $\tauhat=\tauhat(\beta)$. If $\beta-\gamma\le0$ the event is empty, so let $q:=q_Q(\beta-\gamma)<\infty$.

\emph{Step 1 (reduction to a fixed quantile).} If $P_e(\tauhat)>\alpha+\gamma$ then $F_Q(\tauhat)<\beta-\gamma$, hence $\tauhat<q$ (if $\tauhat\ge q$ then $F_Q(\tauhat)\ge F_Q(q)\ge\beta-\gamma$). In particular $\tauhat<\infty$, so $\sum_jv_j>0$ and $\Fhat(\tauhat)\ge\beta$. Since $\tauhat<q$ implies $\ind{S_i\le\tauhat}\le\ind{S_i<q}$,
\[
\sum_{i}\what_i\ind{S_i<q}\ge\Fhat(\tauhat)\ge\beta
\quad\Longleftrightarrow\quad
\overline U:=\frac1m\sum_{i\in\cI}U_i\ge0,
\]
where $U_i:=v_i(\xi_i-\beta)$ and $\xi_i:=\ind{S_i<q}$.

\emph{Step 2 (mean, range, second moment).} The $U_i$ are i.i.d.\ with $U_i\in[-\beta B,(1-\beta)B]$. Using $\EE_P[vh]=\EE_Q[h]$ and $\PP_Q(S<q_Q(\beta-\gamma))\le\beta-\gamma$,
\[
\EE U_i=\PP_Q(S<q)-\beta=:-t,\qquad t\in[\gamma,\beta].
\]
Moreover, with $\alpha=1-\beta$,
\begin{equation}\label{eq:secondmoment}
\begin{aligned}
\EE U_i^2=\EE_Q\big[v(\xi-\beta)^2\big]&=\alpha^2\EE_Q[v\xi]+(1-\alpha)^2\EE_Q[v(1-\xi)]\\
&=\alpha^2K^2+(1-2\alpha)\,\EE_Q\big[v\ind{S\ge q}\big],
\end{aligned}
\end{equation}
since $\EE_Q[v\xi]+\EE_Q[v(1-\xi)]=K^2$.

\emph{Step 3 (Hoeffding).} By Hoeffding's inequality,
\[
\PP(\overline U\ge0)\le\PP(\overline U-\EE\overline U\ge\gamma)\le e^{-2m\gamma^2/B^2}.
\]

\emph{Step 4 (Bernstein with the supremum).} Since $v\le B$ and $\PP_Q(\xi=1)=\beta-t$, for every $\alpha\in(0,1)$,
\begin{align*}
\Var(U_i)\le\EE U_i^2\le B\,\EE_Q\big[(\xi-\beta)^2\big]&=B\big[\alpha^2(\beta-t)+\beta^2(\alpha+t)\big]\\
&=B\big[\alpha(1-\alpha)+(1-2\alpha)t\big]\le B\big[\alpha(1-\alpha)+t\big],
\end{align*}
and, since $t\le1-\alpha$ and $B\ge1$,
\[
U_i-\EE U_i\le(1-\beta)B+t\le\alpha B+(1-\alpha)\le B .
\]
Bernstein's inequality gives
\[
\PP(\overline U-\EE\overline U\ge t)\le\exp\Big(-\frac{mt^2}{2B[\alpha(1-\alpha)+t]+\frac23Bt}\Big)=\exp\Big(-\frac{mt^2}{2B\alpha(1-\alpha)+\frac83Bt}\Big),
\]
and $t\mapsto t^2/(a+bt)$ is increasing, so $t\ge\gamma$ yields \eqref{eq:bern}.

\emph{Step 5 (Bernstein with the tail average).} Let $\alpha\le1/2$. By the Hardy--Littlewood inequality and the monotonicity of $F_v^{-1}$,
\[
\EE_Q[v\ind{S\ge q}]\le\Lambda\big(\PP_Q(S\ge q)\big)=\Lambda(\alpha+t),
\]
\[
\Lambda(\alpha+t)-\Lambda(\alpha)=\int_{1-\alpha-t}^{1-\alpha}F_v^{-1}(u)\,du\le t\rho_\alpha .
\]
Hence, by \eqref{eq:secondmoment},
\[
\Var(U_i)\le\sigma_\alpha^2+(1-2\alpha)\rho_\alpha t,\qquad U_i-\EE U_i\le\alpha B+t,
\]
and Bernstein's inequality gives
\[
\PP(\overline U-\EE\overline U\ge t)\le\exp\Big(-\frac{mt^2}{2\sigma_\alpha^2+2(1-2\alpha)\rho_\alpha t+\frac23(\alpha B+t)t}\Big)=\exp\Big(-\frac{mt^2}{D(t)}\Big),
\]
where $D(t):=2\sigma_\alpha^2+bt+\tfrac23t^2$ and $b:=2(1-2\alpha)\rho_\alpha+\frac23\alpha B$. Since
\[
\frac{d}{dt}\Big[\frac{t^2}{D(t)}\Big]\ \propto\ 2tD(t)-t^2D'(t)=4\sigma_\alpha^2t+bt^2\ \ge0,
\]
the bound is decreasing in $t$, and $t\ge\gamma$ yields \eqref{eq:tail}. Solving each bound for $\gamma$ at level $\delta$ gives \eqref{eq:gamma_m}.
\end{proof}

\begin{proof}[Proof of the bounds in Remark~\ref{rem:moments}]
Both bounds start from Step~1 of the proof of Theorem~\ref{thm:upper}, where $P_e(\tauhat(\beta))>\alpha+\gamma$ implies $\overline U\ge0$ with $U_i=v_i(\xi_i-\beta)$ and $\EE U_i=-t\le-\gamma$. For the second-moment bound, $|\xi_i-\beta|\le1$ gives $\EE U_i^2\le\EE_Pv^2\le K^2$, so Chebyshev's inequality yields
\[
\PP(\overline U\ge0)\le\PP(\overline U-\EE\overline U\ge\gamma)\le\frac{K^2}{m\gamma^2},
\]
which equals $\delta$ at $\gamma=K/\sqrt{m\delta}$. For the truncation bound, let
\[
U_i^B:=\min(v_i,B)(\xi_i-\beta),\qquad R_i:=(v_i-B)^+,
\]
so that $U_i-U_i^B=R_i(\xi_i-\beta)\le R_i$ and $\EE R_i=r_B$. Then
\[
\{\overline U\ge0\}\subseteq\{\overline U^B\ge-\kappa\}\cup\{\overline R>\kappa\},
\]
and Markov's inequality gives $\PP(\overline R>\kappa)\le r_B/\kappa$. The $U_i^B$ are i.i.d.\ with range $B$ and, since $\xi_i-\beta\ge-1$,
\[
\EE U_i^B=\EE U_i-\EE\big[R_i(\xi_i-\beta)\big]\le-\gamma+r_B,
\]
so Hoeffding's inequality gives
\[
\PP(\overline U^B\ge-\kappa)\le\PP\big(\overline U^B-\EE\overline U^B\ge\gamma-r_B-\kappa\big)\le\exp\Big(-\frac{2m(\gamma-r_B-\kappa)^2}{B^2}\Big).
\]
Finally
\[
r_B=\EE_P(v-B)^+=\EE_Q\big[(v-B)^+/v\big]=\EE_Q(1-B/v)^+\le\PP_Q(v>B).\qedhere
\]
\end{proof}

\begin{proof}[Proof of Corollary~\ref{cor:pac}]
(i) Here $\beta=1-\alpha+\gamma$, and $P_e(\tauhat(\beta))>\alpha$ means $F_Q(\tauhat(\beta))<1-\alpha=\beta-\gamma$. Step~1 of the previous proof with $q:=q_Q(1-\alpha)$ gives $\overline U\ge0$ with $U_i=v_i(\xi_i-\beta)$, $\xi_i=\ind{S_i<q}$, and
\[
\EE U_i=-t,\qquad t:=\beta-\PP_Q(S<q)\ge\gamma,\qquad \PP_Q(S\ge q)=1-\beta+t=\alpha+(t-\gamma).
\]
By \eqref{eq:secondmoment} (with $1-\beta=\alpha-\gamma$ in place of $\alpha$),
\begin{align*}
\EE U_i^2&=(\alpha-\gamma)^2K^2+(1-2\alpha+2\gamma)\,\EE_Q[v\ind{S\ge q}]\\
&\le\alpha^2K^2+(1-2\alpha+2\gamma)\big[\Lambda(\alpha)+(t-\gamma)\rho_\alpha\big],
\end{align*}
and $U_i-\EE U_i\le(1-\beta)B+t\le\alpha B+t$. Bernstein's inequality bounds $\PP(\overline U-\EE\overline U\ge t)$ by $\exp\{-mt^2/D(t)\}$ with
\[
D(t)=a'+b't+\tfrac23t^2,\qquad a':=2\alpha^2K^2+2(1-2\alpha+2\gamma)\big[\Lambda(\alpha)-\gamma\rho_\alpha\big]\ge0,
\]
\[
b':=2(1-2\alpha+2\gamma)\rho_\alpha+\tfrac23\alpha B\ge0
\]
(the sign of $a'$ follows from $\Lambda(\alpha)\ge\alpha\rho_\alpha\ge\gamma\rho_\alpha$); as in Step~5 the bound is decreasing in $t$, and evaluating at $t=\gamma$ gives
\[
D(\gamma)=2\alpha^2K^2+2(1-2\alpha+2\gamma)\Lambda(\alpha)+\tfrac23(\alpha B+\gamma)\gamma,
\]
the first bound in (i). The second follows from $K^2\le B$, $\Lambda(\alpha)\le\alpha B$, $\gamma\le\alpha\le1/2$ and $\gamma\le B$:
\[
D(\gamma)\le2\alpha^2B+2(1-2\alpha)\alpha B+4\alpha B\gamma+\tfrac23\alpha B\gamma+\tfrac23B\gamma\le2B\alpha(1-\alpha)+3B\gamma .
\]
Setting the first bound equal to $\delta$ and solving the quadratic
\[
(m-\tfrac23L)\gamma^2-Lb\gamma-2L\sigma_\alpha^2=0
\]
gives \eqref{eq:gamma_tail}.

(ii) Let $q':=q_Q(\beta+\gamma')$; since $\beta+\gamma'<1$ and $F_Q$ is continuous, $F_Q(q')=\beta+\gamma'$. If $P_e(\tauhat(\beta))<\alpha-\gamma-\gamma'$ then $F_Q(\tauhat(\beta))>F_Q(q')$, hence $\tauhat(\beta)>q'$, hence $\Fhat(q')<\beta$. Clearing the denominator (with equality when $\sum_jv_j=0$),
\[
\overline V:=\frac1m\sum_iV_i\le0,\qquad V_i:=v_i\big(\ind{S_i\le q'}-\beta\big),\qquad \EE V_i=F_Q(q')-\beta=\gamma' .
\]
As in \eqref{eq:secondmoment},
\[
\EE V_i^2=(1-\beta)^2K^2+(2\beta-1)\,\EE_Q[v\ind{S>q'}],\qquad \PP_Q(S>q')=\alpha-\gamma-\gamma'\le\alpha,
\]
and $1-\beta\le\alpha$, so
\[
\Var(V_i)\le\alpha^2K^2+(1-2\alpha+2\gamma)\Lambda(\alpha)=\sigma_\alpha^2+2\gamma\Lambda(\alpha),
\]
\[
-(V_i-\EE V_i)\le\beta B+\gamma'=(1-\alpha+\gamma)B+\gamma' .
\]
Bernstein's lower-tail inequality gives
\[
\PP(\overline V\le0)\le\exp\Big(-\frac{m\gamma'^2}{2\sigma_\alpha^2+4\gamma\Lambda(\alpha)+\frac23\big((1-\alpha+\gamma)B+\gamma'\big)\gamma'}\Big).\qedhere
\]
\end{proof}
\section{Proof of Theorem~\ref{thm:lower}}\label{app:lower}

Write $\abar:=1-\alpha$ and set
\begin{equation}\label{eq:eps}
\lambda:=\sqrt{\frac{\alpha}{2m\abar K^2}},\qquad \bar\varepsilon:=\lambda K^2=\sqrt{\frac{\alpha K^2}{2m\abar}},\qquad \gamma:=\frac{\bar\varepsilon\abar}{4}=\frac{1}{4\sqrt2}\sqrt{\frac{K^2\alpha\abar}{m}}=\gamma_m^{\rm lb}.
\end{equation}
The assumption $m\ge B^2/(4\alpha K^2)$ gives $\lambda^2B^2\le2\alpha^2/\abar$, hence, using $\alpha\le1/2$,
\begin{equation}\label{eq:conds}
\lambda B\le\min\Big\{1,\ \frac{2\alpha}{\abar}\Big\}\qquad\text{and}\qquad \gamma\le\frac{\bar\varepsilon\abar}{4}\le\frac{\lambda B\abar}{4}\le\frac{\alpha}{2},
\end{equation}
because $2\alpha^2/\abar\le1$ and $2\alpha^2/\abar\le4\alpha^2/\abar^2$ for $\alpha\le1/2$, and $\bar\varepsilon=\EE_Q[\lambda v]\le\lambda B$.

\emph{Step 1 (the two distributions).} We construct the conditional laws of the score directly; for concreteness take $\cY=[0,1]$ and the identity score $s(x,y)=y$, so that they are conditional laws of $Y$. Let
\[
h(t):=-\ind{0<t\le\abar}+\frac{\abar}{\alpha}\ind{\abar<t<1},\qquad\text{so that}\qquad \int_0^1h=0,
\]
and for $\varepsilon\in[0,1]$ let $G_\varepsilon$ be the distribution on $[0,1]$ with density $1+\varepsilon h$. Under $P^{(0)}_{S|X}$ let $S\mid X=x\sim G_0=\mathrm{Unif}(0,1)$ for all $x$; under $P^{(1)}_{S|X}$ let $S\mid X=x\sim G_{\varepsilon(x)}$ with $\varepsilon(x):=\lambda v(x)\in[0,1]$ by \eqref{eq:conds}. Both conditional laws have score densities bounded by $1+\abar/\alpha$. Since $G_\varepsilon$ is affine in $\varepsilon$, the marginal law of the score under $Q_1$ is
\[
\int G_{\varepsilon(x)}\,dQ_X(x)=G_{\bar\varepsilon},\qquad \bar\varepsilon=\EE_Q[\lambda v]=\lambda K^2,
\]
and under $Q_0$ it is $G_0$. Explicitly,
\[
G_{\bar\varepsilon}(t)=(1-\bar\varepsilon)t\quad(t\le\abar),\qquad G_{\bar\varepsilon}(\abar+u)=\abar(1-\bar\varepsilon)+\Big(1+\frac{\bar\varepsilon\abar}{\alpha}\Big)u\quad(u\in[0,\alpha]);
\]
thus $P_e^{(0)}(\tau)=1-G_0(\tau)$ and $P_e^{(1)}(\tau)=1-G_{\bar\varepsilon}(\tau)$.

\emph{Step 2 (a deterministic dichotomy).} For every $\tau\in\R\cup\{\pm\infty\}$,
\begin{equation}\label{eq:dichotomy}
1-G_0(\tau)\le\alpha-\gamma\qquad\text{or}\qquad 1-G_{\bar\varepsilon}(\tau)\ge\alpha+\gamma .
\end{equation}
If $\tau\le\abar$:
\[
1-G_{\bar\varepsilon}(\tau)\ge1-G_{\bar\varepsilon}(\abar)=\alpha+\bar\varepsilon\abar=\alpha+4\gamma .
\]
If $\tau\ge1$: $1-G_0(\tau)=0\le\alpha-\gamma$. If $\tau=\abar+u$ with $0<u<\alpha$: $1-G_0(\tau)=\alpha-u$, so the first alternative holds when $u\ge\gamma$, and when $u<\gamma=\bar\varepsilon\abar/4$,
\[
1-G_{\bar\varepsilon}(\tau)=\alpha+\bar\varepsilon\abar\Big(1-\frac u\alpha\Big)-u>\alpha+\frac{\bar\varepsilon\abar}{4}+\frac{\bar\varepsilon\abar}{2}\Big(1-\frac{\bar\varepsilon\abar}{2\alpha}\Big)\ge\alpha+\gamma,
\]
using $\bar\varepsilon\abar\le\lambda B\abar\le2\alpha$ from \eqref{eq:conds}.

\emph{Step 3 (indistinguishability).} $P_1$ and $P_0$ share the covariate law $P_X$ and differ only in the conditional score law, so
\begin{align*}
\chi^2(P_1\|P_0)&=\int\chi^2\big(G_{\varepsilon(x)}\,\|\,G_0\big)\,dP_X(x)=\int\varepsilon(x)^2\int_0^1h^2\,dP_X(x)\\
&=\frac{\abar}{\alpha}\,\lambda^2\,\EE_P[v^2]=\frac{\abar}{\alpha}\lambda^2K^2=\frac1{2m},
\end{align*}
since
\[
\int_0^1h^2=\abar+\alpha\Big(\frac{\abar}{\alpha}\Big)^2=\frac{\abar}{\alpha}.
\]
By $\KL\le\chi^2$, tensorization and Pinsker's inequality,
\[
\TV(P_0^{\otimes m},P_1^{\otimes m})\le\sqrt{\tfrac m2\KL(P_1\|P_0)}\le\tfrac12,
\]
and the same holds after tensoring with the law of $\xi$.

\emph{Step 4 (Le Cam).} Let $A:=\{1-G_0(\tauhat)\le\alpha-\gamma\}$; by \eqref{eq:dichotomy}, $A^c\subseteq\{1-G_{\bar\varepsilon}(\tauhat)\ge\alpha+\gamma\}$, so the left side of \eqref{eq:lecam} is at least
\[
\PP_{P_0}(A)+\PP_{P_1}(A^c)=1-\big(\PP_{P_1}(A)-\PP_{P_0}(A)\big)\ge1-\TV\big(P_0^{\otimes m},P_1^{\otimes m}\big)\ge\tfrac12 .
\]
Part (a) follows because one summand is at least $1/4$, and (b) because the PAC assumption gives $\PP_{P_1}(A^c)\le\delta$. \qed

For $Q_X$ concentrated on a set of $P_X$-mass $1/B$ (so $v\in\{0,B\}$ and $K^2=B$), the construction reduces to a two-point covariate space, $\varepsilon(x)=\lambda B$ on the support of $Q_X$, and the condition $m\ge B^2/(4\alpha K^2)=B/(4\alpha)$.
\begin{proof}[Proof of Proposition~\ref{prop:clt}]
Let $\beta=1-\alpha$ and $W_i:=F_Q(S_i)$. Under $Q$, $W$ is $\mathrm{Unif}(0,1)$ because $F_Q$ is continuous, and
\[
F_Q(\tauhat(\beta))=\widehat W(\beta):=\inf\Big\{w:\ \sum_iv_i\big(\ind{W_i\le w}-\beta\big)\ge0\Big\},
\]
the self-normalized weighted $\beta$-quantile of the $W_i$: indeed, since $F_Q$ is continuous and nondecreasing, $\{W_i\le w\}=\{S_i\le t_w\}$ with $t_w:=\sup\{t:F_Q(t)\le w\}$, and $t_w<\tauhat(\beta)$ if and only if $w<F_Q(\tauhat(\beta))$, so the two infima correspond. Hence
\[
P_e(\tauhat(\beta))-\alpha=-\big(\widehat W(\beta)-\beta\big).
\]
Fix $z\in\R$ and put
\[
w_m:=\beta+\frac{z}{\sqrt m},\qquad \Psi_m(w):=\frac1m\sum_iv_i\big(\ind{W_i\le w}-\beta\big),
\]
a nondecreasing right-continuous step function, so that $\{\widehat W(\beta)\le w_m\}=\{\Psi_m(w_m)\ge0\}$ on $\{\sum_iv_i>0\}$, whose probability tends to one. The summands $v_i(\ind{W_i\le w_m}-\beta)$ are i.i.d.\ and bounded by $B$, with mean $\EE_Q[\ind{W\le w_m}]-\beta=z/\sqrt m$ and variance
\begin{align*}
\EE_P\big[v^2(\ind{W\le w_m}-\beta)^2\big]-\frac{z^2}{m}\ &\to\ \alpha^2\EE_Q[v\ind{W\le\beta}]+\beta^2\EE_Q[v\ind{W>\beta}]\\
&=\alpha^2K^2+(1-2\alpha)\,\EE_Q[v\ind{S>q}]=:\varsigma^2,
\end{align*}
using $\{W>\beta\}=\{S>q\}$ $Q$-a.s. By the Lindeberg--Feller central limit theorem for this triangular array, $\sqrt m\,\Psi_m(w_m)\Rightarrow N(z,\varsigma^2)$, hence
\[
\PP\big(\sqrt m(\widehat W(\beta)-\beta)\le z\big)=\PP\big(\sqrt m\,\Psi_m(w_m)\ge0\big)+o(1)\ \to\ \Phi(z/\varsigma),
\]
which is the claim. For the example, $S=\ind A(X)+U$ has a continuous distribution under $Q$ with $Q(S>1)=Q(A)=\alpha$, so $q=1$ and $\{S>q\}=A$; if $v$ takes its largest values on $A$ then $\EE_Q[v\ind A]=\Lambda(\alpha)$ by the Hardy--Littlewood inequality, and $\varsigma^2=\sigma_\alpha^2$.
\end{proof}

\section{Proofs for Section~\ref{sec:estimated}}\label{app:estimated}

\begin{proof}[Proof of Theorem~\ref{thm:estimated}]
Conditionally on $\widehat v$, the calibration points are i.i.d.\ from $P$, $\widetilde v=d\widehat Q/dP\le\widetilde B$ with $\EE_P\widetilde v=1$, and $\breve\tau(\beta)$ is the weighted quantile \eqref{eq:tauhat} built with the self-normalized weights of $\widetilde v$. Theorem~\ref{thm:upper} and Corollary~\ref{cor:pac} therefore apply to $(\widehat Q,\widetilde B)$ and to $1-F_{\widehat Q}(\breve\tau)$, and $|P_e(\tau)-(1-F_{\widehat Q}(\tau))|\le\Delta$ for every $\tau$. For \eqref{eq:Delta_bounds}, let $g(S):=\EE_P[v-\widetilde v\mid S]$, so that
\[
F_Q(t)-F_{\widehat Q}(t)=\EE_P\big[g(S)\ind{S\le t}\big],\qquad \EE_Pg(S)=0;
\]
then
\[
-\EE_Pg^-\ \le\ \EE_P\big[g\ind{S\le t}\big]\ \le\ \EE_Pg^+,\qquad \EE_Pg^+=\EE_Pg^-=\tfrac12\EE_P|g| .
\]
The second inequality in \eqref{eq:Delta_bounds} is Jensen's, the equality is the definition of total variation, and the last inequality follows from
\[
\|\widetilde v-\widehat v\|_{L^1(P)}=|1-\EE_P\widehat v|=|\EE_P(v-\widehat v)| .\qedhere
\]
\end{proof}

\begin{proof}[Proof of Proposition~\ref{prop:interval}]
(i) With $A(t):=\sum_iv_i\ind{S_i\le t}$ and $C(t):=\sum_iv_i\ind{S_i>t}$, $\Fhat=A/(A+C)$. The map $(x,y)\mapsto x/(x+y)$ is nondecreasing in $x\ge0$ and nonincreasing in $y\ge0$, and $A_{\rm lo}\le A$, $C_{\rm hi}\ge C$, so $\breve F\le\Fhat$. As $t$ increases $A_{\rm lo}$ increases and $C_{\rm hi}$ decreases, so $\breve F$ is nondecreasing, and right-continuity is inherited from the indicators. Hence $\{\breve F\ge\beta\}\subseteq\{\Fhat\ge\beta\}$, $\breve\tau(\beta)\ge\tauhat(\beta)$, and the bounds follow from Theorem~\ref{thm:upper} with $v\le B_{\rm hi}$.

(ii) Write $A_{\rm lo}=A-D_A$ and $C_{\rm hi}=C+D_C$ with $D_A,D_C\ge0$, and let
\[
D:=\sum_i\big(v_{\rm hi}(Z_i)-v_{\rm lo}(Z_i)\big)\ge D_A+D_C,\qquad V:=A+C,\qquad \rho:=D/V\le\widehat\rho .
\]
Then
\[
\begin{aligned}
\breve F&=\frac{A-D_A}{A+C+D_C-D_A}\ge\frac{A-D_A}{V+D_C}\ge\frac{A-D}{V+D}\\
&=\frac{\Fhat-\rho}{1+\rho}=\Fhat-\frac{\rho(1+\Fhat)}{1+\rho}\ge\Fhat-2\widehat\rho,
\end{aligned}
\]
so $\{\Fhat\ge\beta+2\widehat\rho\}\subseteq\{\breve F\ge\beta\}$ and $\breve\tau(\beta)\le\tauhat(\beta+2\widehat\rho)$. On $\{\widehat\rho\le\rho_0\}$, $P_e(\breve\tau(\beta))\ge P_e(\tauhat(\beta+2\rho_0))$ and the proof of Corollary~\ref{cor:pac}(ii) at the deterministic level $\beta+2\rho_0$ gives the bound. Concentration of $\widehat\rho$ is Hoeffding's inequality for two bounded sample means.
\end{proof}

\begin{proof}[Proof of Lemma~\ref{lem:clipbias}]
Since $v_{B'}\le v$ and $\mu_{B'}\le1$,
\[
\Big|v-\frac{v_{B'}}{\mu_{B'}}\Big|\le(v-v_{B'})+v_{B'}\Big(\frac1{\mu_{B'}}-1\Big),
\]
and taking expectations,
\[
2\Delta_{B'}\le(1-\mu_{B'})+\mu_{B'}\Big(\frac1{\mu_{B'}}-1\Big)=2(1-\mu_{B'}),\qquad 1-\mu_{B'}=\EE_P(v-v_{B'})=\EE_P(v-B')^+ .
\]
The variational formula is the Rockafellar--Uryasev representation of the conditional value at risk \citep{rockafellar2000optimization}: for a random variable $W$ with quantile function $F^{-1}$,
\[
\int_{1-a}^1F^{-1}(u)\,du=\inf_{t}\big\{at+\EE(W-t)^+\big\},
\]
attained at $t=F^{-1}(1-a)$; apply it to $W=v$ under $Q$ and use $\EE_Q(v-s)^+=\EE_P[v(v-s)^+]$.
\end{proof}

\begin{proof}[Proof of Proposition~\ref{prop:est}]
Let $E$ be the event that the $J$ upper confidence bounds computed from $I_2$ hold; by the union bound, $\PP(E)\ge1-\delta'$. Since $B'$, $s_L$ and $s_R$ are functions of $I_1$, which is independent of $I_2$, each bound is applied to a fixed function conditionally on $I_1$, so this is legitimate. On $E$, the following hold. By Lemma~\ref{lem:clipbias},
\[
\Delta_{B'}\le1-\mu_{B'}=\EE_P(v-B')^+\le r_+,
\]
hence
\[
\mu_{B'}\ge\mu_-,\qquad \alpha':=\alpha-\Delta_{B'}\ge\alpha_+,\qquad \|\widetilde v_{B'}\|_\infty\le\frac{B'}{\mu_{B'}}\le\widetilde B_+ .
\]
The second moment of $\widetilde v_{B'}$ is $\EE_Pv_{B'}^2/\mu_{B'}^2\le K^2_+$, using also the trivial bound $K^2\le\|\widetilde v_{B'}\|_\infty$. By Lemma~\ref{lem:clipbias} applied to $Q_{B'}$ at $t=s_L/\mu_{B'}$, and using $\Lambda(a)\le\min(aB,K^2)$,
\[
\Lambda_{B'}(\alpha_+)\le\frac{\alpha_+s_L}{\mu_{B'}}+\frac{\EE_P\big[v_{B'}(v_{B'}-s_L)^+\big]}{\mu_{B'}^2}\le\Lambda_+ .
\]
If $U_\eta(v_{B'}\ind{v_{B'}>s_R})\le\alpha_+\mu_-$ then
\[
Q_{B'}\Big(\widetilde v_{B'}>\frac{s_R}{\mu_{B'}}\Big)=\frac{\EE_P\big[v_{B'}\ind{v_{B'}>s_R}\big]}{\mu_{B'}}\le\alpha_+,
\]
so the $(1-\alpha_+)$-quantile of $\widetilde v_{B'}$ under $Q_{B'}$ is at most $s_R/\mu_{B'}\le\rho_+$ (and $\rho_+\le\widetilde B_+$ in both cases since $s_R\le B'$). Thus on $E$ every functional of the pair $(Q_{B'},\widetilde v_{B'})$ at the target $\alpha_+$ is bounded above by its estimate. Now condition on the source sample and apply Corollary~\ref{cor:pac}(i), or Proposition~\ref{prop:bentkus}, to the pair $(P,Q_{B'})$ at the target $\alpha_+\le1/2$ and level $\delta-\delta'$. Their proofs use $K^2$, $\Lambda(\alpha_+)$ and the range only as upper bounds on the variance and the range of the summands $U_i$ in Bernstein's or Bentkus's inequality, and $\rho_{\alpha_+}$ only as an upper bound on the slope of the concave function $\Lambda$ at $\alpha_+$; all three inequalities remain valid when these quantities are replaced by larger ones (in the proof of Corollary~\ref{cor:pac}(i), $\rho_{\alpha_+}$ is used only in the intermediate monotonicity step, with its true value, and the final bound does not involve it, which is why $J=3$ suffices there). Hence, on $E$, the weighted quantile $\breve\tau$ with weights $v_{B'}(Z_i)$ at level $1-\alpha_++\widehat\gamma=1-\alpha+r_++\widehat\gamma$ satisfies
\[
\PP\big(1-F_{Q_{B'}}(\breve\tau)>\alpha_+\ \big|\ \text{source}\big)\le\delta-\delta',
\]
and whenever the event inside does not occur,
\[
P_e(\breve\tau)=1-F_Q(\breve\tau)\le1-F_{Q_{B'}}(\breve\tau)+\Delta_{B'}\le\alpha_++r_+=\alpha .
\]
Integrating over the source sample,
\[
\PP(P_e>\alpha)\le\PP(E^c)+(\delta-\delta')\le\delta .
\]
For rejection sampling, on $E$ the accepted points are i.i.d.\ from $Q_{B'}$ and the Clopper--Pearson set at level $\alpha_+$ and confidence $1-(\delta-\delta')$ has $1-F_{Q_{B'}}\le\alpha_+$ with that probability, and the same displacement by $\Delta_{B'}\le r_+$ concludes.
\end{proof}

\subsection{Consistency of the margin rule}\label{app:margin}

\begin{proposition}[Margin rule]\label{prop:margin}
Fix $B'\in[1,B]$ with $\alpha'':=\alpha-\EE_P(v-B')^+>0$, and let
\[
T(s):=\EE_P\big[v_{B'}\ind{v_{B'}>s}\big],\qquad \theta:=\alpha''\mu_{B'},\qquad s^\star:=\inf\{s\ge0:\ T(s)\le\theta\},
\]
so that $s^\star/\mu_{B'}$ is the $(1-\alpha'')$-quantile $\rho_{\alpha''}$ of $\widetilde v_{B'}$ under $Q_{B'}$. Suppose $s^\star<B'$, $P(v_{B'}=s^\star)=0$, and the law of $v_{B'}$ under $P$ gives positive mass to every interval in a neighbourhood of $s^\star$. Let the upper confidence bounds $U_\eta$, on both halves, be the empirical Bernstein bounds \eqref{eq:eb}, at a fixed level $\eta$ (a valid but uninformative bound, such as $U_\eta(g)=R$, satisfies Proposition~\ref{prop:est} but never certifies $\rho$). If $n_1,n_2\to\infty$ with $n_1/n_2$ bounded and $s_R$ is chosen by \eqref{eq:margin}, then the probability that the test defining $\rho_+$ in Proposition~\ref{prop:est} passes tends to one, and $\rho_+\to\rho_{\alpha''}$ in probability. Since $\alpha_+\to\alpha''$, this is the value the certificate targets.
\end{proposition}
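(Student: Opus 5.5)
The plan is to show first that $s_R\to s^\star$ in probability. Then, conditionally on $I_1$, the margin $\kappa_1(s_R)$ will be seen to dominate every discrepancy between the $I_1$-version and the $I_2$-version of the test. Write $\bar T_k(s)$ for the empirical mean of $v_{B'}\ind{v_{B'}>s}$ on $I_k$ and $\widehat{\Var}_k(s)$ for its sample variance.

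\emph{Step 1: preliminaries.} By the strong law, the empirical Bernstein bound for $(v-B')^+$ on either half converges to $\EE_P(v-B')^+$. Hence $\alpha^{(k)}_+\mu^{(k)}_-\to\alpha''\mu_{B'}=\theta$ in probability for $k=1,2$, with deviations $O_P(n_k^{-1/2})$, because every half-width in \eqref{eq:eb} is of that order. Since $v_{B'}$ has no atom at $s^\star$, $T$ is continuous at $s^\star$, so $T(s^\star)=\theta$. Because $T(s^\star)=\theta>0$ and $s^\star<B'$, the variance $\varsigma^2:=\Var_P(v_{B'}\ind{v_{B'}>s^\star})$ is strictly positive, and $s\mapsto\Var_P(v_{B'}\ind{v_{B'}>s})$ is continuous at $s^\star$. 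By the positive-mass assumption, $T(s^\star+\epsilon)<\theta$ for small $\epsilon>0$, while $T(s)>\theta$ for $s<s^\star$ by definition of $s^\star$.

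\emph{Step 2: consistency of $s_R$.} Fix $\epsilon>0$. The class $\{v\ind{v>s}\}_s$ and its squares are monotone in $s$, so Glivenko--Cantelli gives $\sup_s|\bar T_1(s)-T(s)|\to0$ and uniform convergence of $\widehat{\Var}_1$. All half-widths, including $\kappa_1=O(\sqrt{\log n_1/n_1})$, vanish uniformly in $s$. Consequently, with probability tending to one:
\begin{itemize}
\item the test in \eqref{eq:margin} fails at every $s\le s^\star-\epsilon$, since there $T(s)\ge T(s^\star-\epsilon)>\theta$;
\item it passes at some $I_1$-value in $(s^\star,s^\star+\epsilon)$, which exists with probability tending to one by the positive-mass assumption, since there the limit $T<\theta$.
\end{itemize}
Hence $s_R\in(s^\star-\epsilon,s^\star+\epsilon)$ with probability tending to one, and the fallback $s_R=B'$ is not triggered.

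\emph{Step 3: the $I_2$ test passes (main obstacle).} Condition on $I_1$; then $s_R$ is fixed and $I_2$ is independent of it. By the choice of $s_R$,
\[
\bar T_1(s_R)+\sqrt{2\widehat{\Var}_1(s_R)\log(2/\eta)/n_1}+\kappa_1(s_R)+O(1/n_1)\le\alpha^{(1)}_+\mu^{(1)}_-.
\]
The quantity to control is
\[
U^{(2)}_\eta(s_R)-\alpha_+\mu_-=\bigl[\bar T_2(s_R)-\bar T_1(s_R)\bigr]+\bigl[\text{half-width}_2-\text{half-width}_1\bigr]+\bigl[\alpha^{(1)}_+\mu^{(1)}_--\alpha_+\mu_-\bigr]-\kappa_1(s_R)+(\text{nonpositive}),
\]
and we want to show it is $\le0$ with probability tending to one.
\begin{itemize}
\item The first bracket is $O_P(n_1^{-1/2}+n_2^{-1/2})$. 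Uniformity over $s$ handles the $I_1$ part; conditional Chebyshev given $s_R$ handles the $I_2$ part, with variance converging to $\varsigma^2$ by Step~2 and continuity.
\item The second and third brackets are $O_P(n^{-1/2})$, because $n_1/n_2$ is bounded and $\widehat{\Var}_k(s_R)\to\varsigma^2$.
\item The margin $\kappa_1(s_R)=\varsigma\sqrt{2\log n_1/n_1}\,(1+o_P(1))$ exceeds every $O_P(n^{-1/2})$ term by the diverging factor $\sqrt{\log n_1}$, since $\varsigma>0$.
\end{itemize}
So the test passes with probability tending to one. The delicate point is precisely that $\varsigma>0$ and the boundedness of $n_1/n_2$ are what make the comparison uniform. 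Finally $\rho_+=s_R/\mu_-\to s^\star/\mu_{B'}=\rho_{\alpha''}$ by Steps~1--2. The statement $\alpha_+\to\alpha''$ is Step~1.
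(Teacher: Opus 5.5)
Your proof follows the paper's argument step for step. Consistency of $s_R$ comes from uniform convergence, $T(s^\star)=\theta$, and strict monotonicity of $T$ near $s^\star$. Then, conditionally on $I_1$, the margin $\kappa_1(s_R)\asymp\varsigma\sqrt{\log n_1/n_1}$ absorbs every $O_P(n^{-1/2})$ discrepancy between the two halves, with $\varsigma>0$ and $n_1/n_2$ bounded playing exactly the roles you assign them.

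One step, however, is not delivered by the tool you cite. In Step~3 you need $\bar T_1(s_R)-T(s_R)=O_P(n_1^{-1/2})$, and $s_R$ is a function of $I_1$. So you need the uniform rate $\sup_s|\bar T_1(s)-T(s)|=O_P(n_1^{-1/2})$. Glivenko--Cantelli, which is all you invoke in Step~2, gives only $o_P(1)$, and that is useless against a margin of order $\sqrt{\log n_1/n_1}$. The paper obtains the rate by integration by parts. Let $F$ and $\widehat F_1$ be the population and $I_1$-empirical distribution functions of $v_{B'}\in[0,B']$. Then $T(s)=s(1-F(s))+\int_s^{B'}(1-F(u))\,du$, and the same identity holds for $\bar T_1$ with $\widehat F_1$. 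Hence $\sup_s|\bar T_1(s)-T(s)|\le 2B'\|\widehat F_1-F\|_\infty=O_P(n_1^{-1/2})$ by the Dvoretzky--Kiefer--Wolfowitz inequality. A Donsker argument for this one-parameter class would also work.

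There are also two smaller points.

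\emph{The fixed gap in Step~2.} Saying the constraint ``passes at some $I_1$-value in $(s^\star,s^\star+\epsilon)$ since there $T<\theta$'' is not enough. For a value close to $s^\star$, the gap $\theta-T$ can be smaller than the estimation error. Take the value in $(s^\star+\epsilon/2,s^\star+\epsilon)$ instead, where $\theta-T\ge\theta-T(s^\star+\epsilon/2)>0$ is a fixed gap (the paper's $c_\varepsilon$).

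\emph{Why $\varsigma>0$.} This does not follow from $s^\star<B'$. If $v_{B'}\ind{v_{B'}>s^\star}$ were a.s.\ constant, the constant $0$ would contradict $T(s^\star)=\theta>0$. A positive constant $c$ would force $v_{B'}=c$ a.s., hence $T(s^\star)=\mu_{B'}>\alpha''\mu_{B'}=\theta$, again a contradiction.

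With these repairs the proof is complete.
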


The regularity conditions say that $T$ is continuous at $s^\star$ and strictly decreasing near it, and they hold for the smooth shifts of Section~\ref{sec:experiments}. An atom of $v_{B'}$ is not by itself an obstacle; what the conditions exclude is a tail expectation exactly at the test boundary, a jump of $T$ that lands on $\theta$, where the probability that the constraint in \eqref{eq:margin} holds at $s^\star$ itself tends to zero and the rule selects the smallest observed value above $s^\star$ that satisfies \eqref{eq:margin}, still with a test that passes with probability tending to one; the same argument then gives
\[
\rho_+\ \to\ \frac{1}{\mu_{B'}}\inf\big\{s>s^\star:\ P\big(v_{B'}\in(s^\star,s]\big)>0\big\},
\]
which exceeds $\rho_{\alpha''}$ only when the support of $v_{B'}$ has a gap to the right of $s^\star$. For a two-atom ratio with values $v_{\rm lo}<v_{\rm hi}$ and $\EE_P[v_{B'}\ind{v_{B'}=v_{\rm hi}}]\ne\theta$, the same argument (with the constraint evaluated at the two support points, across which $T$ jumps from $\mu_{B'}$ to $\EE_P[v_{B'}\ind{v_{B'}=v_{\rm hi}}]$ to $0$) shows that the rule selects the right atom and the test passes with probability tending to one.

\begin{example}\label{ex:margin}
Let $v$ under $P$ be the mixture
\[
0.9849\,\mathrm{Unif}[c-0.1,c+0.1]+0.015\,\mathrm{Unif}[1.9,2.1]+0.0001\,\mathrm{Unif}[99.9,100.1]
\]
with $c=0.96/0.9849$
(unit mean, $B=100.1$), $\alpha=0.05$, $\delta-\delta'=0.04$, $\eta=0.0025$ and $B'=B$. Then $K^2=2.00$, $\Lambda(\alpha)=1.07$ and $\rho_\alpha=1.07$; $T$ equals $0.04$ exactly on $[c+0.1,1.9)\approx[1.075,1.9)$. At $m=5000$ the Bentkus inflation is $0.0334$ with these functionals and $0.0367$, ten percent more, with $\rho_\alpha$ replaced by $B$; every $\rho\le20$ gives an inflation within $0.0001$ of the first value. With $n_1=n_2=5\times10^5$ the rule \eqref{eq:margin} selects $s_R\approx1.94$, past the flat part, and the test on $I_2$ passed in $199$ of $200$ replications, against $89$ of $200$ for the rule without margin; with $n_1=n_2=5\times10^4$ the range term $7B\log(2/\eta)/(3n_2)\approx0.031$ of the confidence bound leaves no room below $\alpha$, and the certified $\rho_+$ is in the upper uniform component under either rule. This is the cost of certifying a rare-event expectation with $B=100$, not of the rule.
\end{example}

\begin{proof}[Proof of Proposition~\ref{prop:margin}]
Write $n=n_1$, and let $F$ be the distribution function of $v_{B'}$ under $P$, $\widehat F_1$ its empirical version on $I_1$, $\bar g_1(s)$ and $\widehat{\Var}_1(s)$ the sample mean and variance of $v_{B'}\ind{v_{B'}>s}$ on $I_1$, and $V(s):=\Var_P(v_{B'}\ind{v_{B'}>s})$. Integrating by parts,
\begin{align*}
T(s)&=s\big(1-F(s)\big)+\int_s^{B'}\big(1-F(u)\big)\,du,\\
T_2(s)&:=\EE_P\big[v_{B'}^2\ind{v_{B'}>s}\big]=s^2\big(1-F(s)\big)+\int_s^{B'}2u\big(1-F(u)\big)\,du,
\end{align*}
and the same identities hold for the empirical versions with $\widehat F_1$; hence, with $\widehat T_2$ the empirical second moment,
\[
\sup_s|\bar g_1(s)-T(s)|\le2B'\|\widehat F_1-F\|_\infty,\qquad \sup_s|\widehat T_2(s)-T_2(s)|\le2B'^2\|\widehat F_1-F\|_\infty,
\]
and, since $\widehat{\Var}_1=\frac{n}{n-1}(\widehat T_2-\bar g_1^2)$ with $\bar g_1,T\le B'$,
\[
\sup_s|\widehat{\Var}_1(s)-V(s)|\le\frac{6n}{n-1}B'^2\|\widehat F_1-F\|_\infty+\frac{B'^2}{n-1};
\]
all are $O_p(n^{-1/2})$ by the Dvoretzky--Kiefer--Wolfowitz inequality \citep{dvoretzky1956asymptotic}. The half-width of $U^{(1)}_\eta$ and the margin $\kappa_1$ are at most
\[
B'\sqrt{\frac{2\log(2/\eta)}{n}}+\frac{7B'\log(2/\eta)}{3(n-1)}\qquad\text{and}\qquad B'\sqrt{\frac{2\log n}{n}}
\]
uniformly in $s$, and $\theta_1:=\alpha^{(1)}_+\mu^{(1)}_-\to\theta$ in probability, since $r^{(1)}_+$ is a consistent estimate of $\EE_P(v-B')^+$ plus a half-width that vanishes (and $r_+^{(1)}=0$ when $B'=B$). Consequently
\begin{equation}\label{eq:margin_unif}
\sup_s\big|U^{(1)}_\eta\big(v_{B'}\ind{v_{B'}>s}\big)+\kappa_1(s)-T(s)\big|+|\theta_1-\theta|=:e_n\to0\quad\text{in probability.}
\end{equation}

\emph{Step 1: $s_R\to s^\star$.} $T$ is right-continuous and nonincreasing, so $T(s^\star)\le\theta\le T(s^\star-)$, and $P(v_{B'}=s^\star)=0$ gives $T(s^\star)=\theta$; the positive-mass condition makes $T$ strictly decreasing on some $[s^\star-\varepsilon_0,s^\star+\varepsilon_0]\subset(0,B')$. Fix $\varepsilon\in(0,\varepsilon_0)$ and put
\[
c_\varepsilon:=\min\big\{\theta-T(s^\star+\varepsilon/2),\ T(s^\star-\varepsilon)-\theta\big\}>0 .
\]
With probability tending to one, $I_1$ contains a value $s'\in(s^\star+\varepsilon/2,s^\star+\varepsilon)$ and $e_n<c_\varepsilon/2$. On this event the constraint in \eqref{eq:margin} holds at $s'$, because its left side is at most
\[
T(s')+e_n\le\theta-c_\varepsilon+e_n<\theta-e_n\le\theta_1,
\]
so $s_R\le s'<s^\star+\varepsilon$; and it fails at every $s\le s^\star-\varepsilon$, because there the left side is at least
\[
T(s)-e_n\ge\theta+c_\varepsilon-e_n>\theta+e_n\ge\theta_1,
\]
so $s_R\ge s^\star-\varepsilon$. Hence $s_R\to s^\star$ in probability, and by the uniform convergence of $\widehat{\Var}_1$ and the continuity of $V$ at $s^\star$ (no atom there), $\widehat{\Var}_1(s_R)\to V(s^\star)$ in probability. Moreover $V(s^\star)>0$: otherwise $v_{B'}\ind{v_{B'}>s^\star}$ is a.s.\ constant, either $0$, contradicting $T(s^\star)=\theta>0$, or a constant $c>s^\star$ attained with probability one, giving $T(s^\star)=\mu_{B'}>\alpha''\mu_{B'}=\theta$.

\emph{Step 2: the test passes.} Let $\bar g_2$, $\widehat{\Var}_2$ denote the $I_2$-versions and $\theta_2:=\alpha_+\mu_-$ the threshold of the test in Proposition~\ref{prop:est}. Conditionally on $I_1$ the point $s_R$ is fixed, so $\bar g_2(s_R)-T(s_R)=O_p(n_2^{-1/2})$ by Chebyshev's inequality, and $\bar g_1(s_R)-T(s_R)=O_p(n^{-1/2})$ by the uniform bound above. The half-widths of $U^{(2)}_\eta$ and $U^{(1)}_\eta$ at $s_R$ are each $O(n^{-1/2})$ deterministically (their variance terms are at most $B'\sqrt{2\log(2/\eta)/n_j}$), and $\theta_2-\theta_1=O_p(n^{-1/2})$ for the same reasons applied to $(v-B')^+$. Since $s_R$ satisfies \eqref{eq:margin} when the set there is nonempty, which happens with probability tending to one by Step~1,
\begin{align*}
U^{(2)}_\eta\big(v_{B'}\ind{v_{B'}>s_R}\big)-\theta_2&=\big[U^{(1)}_\eta\big(v_{B'}\ind{v_{B'}>s_R}\big)-\theta_1\big]+O_p(n^{-1/2})\\
&\le-\kappa_1(s_R)+O_p(n^{-1/2}),
\end{align*}
and by Step~1
\[
\sqrt n\,\kappa_1(s_R)=\sqrt{2\widehat{\Var}_1(s_R)\log n}\ \to\ \infty\quad\text{in probability.}
\]
Hence the right side is negative with probability tending to one, i.e.\ the test passes, and on that event $\rho_+=s_R/\mu_-\to s^\star/\mu_{B'}=\rho_{\alpha''}$ because $\mu_-=1-r_+\to\mu_{B'}$.
\end{proof}

\section{Experimental details}\label{app:experiments}

\paragraph{Tail functionals.} In the synthetic designs $v$ depends on $x_1$ alone, so the law of $v$ under $P$ is available exactly: it has two atoms in the rare-subgroup designs ($v=1-\pi$ with probability $1-p_A$ and $v=1-\pi+\pi/p_A$ with probability $p_A$), and for the tilts it is represented by midpoint quadrature on $10^6$ points of the range of $x_1$ (uniform, or truncated normal with the exact normalizer $\EE_Pe^{cX_1}$); Sections~\ref{sec:exp_smooth} and \ref{sec:exp_frontier}--\ref{sec:exp_est} use this representation (a Monte Carlo estimate from $2\times10^6$ draws would have a relative standard error of about $4.6\%$ for $\Lambda(0.05)$ in the $B=51$ design, which is not negligible for the width comparisons there). Section~\ref{sec:exp_estimated} uses only the sup-$B$ inflation with the estimated ratio, and Section~\ref{sec:exp_real} uses the pool. In all cases the functionals are computed from values $v_i$ with probabilities $p_i$ (equal to $1/N$ for a sample) through $\EE_Q[g(v)]=\EE_P[v\,g(v)]$: $K^2=\sum_ip_iv_i^2$, $\rho_\alpha$ is the $(1-\alpha)$-quantile of $\{v_i\}$ under the probabilities $p_iv_i$, and $\Lambda(\alpha)=\sum_ip_iv_i^2\ind{v_i>\rho_\alpha}+\big(\alpha-\sum_ip_iv_i\ind{v_i>\rho_\alpha}\big)\rho_\alpha$.

\paragraph{The worst case (Remark~\ref{rem:beta}).}\label{app:worst} As a numerical check of Remark~\ref{rem:beta}, we use the construction of Theorem~\ref{thm:lower}, $v\in\{0,B\}$, with continuous scores, where all coverages have the exact Beta law; this allows $4000$ replications per configuration and an ``oracle'' inflation obtained by bisection. We record, for $\alpha=\delta=0.05$, the failure fraction and the median excess coverage of each procedure as a function of $m\in\{500,\dots,16000\}$ for $B\in\{1,4,16\}$ (the full results are stored with the code). Three things stand out. First, the uninflated weighted quantile fails for $50$--$57\%$ of the calibration sets in every configuration, as it must, since its coverage is centred at $1-\alpha$; and the empirical standard deviation of its coverage equals $\sqrt{B\alpha(1-\alpha)/m}$ to within $4\%$ in the $16$ configurations with $m/B\ge125$, and to within $15\%$ in the two with fewer than about $65$ weighted points on average, confirming Remark~\ref{rem:beta}. Second, the sup-$B$ Bernstein inflation \eqref{eq:gamma_m} (the tail-adaptive version \eqref{eq:gamma_tail} has the same leading term here, since $\Lambda(\alpha)=\alpha B$, and a slightly smaller lower-order term: $0.0121$ against $0.0141$ for $B=1$, $m=2000$) meets the target ($\le0.7\%$ failures) with an inflation $\gamma_m(\delta)$ between $1.6$ and $2.2$ times the oracle inflation where it is feasible, the ratio decreasing in $m$; the oracle itself is $1.5$--$1.7$ times $\sqrt{B\alpha(1-\alpha)/m}$ for $m/B\ge125$, i.e.\ about $z_{0.95}$ standard deviations, as the Beta law predicts. Third, the rejection-sampling set of \citet{park2022pac} essentially coincides with the oracle in this example (median coverages within $0.002$ for $m/B\ge250$) (the accepted points are exactly the $N$ informative ones and the Clopper--Pearson bound is exact for a binomial count): in the worst case the two procedures have the same effective sample size, and the remaining slack of the Bernstein bound is in the concentration inequality, not in the procedure. The Hoeffding inflation, whose leading term scales with $B$ rather than $\sqrt{B\alpha(1-\alpha)}$, is infeasible ($\gamma\ge\alpha$, set equal to $\cY$) for $B=4$ unless $m>8000$ and for $B=16$ throughout; the Bernstein inflation is infeasible for $B=16$ and $m\le4000$, where even the oracle cannot reach $\delta$ at $m=500$. Implementation: For each replication, $N\sim\mathrm{Bin}(m,1/B)$ without truncation, and the coverage of the weighted quantile at level $\ell$ is drawn from $\mathrm{Beta}(k,N+1-k)$ with $k=\lceil\ell N\rceil$ when $N\ge1$ and $k\le N$, and set to $1$ otherwise (the set is all of $\cY$; $N=0$ has probability at most $(1-1/16)^{500}<10^{-13}$ in the configurations used and did not occur). RS+CP accepts exactly the $N$ points and allows $k^\star$ exceedances, the largest $k$ with $\mathrm{Beta}^{-1}(1-\delta;k+1,N-k)\le\alpha$, giving coverage $\mathrm{Beta}(N-k^\star,k^\star+1)$. Each replication draws $N$ and one uniform $U$ once; the coverage at level $\ell$ is the $U$-quantile of the corresponding Beta law, so that all levels, including those visited by the bisection for the oracle, are evaluated on the same draws.

\paragraph{Smooth shift and estimated weights (Sections~\ref{sec:exp_smooth}--\ref{sec:exp_estimated}).} Samples from $Q$ are obtained by rejection sampling from $P$ with acceptance probability $v(x)/B$. In Section~\ref{sec:exp_estimated}, $\Delta$, $\TV(Q,\widehat Q)$ and the score-projected error are computed on $2\times10^5$ points from $P$, the last one by conditioning on $200$ equal-probability bins of the score.

\paragraph{Run-to-run variability (Section~\ref{sec:exp_frontier}, Table~\ref{tab:rerun}).} Calibration sets are drawn from $P$ as in Section~\ref{sec:exp_frontier}; the $300$ reruns differ only in the acceptance coins of rejection sampling, and coverage is evaluated on the same $2\times10^6$ reference scores.

\paragraph{Importance-weighted learn-then-test.} The fixed grid of thresholds consists of the $999$ quantiles $i/1000$ of the score under $Q$, computed from the $2\times10^6$ reference scores and therefore independent of the calibration data; thresholds are tested in decreasing order and the procedure returns the smallest threshold such that every larger one was rejected at level $\delta$ (all of $\cY$ if the first is not). The Hoeffding--Bentkus p-value is
\[
\min\Big\{\exp\big(-m\,h(\min\{\widehat R/B,\alpha/B\},\alpha/B)\big),\ e\,F_{\mathrm{Bin}(m,\alpha/B)}\big(\lceil m\widehat R/B\rceil\big)\Big\}
\]
with $h$ the Bernoulli relative entropy and $\widehat R$ the mean weighted loss (the Hoeffding branch is a lower-tail bound and equals one when $\widehat R\ge\alpha$), as in Appendix~B of \citet{almeida2025high}.

\paragraph{Bentkus inflation.} The supremum in Proposition~\ref{prop:bentkus} is handled as follows. For a candidate $\gamma$, $[\gamma,\beta]$ is covered by $1999$ geometric intervals $[t_j,t_{j+1}]$ ($2000$ points). The true deficit $t$ of the proof lies in one of them, and on $[t_j,t_{j+1}]$ the centred summands have variance at most $\sigma^2(t_{j+1})$ and range at most $b(t_{j+1})$, since $\sigma^2(t)=(\alpha-\gamma)^2K^2+(1-2\alpha+2\gamma)[\Lambda(\alpha)+(t-\gamma)\rho_\alpha]$ and $b(t)=(\alpha-\gamma)B+t$ are nondecreasing; Bentkus's theorem applied directly to these summands with those upper bounds, and with the smaller threshold $mt_j\le mt$, gives $\PP(\overline U-\EE\overline U\ge t)\le\mathrm{Bk}_m(mt_j;\sigma^2(t_{j+1}),b(t_{j+1}))$. The maximum of the right-hand side over $j$ therefore bounds the probability in Proposition~\ref{prop:bentkus} whatever the value of $t$ (no monotonicity of the function $\mathrm{Bk}_m$ in its variance and range arguments is used or claimed). The implemented inflation is found by bisection so that the maximum interval bound is at most $\delta$, and is therefore a valid inflation; it is a different certificate from the supremum-based $\gamma_m^{\rm Bk}(\delta)$ of Proposition~\ref{prop:bentkus} (neither dominates the other in general, since $\mathrm{Bk}_m$ is not monotone in its variance and range arguments), and it exceeds the pointwise value at $t=\gamma$ by less than $0.1\%$ in our configurations. Since the bound need not be monotone in $\gamma$, the bisection is not guaranteed to return the smallest feasible $\gamma$; for every unclipped and clipped Bentkus inflation of Table~\ref{tab:frontier} we checked that the interval bound exceeds $\delta$ at all $399$ points of a uniform grid of $(0,\gamma)$, so the reported values are minimal up to that grid for the implemented interval certificate (this says nothing about the supremum-based certificate of Proposition~\ref{prop:bentkus}). The range $B$ entering all population certificates is the known supremum of $v$ (for the clipped ones $\min(B,B')/\mu_{B'}$), not the maximum over quadrature nodes or sample points, which is not a certified supremum.

\paragraph{Widths.} All reported widths are medians over all replications, with a set equal to $\cY$ counted as infinite width, relative to the median width of the uninflated weighted quantile ($0.9$-quantiles, stored with the results, use the inverted-CDF convention so that they are infinite when more than $10\%$ of the sets are $\cY$). For Table~\ref{tab:frontier}, bootstrap standard errors are the standard deviation of $300$ bootstrap medians computed on the same basis; when any bootstrap median is infinite the standard error is reported as $\infty$ together with the frequency of infinite bootstrap medians (this occurs only in cells whose median is itself $\infty$). No bootstrap summaries are computed for Table~\ref{tab:est}. A median is reported as $\infty$ when more than half of the replications give $\cY$.

\paragraph{Estimated functionals (Section~\ref{sec:exp_est}).} For each replication a fresh source sample of size $n_s$ is drawn and split in halves. On the first half, for each clip level $B'$ on the geometric grid of Section~\ref{sec:exp_frontier}, the certificate of Proposition~\ref{prop:est} is computed with the first half in the role of $I_2$ (Bernstein inflation for the weighted quantile; the Bentkus version reuses the Bernstein selection), and $B'$ minimizing $r_++\widehat\gamma$ is selected, together with the grid points: $s_L$, used in $\Lambda_+$, is the $v_{B'}$-weighted empirical $(1-\alpha_+)$-quantile of $v_{B'}$ on the first half; $s_R$, used in the test for $\rho_+$ (Bentkus only), is chosen by the margin rule \eqref{eq:margin} on the first half, i.e.\ the smallest distinct value of $v_{B'}$ at which the first-half upper confidence bound on $\EE_P[v_{B'}\ind{v_{B'}>s}]$ plus the margin $\kappa_1(s)$ is at most $\alpha^{(1)}_+\mu^{(1)}_-$ (the clip level $B'$ if none is), with $\alpha^{(1)}_+,\mu^{(1)}_-$ the first-half versions of $\alpha_+,\mu_-$. The outcome of the test on the second half (pass or fail, among the replications in which the $\rho$-certification test was evaluated, i.e.\ those with $\alpha_+>0$ on both halves, whether or not the resulting inflation is finite) is recorded with the results. On the second half the upper confidence bounds are the empirical Bernstein bounds of \citet{maurer2009empirical} at level $\eta=\delta'/J$ with $\delta'=\delta/5$, and the inflation is computed at level $\delta-\delta'$. Training-conditional coverage in this experiment is evaluated on $2\times10^6$ test scores from $Q$, since the certified excesses at $m=20000$ are of the order of the Monte Carlo error of a $3\times10^5$-point reference.

\paragraph{Real data (Section~\ref{sec:exp_real}).} Tilt vectors are those of \citet[App.~E]{pournaderi2026training}: $\beta=(0.5,1,0,\dots)$ for Wine quality, $(0,10,10,0,\dots)$ on (length, diameter) for Abalone, $(0.01,0.01,0,\dots)$ for Concrete and $(0.2,0,0,0)$ for CCPP, applied to the raw features; $v(x)=e^{x^\top\beta}/\mathrm{mean}_{\rm pool}(e^{X^\top\beta})$, normalized to unit mean on the pool (the data left after the training split), and $B=\max_{\rm pool}v$. The random forest has $200$ trees with at least $5$ samples per leaf. The scaled tilts multiply $\beta_{\rm data}$ by the factor that makes $\max_{\rm pool}v=4$. The sample sizes are $(n,n_{\rm train},m)=(6497,1500,3000)$ for Wine quality, $(4177,1000,2000)$ for Abalone, $(1030,300,500)$ for Concrete and $(9568,2000,4000)$ for CCPP; the pool is the remainder of the data after the training split.
\end{appendix}

\begin{acks}[Acknowledgments]
Claude Fable 5.1 (Anthropic) and GPT 6 Astra were used in the preparation of this manuscript; the author thanks Javad Alipanah for providing access to them. The author retains full responsibility for the mathematical content and conclusions of the paper.
\end{acks}

\begin{supplement}
\stitle{Code and results}
\sdescription{Python code that reproduces every table and figure, together with the stored simulation results, is available at \url{https://github.com/mehrdad-pournaderi/Sharp-training-conditional-coverage-for-conformal-prediction-under-covariate-shift}.}
\end{supplement}

\bibliographystyle{imsart-nameyear}
\bibliography{refs}

\end{document}

%% file: tables/tab_smooth.tex
\begin{tabular}{cccccccccc}\toprule
\multicolumn{10}{l}{(a) fraction of calibration sets with $P_e>\alpha$}\\
$B$ & $\mathrm{CVaR}_\alpha$ & none & WQ & WQ-Bk & \shortstack{WQ-Bk\\clip} & RS+CP & \shortstack{RS+CP\\clip} & IW-HB & IW-EB\\\midrule
1.00 & 1.00 & 0.528 & 0.002 & 0.012 & 0.012 & 0.047 & 0.047 & 0.011 & 0.000\\
2.31 & 2.26 & 0.501 & 0.001 & 0.005 & 0.005 & 0.039 & 0.045 & 0.004 & 0.000\\
4.07 & 3.97 & 0.479 & 0.000 & 0.001 & 0.001 & 0.037 & 0.026 & 0.001 & 0.000\\
6.01 & 5.86 & 0.489 & 0.000 & 0.000 & 0.000 & 0.027 & 0.027 & 0.000 & 0.000\\
\bottomrule\end{tabular}

\medskip

\begin{tabular}{ccccccccc}\toprule
\multicolumn{9}{l}{(b) median width relative to the uninflated set (clip level $B'$ in parentheses)}\\
$B$ & $\mathrm{CVaR}_\alpha$ & WQ & WQ-Bk & \shortstack{WQ-Bk\\clip} & RS+CP & \shortstack{RS+CP\\clip} & IW-HB & IW-EB\\\midrule
1.00 & 1.00 & 1.082 & 1.071 & 1.071 (1.0) & 1.054 & 1.054 (1.0) & 1.070 & 1.109\\
2.31 & 2.26 & 1.112 & 1.097 & 1.097 (2.3) & 1.072 & 1.071 (2.3) & 1.104 & 1.160\\
4.07 & 3.97 & 1.153 & 1.131 & 1.131 (4.1) & 1.096 & 1.096 (4.1) & 1.134 & 1.229\\
6.01 & 5.86 & 1.195 & 1.164 & 1.164 (6.0) & 1.118 & 1.125 (5.6) & 1.171 & 1.317\\
\bottomrule\end{tabular}

%% file: tables/tab_estimated.tex
\begin{tabular}{llcccccccc}\toprule
estimator & $n_w$ & $\widetilde B$ & $\|\widehat v-v\|_{L^1}$ & $\mathrm{TV}(Q,\widehat Q)$ & $\Delta$ & $\gamma_m(\delta)$ & fail (none) & fail (Bern.) & med.\ cov.\ (Bern.)\\\midrule
logistic & 250 & 6.16 & 0.121 & 0.0579 & 0.0028 & 0.055 & 0.492 & 0.000 & 0.9554\\
logistic & 1000 & 6.47 & 0.117 & 0.0580 & 0.0021 & 0.057 & 0.446 & 0.000 & 0.9577\\
logistic & 4000 & 4.47 & 0.041 & 0.0202 & 0.0047 & 0.045 & 0.568 & 0.000 & 0.9433\\
kde & 1000 & 36.47 & 0.686 & 0.1888 & 0.0032 & 0.196 & 0.522 & 0.000 & 1.0000\\
\bottomrule\end{tabular}

%% file: tables/tab_real.tex
\begin{tabular}{llrrcccccccc}\toprule
\multicolumn{12}{l}{(a) fraction of calibration sets with $P_e>\alpha$}\\
dataset & tilt & $B$ & $\mathrm{CVaR}_\alpha$ & none & WQ & WQ-Bk & \shortstack{WQ-Bk\\clip} & RS+CP & \shortstack{RS+CP\\clip} & IW-HB & IW-EB\\\midrule
Wine & original & 57.6 & 30.8 & 0.532 & 0.000 & 0.000 & 0.000 & 0.024 & 0.004 & 0.000 & 0.000\\
Wine & B=4 & 4.0 & 2.0 & 0.496 & 0.000 & 0.001 & 0.001 & 0.043 & 0.028 & 0.000 & 0.000\\
Abalone & original & 43.9 & 20.6 & 0.513 & 0.000 & 0.000 & 0.000 & 0.027 & 0.003 & 0.000 & 0.000\\
Abalone & B=4 & 4.0 & 2.5 & 0.474 & 0.000 & 0.002 & 0.002 & 0.038 & 0.042 & 0.000 & 0.000\\
Concrete & original & 7.1 & 6.3 & 0.482 & 0.000 & 0.001 & 0.001 & 0.030 & 0.026 & 0.001 & 0.000\\
Concrete & B=4 & 4.0 & 3.5 & 0.512 & 0.000 & 0.003 & 0.003 & 0.042 & 0.029 & 0.002 & 0.000\\
CCPP & original & 13.2 & 6.9 & 0.493 & 0.000 & 0.000 & 0.000 & 0.041 & 0.031 & 0.000 & 0.000\\
CCPP & B=4 & 4.0 & 2.7 & 0.473 & 0.000 & 0.000 & 0.000 & 0.040 & 0.027 & 0.000 & 0.000\\
\bottomrule\end{tabular}

\medskip

\begin{tabular}{llrrccccccc}\toprule
\multicolumn{11}{l}{(b) median width relative to the uninflated set (clip level $B'$ in parentheses)}\\
dataset & tilt & $B$ & $\mathrm{CVaR}_\alpha$ & WQ & WQ-Bk & \shortstack{WQ-Bk\\clip} & RS+CP & \shortstack{RS+CP\\clip} & IW-HB & IW-EB\\\midrule
Wine & original & 57.6 & 30.8 & 1.629 & 1.444 & 1.416 (34.7) & 1.473 & 1.354 (17.7) & 1.753 & $\infty$\\
Wine & B=4 & 4.0 & 2.0 & 1.081 & 1.071 & 1.071 (4.0) & 1.080 & 1.069 (2.7) & 1.117 & 1.124\\
Abalone & original & 43.9 & 20.6 & 2.041 & 1.782 & 1.752 (27.4) & 1.937 & 1.512 (14.6) & 2.330 & $\infty$\\
Abalone & B=4 & 4.0 & 2.5 & 1.167 & 1.147 & 1.147 (4.0) & 1.145 & 1.132 (3.0) & 1.185 & 1.214\\
Concrete & original & 7.1 & 6.3 & 1.936 & 1.496 & 1.496 (7.1) & 1.326 & 1.326 (6.0) & 1.496 & $\infty$\\
Concrete & B=4 & 4.0 & 3.5 & 1.534 & 1.416 & 1.416 (4.0) & 1.408 & 1.322 (3.8) & 1.417 & 2.027\\
CCPP & original & 13.2 & 6.9 & 1.117 & 1.094 & 1.094 (13.2) & 1.105 & 1.075 (7.7) & 1.153 & 1.191\\
CCPP & B=4 & 4.0 & 2.7 & 1.061 & 1.055 & 1.055 (4.0) & 1.048 & 1.045 (3.0) & 1.073 & 1.075\\
\bottomrule\end{tabular}

%% file: tables/tab_frontier.tex
\begin{tabular}{llrr ccccccccccc}\toprule
shift & $\delta$ & $m$ & $\mathrm{CVaR}_\alpha/B$ & WQ & WQ-Bk & RS & RS clip ($B'$) & WQ clip ($B'$) & WQ-Bk clip ($B'$) & CP-C & CP+TV & IW-HB & IW-EB & emp.\ oracle\\\midrule
rare, $\pi=0.01$, $B=21$ & 0.05 & 2000 & 0.25 & 1.228 & 1.192 & 1.368 & 1.118 (1) & 1.145 (4) & 1.135 (5) & 1.991 & \textbf{1.117} & 1.566 & $\infty$ & 1.059\\
rare, $\pi=0.01$, $B=21$ & 0.05 & 5000 & 0.25 & 1.120 & 1.104 & 1.184 & 1.093 (1) & 1.103 (8) & 1.095 (10) & 1.883 & \textbf{1.092} & 1.270 & 1.435$^{2\%}$ & 1.039\\
rare, $\pi=0.01$, $B=21$ & 0.05 & 20000 & 0.25 & 1.053 & \textbf{1.048} & 1.078 & 1.076 (21) & 1.053 (21) & \textbf{1.048} (21) & 1.818 & 1.072 & 1.110 & 1.094 & 1.019\\
rare, $\pi=0.01$, $B=21$ & 0.01 & 2000 & 0.25 & 1.331 & 1.272 & 1.715$^{27\%}$ & \textbf{1.142} (1) & 1.175 (3) & 1.161 (3) & 2.182 & 1.145 & $\infty$ & $\infty$ & 1.076\\
rare, $\pi=0.01$, $B=21$ & 0.01 & 5000 & 0.25 & 1.158 & 1.139 & 1.275 & 1.106 (1) & 1.117 (5) & 1.110 (6) & 1.960 & \textbf{1.106} & 1.367 & $\infty$ & 1.052\\
rare, $\pi=0.01$, $B=21$ & 0.01 & 20000 & 0.25 & 1.068 & \textbf{1.061} & 1.114 & 1.080 (1) & 1.068 (21) & \textbf{1.061} (21) & 1.847 & 1.080 & 1.145 & 1.136 & 1.027\\
\addlinespace
rare, $\pi=0.02$, $B=21$ & 0.05 & 2000 & 0.45 & 1.374 & 1.298 & 1.387 & \textbf{1.200} (1) & 1.230 (4) & 1.216 (5) & 1.995 & 1.200 & 1.574$^{2\%}$ & $\infty$ & 1.065\\
rare, $\pi=0.02$, $B=21$ & 0.05 & 5000 & 0.45 & 1.175 & \textbf{1.151} & 1.180 & 1.174 (1) & 1.171 (13) & \textbf{1.151} (21) & 1.881 & 1.173 & 1.261 & 1.441$^{3\%}$ & 1.043\\
rare, $\pi=0.02$, $B=21$ & 0.05 & 20000 & 0.45 & 1.074 & \textbf{1.066} & 1.076 & 1.078 (21) & 1.074 (21) & \textbf{1.066} (21) & 1.805 & 1.147 & 1.110 & 1.102 & 1.023\\
rare, $\pi=0.02$, $B=21$ & 0.01 & 2000 & 0.45 & 1.715 & 1.464 & 1.701$^{27\%}$ & 1.229 (1) & 1.263 (3) & 1.243 (3) & 2.164 & \textbf{1.228} & $\infty$ & $\infty$ & 1.089\\
rare, $\pi=0.02$, $B=21$ & 0.01 & 5000 & 0.45 & 1.237 & 1.206 & 1.277 & 1.184 (1) & 1.192 (6) & \textbf{1.182} (8) & 1.955 & 1.184 & 1.368 & $\infty$ & 1.055\\
rare, $\pi=0.02$, $B=21$ & 0.01 & 20000 & 0.45 & 1.095 & \textbf{1.086} & 1.114 & 1.111 (21) & 1.095 (21) & \textbf{1.086} (21) & 1.836 & 1.154 & 1.144 & 1.144 & 1.034\\
\addlinespace
rare, $\pi=0.01$, $B=51$ & 0.05 & 2000 & 0.22 & 1.492 & 1.367 & $\infty$ & 1.125 (1) & 1.154 (4) & 1.140 (4) & $\infty$ & \textbf{1.123} & $\infty$ & $\infty$ & 1.062\\
rare, $\pi=0.01$, $B=51$ & 0.05 & 5000 & 0.22 & 1.202 & 1.173 & 1.359 & \textbf{1.096} (1) & 1.111 (6) & 1.106 (7) & 2.179 & 1.097 & 1.573$^{1\%}$ & $\infty$ & 1.042\\
rare, $\pi=0.01$, $B=51$ & 0.05 & 20000 & 0.22 & 1.082 & 1.073 & 1.135 & 1.073 (1) & 1.074 (22) & \textbf{1.071} (37) & 2.045 & 1.073 & 1.196 & 1.196 & 1.024\\
rare, $\pi=0.01$, $B=51$ & 0.01 & 2000 & 0.22 & $\infty$ & 1.637 & $\infty$ & \textbf{1.145} (1) & 1.178 (3) & 1.161 (1) & $\infty$ & 1.147 & $\infty$ & $\infty$ & 1.085\\
rare, $\pi=0.01$, $B=51$ & 0.01 & 5000 & 0.22 & 1.286 & 1.240 & 1.619$^{17\%}$ & \textbf{1.111} (1) & 1.127 (5) & 1.119 (5) & 2.349 & 1.111 & $\infty$ & $\infty$ & 1.053\\
rare, $\pi=0.01$, $B=51$ & 0.01 & 20000 & 0.22 & 1.107 & 1.096 & 1.193 & 1.082 (1) & 1.085 (14) & \textbf{1.081} (16) & 2.101 & 1.082 & 1.257 & 1.332 & 1.029\\
\addlinespace
Gaussian, $c=1$ & 0.05 & 2000 & 0.40 & 1.564 & 1.396 & 1.754$^{41\%}$ & \textbf{1.281} (10) & 1.498 (12) & 1.383 (16) & 1.947 & $\infty$ & 2.007$^{24\%}$ & $\infty$ & 1.112\\
Gaussian, $c=1$ & 0.05 & 5000 & 0.40 & 1.216 & 1.180 & 1.237 & \textbf{1.151} (12) & 1.214 (21) & 1.179 (25) & 1.824 & $\infty$ & 1.353 & $\infty$ & 1.062\\
Gaussian, $c=1$ & 0.05 & 20000 & 0.40 & 1.086 & 1.077 & 1.098 & \textbf{1.073} (16) & 1.086 (33) & 1.077 (33) & 1.694 & $\infty$ & 1.146 & 1.165 & 1.034\\
Gaussian, $c=1$ & 0.01 & 2000 & 0.40 & $\infty$ & 1.731 & $\infty$ & \textbf{1.422} (9) & -- & 1.628 (10) & $\infty$ & $\infty$ & $\infty$ & $\infty$ & 1.171\\
Gaussian, $c=1$ & 0.01 & 5000 & 0.40 & 1.314 & 1.261 & 1.392 & \textbf{1.223} (10) & 1.306 (18) & 1.256 (18) & 1.938 & $\infty$ & 1.558$^{1\%}$ & $\infty$ & 1.107\\
Gaussian, $c=1$ & 0.01 & 20000 & 0.40 & 1.113 & \textbf{1.100} & 1.134 & 1.102 (16) & 1.113 (29) & \textbf{1.100} (33) & 1.725 & $\infty$ & 1.177 & 1.245 & 1.044\\
\bottomrule\end{tabular}

%% file: tables/tab_rerun.tex
\begin{tabular}{llc ccc ccc}\toprule
 & & WQ-Bern.\ & \multicolumn{3}{c}{RS+CP coverage over $300$ reruns} & \multicolumn{3}{c}{RS+CP width / WQ width}\\
\cmidrule(lr){4-6}\cmidrule(lr){7-9}
shift & set & coverage & min & max & IQR & min & median & max\\\midrule
tilt, $c=2$, $B=4$ & 0 & 0.961 & 0.944 & 0.969 & 0.0045 & 0.922 & 0.972 & 1.048\\
tilt, $c=2$, $B=4$ & 1 & 0.962 & 0.951 & 0.964 & 0.0034 & 0.943 & 0.979 & 1.010\\
rare, $\pi=0.01$, $B=21$ & 0 & 0.964 & 0.933 & 0.988 & 0.0128 & 0.838 & 1.039 & 1.266\\
rare, $\pi=0.01$, $B=21$ & 1 & 0.966 & 0.935 & 0.993 & 0.0129 & 0.832 & 1.051 & 1.372\\
Gaussian, $c=1$ & 0 & 0.975 & 0.927 & 0.996 & 0.0167 & 0.756 & 1.018 & 1.406\\
Gaussian, $c=1$ & 1 & 0.980 & 0.941 & 0.995 & 0.0131 & 0.770 & 1.002 & 1.304\\
\bottomrule\end{tabular}

%% file: tables/tab_est.tex
\begin{tabular}{lrr cc ccc c cc}\toprule
 & & & \multicolumn{5}{c}{estimated functionals ($n_s$ source draws, $\delta'=\delta/5$)} & exact & \multicolumn{2}{c}{exact functionals}\\
\cmidrule(lr){4-8}\cmidrule(lr){9-9}\cmidrule(lr){10-11}
shift & $n_s$ & $m$ & WQ & WQ-Bk & WQ clip ($B'$) & WQ-Bk clip ($B'$) & RS clip ($B'$) & RS & WQ-Bk clip ($B'$) & RS clip ($B'$)\\\midrule
tilt, $c=2$, $B=4$ & $10^{4}$ & 5000 & 1.094 & 1.080 & 1.094 (4) & 1.080 (4) & 1.062 (4) & \textbf{1.057} & 1.077 (4) & 1.059 (4)\\
tilt, $c=2$, $B=4$ & $10^{4}$ & 20000 & 1.041 & 1.037 & 1.041 (4) & 1.037 (4) & \textbf{1.028} (4) & 1.029 & 1.035 (4) & 1.026 (4)\\
tilt, $c=2$, $B=4$ & $10^{5}$ & 5000 & 1.087 & 1.075 & 1.087 (4) & 1.075 (4) & 1.055 (4) & \textbf{1.049} & 1.072 (4) & 1.054 (4)\\
tilt, $c=2$, $B=4$ & $10^{5}$ & 20000 & 1.040 & 1.036 & 1.040 (4) & 1.036 (4) & \textbf{1.028} (4) & 1.030 & 1.035 (4) & 1.025 (4)\\
tilt, $c=2$, $B=4$ & $10^{6}$ & 5000 & 1.084 & 1.075 & 1.084 (4) & 1.075 (4) & 1.055 (4) & \textbf{1.047} & 1.071 (4) & 1.048 (4)\\
tilt, $c=2$, $B=4$ & $10^{6}$ & 20000 & 1.040 & 1.035 & 1.040 (4) & 1.035 (4) & 1.027 (4) & \textbf{1.026} & 1.034 (4) & 1.025 (4)\\
\addlinespace
rare, $\pi=0.01$, $B=21$ & $10^{4}$ & 5000 & 1.357 & 1.291 & 1.357 (21) & 1.291 (21) & \textbf{1.190} (21) & 1.193 & 1.098 (10) & 1.094 (1)\\
rare, $\pi=0.01$, $B=21$ & $10^{4}$ & 20000 & 1.124 & 1.110 & 1.124 (21) & 1.110 (21) & 1.087 (21) & \textbf{1.074} & 1.046 (21) & 1.072 (21)\\
rare, $\pi=0.01$, $B=21$ & $10^{5}$ & 5000 & 1.203 & \textbf{1.178} & 1.205 (21) & 1.183 (21) & 1.200 (1) & 1.191 & 1.095 (10) & 1.091 (1)\\
rare, $\pi=0.01$, $B=21$ & $10^{5}$ & 20000 & 1.083 & \textbf{1.074} & 1.083 (21) & \textbf{1.074} (21) & 1.087 (21) & 1.075 & 1.048 (21) & 1.076 (21)\\
rare, $\pi=0.01$, $B=21$ & $10^{6}$ & 5000 & 1.145 & 1.127 & 1.125 (8) & 1.117 (8) & \textbf{1.114} (1) & 1.182 & 1.096 (10) & 1.092 (1)\\
rare, $\pi=0.01$, $B=21$ & $10^{6}$ & 20000 & 1.062 & \textbf{1.055} & 1.062 (21) & \textbf{1.055} (21) & 1.075 (21) & 1.076 & 1.047 (21) & 1.078 (21)\\
\addlinespace
Gaussian, $c=1$ & $10^{4}$ & 5000 & 1.660 & 1.437 & 1.660 (33) & 1.437 (33) & \textbf{1.241} (33) & 1.245 & 1.181 (25) & 1.150 (12)\\
Gaussian, $c=1$ & $10^{4}$ & 20000 & 1.165 & 1.145 & 1.165 (33) & 1.145 (33) & 1.104 (33) & \textbf{1.099} & 1.078 (33) & 1.073 (16)\\
Gaussian, $c=1$ & $10^{5}$ & 5000 & 1.344 & 1.286 & 1.339 (29) & 1.287 (29) & 1.250 (25) & \textbf{1.244} & 1.183 (25) & 1.143 (12)\\
Gaussian, $c=1$ & $10^{5}$ & 20000 & 1.118 & 1.106 & 1.118 (33) & 1.106 (33) & 1.103 (25) & \textbf{1.097} & 1.078 (33) & 1.075 (16)\\
Gaussian, $c=1$ & $10^{6}$ & 5000 & 1.242 & 1.208 & 1.240 (25) & 1.208 (25) & \textbf{1.170} (14) & 1.242 & 1.181 (25) & 1.154 (12)\\
Gaussian, $c=1$ & $10^{6}$ & 20000 & 1.094 & \textbf{1.083} & 1.094 (33) & \textbf{1.083} (33) & 1.091 (18) & 1.093 & 1.076 (33) & 1.073 (16)\\
\bottomrule\end{tabular}